\documentclass[11pt]{article} 

\pdfminorversion=4

\pdfoutput=1

\setlength{\oddsidemargin}{0pt}
\setlength{\evensidemargin}{0pt}
\setlength{\topmargin}{-20pt}
\setlength{\headsep}{10pt}
\setlength{\headheight}{14pt}

\addtolength{\textheight}{1.3in}
\setlength{\textwidth}{6.5in}
\usepackage{graphicx} 
\usepackage{subfigure}

\usepackage{rotating}

\RequirePackage[numbers]{natbib}

\bibpunct{(}{)}{;}{a}{}{,}

\usepackage{soul}

\usepackage{fullpage}

\usepackage{algorithm}
\usepackage{authblk}
\usepackage{amsmath,bbm,amssymb,amsthm,array}
\usepackage{url}
\usepackage[utf8]{inputenc} 
\usepackage[T1]{fontenc}    
\usepackage{hyperref}       
\usepackage{url}            
\usepackage{booktabs}       
\usepackage{amsfonts}       
\usepackage{nicefrac}       
\usepackage{microtype}      

\usepackage{mathtools}
\usepackage{subfigure}
\usepackage{algorithm}
\usepackage{algorithmic}
\usepackage{enumitem}
\usepackage{multirow}
\usepackage{array}
\usepackage{bm}
\usepackage{color}
\newtheorem{lemma}{Lemma}

\newcommand{\change}[1]{\textcolor{black} {#1}}
\newcommand\inner[2]{\langle #1, \, #2 \rangle}
\newcommand\biginner[2]{\big\langle #1, \, #2 \big\rangle}
\newcommand\Biginner[2]{\Big\langle #1, \, #2 \Big\rangle}
\newcommand\doubleInner[1]{\langle \! \langle #1 \rangle \! \rangle}
\newcommand\bigdoubleInner[2]{\big\langle \! \big\langle #1, \, #2 \big\rangle \! \big\rangle}
\newcommand\BigdoubleInner[2]{\Big\langle \!\! \Big\langle #1, \, #2 \Big\rangle \!\! \Big\rangle}
\newcommand\matNorm[1]{|\!|\!|#1|\!|\!|}
\newcommand\offNorm[1]{\|#1\|_{1,\textnormal{off}}}

\def\Lw{\widetilde{w}}
\def\Tparam{\param^*}

\def\Param{\Theta}
\def\LParam{\widetilde{\Param}}
\def\TParam{\Param^*}

\def\xi{X^{(i)}}
\def\F{{\textnormal{F}}}
\newcommand{\argmin}{\operatornamewithlimits{argmin}}
\DeclareMathOperator*{\minimize}{minimize}
\DeclareMathOperator*{\sta}{s.t.}
\def\P{\mathbb{P}}
\def\reals{\mathbb{R}}
\def\R{\mathcal{R}}
\def\Ttheta{\theta^*}
\def\Tw{w^*}

\def\Ltheta{\widetilde{\theta}}
\def\Lw{\widetilde{w}}

\def\errt{\Delta}
\def\errw{\Gamma}
\def\Lerrt{\widetilde{\Delta}}
\def\Lerrw{\widetilde{\Gamma}}

\def\RSCcon{{\kappa_{l}}}
\def\RSCtolOne{\tau_1(n,p)}

\def\IncConOne{\tau_2(n,p)}
\def\IncConTwo{\tau_3(n,p)}

\def\Sig{\Sigma}
\def\SigG{\Sigma_G}
\def\SigB{\Sigma_B}

\def\eventOne{\mathcal{A}}
\def\Loss{\mathcal{L}}
\def\Lossi{\bar{\Loss}}

\def\param{\theta}
\def\tparam{\param^*}

\def\abovestrut#1{\rule[0in]{0in}{#1}\ignorespaces}
\def\belowstrut#1{\rule[-#1]{0in}{#1}\ignorespaces}

\def\abovespace{\abovestrut{0.20in}}

\def\belowspace{\belowstrut{0.10in}}

\newlength{\widebarargwidth}
\newlength{\widebarargheight}
\newlength{\widebarargdepth}
\DeclareRobustCommand{\widebar}[1]{%
  \settowidth{\widebarargwidth}{\ensuremath{#1}}%
  \settoheight{\widebarargheight}{\ensuremath{#1}}%
  \settodepth{\widebarargdepth}{\ensuremath{#1}}%
  \addtolength{\widebarargwidth}{-0.3\widebarargheight}%
  \addtolength{\widebarargwidth}{-0.3\widebarargdepth}%
  \makebox[0pt][l]{\hspace{0.15\widebarargheight}%
    \hspace{0.3\widebarargdepth}%
    \addtolength{\widebarargheight}{0.3ex}%
    \rule[\widebarargheight]{0.95\widebarargwidth}{0.1ex}}%
  {#1}}


\newtheorem{proposition}{Proposition}
\newtheorem{corollary}{Corollary}

\usepackage{hyperref}
\usepackage{url}

\newtheorem{theorem}{Theorem}



\begin{document}

\title{A General Family of Trimmed Estimators for Robust High-dimensional Data Analysis
}

\author[1]{Eunho Yang}
\author[2]{Aur\'elie C. Lozano} 
\author[3]{Aleksandr Aravkin} 

\affil[1]{School of Computing, Korea Advanced Institute of Science and Technology, South Korea}
\affil[2]{Mathematical Sciences Department, IBM T.J. Watson Research Center, USA}
\affil[3]{Department of Applied Mathematics, University of Washington, USA}


\maketitle

\begin{abstract}
  We consider the problem of robustifying high-dimensional structured estimation. 
Robust techniques are key in real-world applications which often involve outliers and data corruption. 
We focus on trimmed versions of structurally regularized M-estimators in the high-dimensional setting, including the popular Least Trimmed Squares estimator, as well as analogous estimators for generalized linear models and graphical models, using possibly non-convex loss functions. We present a general analysis of their statistical convergence rates 
and consistency, and then take a closer look at the trimmed versions of the Lasso and Graphical Lasso estimators as special cases. On the optimization side, we show how to extend algorithms for M-estimators to fit trimmed variants and provide guarantees on their numerical convergence.
The generality and competitive performance of high-dimensional trimmed estimators are illustrated numerically on both simulated and real-world genomics data.
\end{abstract}


\section{Introduction}\label{sec:intro}
We consider the problem of high-dimensional estimation, where the number of variables $p$ may greatly exceed the number of observations $n.$ Such high-dimensional settings are becoming increasingly prominent in a variety of domains, including signal processing, computational biology and finance.  
The development and the statistical analysis of structurally constrained estimators for high-dimensional estimation has recently attracted considerable attention.  These estimators seek to minimize  the sum of a loss function and a weighted regularizer. 
The most popular example is that of Lasso~\citep{Tibshirani96}, 
which solves an $\ell_1$-regularized (or equivalently $\ell_1$-constrained) least squares problem. 
Under sub-Gaussian errors, Lasso has been shown to have strong statistical guarantees~\citep{GeerBuhl09,Wainwright2006new}. 
Regularized maximum likelihood estimators (MLEs) have been developed for sparsity-structured Generalized Linear Models (GLMs), with  theoretical guarantees such as  $\ell_1$ and $\ell_2$-consistency~\citep{NRWY12}, and model selection consistency~\citep{Bunea08}. For matrix-structured regression problems, estimators using nuclear-norm regularization have been studied e.g. by~\cite{RecFazPar10}. Another prime example is that of sparse inverse covariance estimation for graphical model selection~\citep{RWRY11}.

In practice, however, the desirable theoretical properties of such regularized M-estimators can be compromised, since outliers and corruptions are often present in high-dimensional data problems. 
These challenges motivate the development of robust structured learning methods that can cope 
with observations deviating from the model assumptions. The problem of reliable high-dimensional estimation under possibly gross error has  gained increasing attention. Relevant prior work includes the ``extended'' Lasso formulation~\citep{Nguyen2011b} which incorporates an additional sparse error vector to the original Lasso problem so as to account for corrupted observations, the LAD-Lasso~\citep{Wang2007} which uses the least absolute deviation combined with an $\ell_1$ penalty, and the Robust Matching Pursuit method of~\cite{Chen13} which performs feature selection based on a trimmed inner product of the features with the residuals, rather than a full inner product, so as to alleviate the impact of corrupted observations. In general, however, extending $M$-estimators beyond the least squares case is challenging. For example, \cite{YTR13, tibshirani2014robust} extend the strategy in \cite{Nguyen2011b} to generalized linear models in two ways: the first requires modeling errors in the input space, which maintains convexity of the objective but imposes stringent conditions for consistency; the other modeling errors in the output space, breaks convexity and yield milder conditions.  
 
 A key motivation for trimmed approaches is that convex loss functions with linear tail growth
(such as the $\ell_1$-norm and Huber loss) are not robust enough. As~\cite{SparseLS} points out, both of these approaches have a breakdown point of $\epsilon = 0$, since even a single gross contamination can arbitrarily distort the regression coefficients. 
Remarkably, the median of least squares residual originally proposed by~\cite{LTS84} avoids this problem, reaching  
breakdown point of nearly 50\%; the approach is equivalent to `trimming' a portion of the largest residuals. This lead to the consideration of sparse Least Trimmed Squares (sparse LTS) for robust high-dimensional estimation. While \cite{SparseLS} established high breakdown point property for sparse LTS, its statistical convergence has not been previously analyzed.  

In this paper, we present a unified framework and statistical analysis for trimmed regularized M-estimators, generalizing the sparse least trimmed squares (Sparse LTS) estimator~\citep{SparseLS} to allow for a wide class of (possibly non-convex) loss functions as well as structured regularization.  %
Using our analysis, we derive error bounds for the sparse LTS estimator. These require less stringent conditions for estimation consistency than those of Extended Lasso. We also derive error bounds for sparse Gaussian graphical models (GGMs) as a specific example. In contrast, existing approaches for robust sparse GGMs estimation lack statistical guarantees.
 
In terms of optimization-side, we use partial minimization to extend existing optimization algorithms for M-estimators to trimmed formulations. 
An important example of the approach is a modified proximal gradient method. 
For convex M-estimators, we show that under moderate assumptions, the `trimming' is completed in finitely many steps, 
and thereafter the method reduces to a descent method for a convex problem over a fixed set of identified `inliers'.  
We use simulated data to compare with competing methods, and then apply our approach to real genomics datasets.

The manuscript is organized as follows. In Section~\ref{sec:setup} we introduce the general setup and present the family of  High-Dimensional Trimmed  estimators. The main theoretical results on their convergence and consistency are stated in Section~\ref{sec:gen}, along with corollaries for linear models and Gaussian graphical models respectively.  The partial minimization approach for optimization is described in Section~\ref{sec:opt}. Empirical results  are presented in Section~\ref{sec:exps} for simulated data and Section~\ref{sec:real} for the analysis of genomics datasets . All proofs are collected in the Appendix.

\section{A General Framework for High-Dimensional Trimmed Estimators}\label{sec:setup}
\paragraph{Motivating Example 1: Linear Regression.}
To motivate high-dimensional trimmed estimators, we start with high-dimensional linear regression. 
The real-valued observation $y_i \in \reals$ comes from the linear model
\begin{align}\label{EqnLinearModels}
	y_i= \inner{x_i}{\Ttheta} + o_i, \quad i=1,\hdots,n,
\end{align} 
where $x_i \in \reals^p$ is a covariate, the true regression parameter vector is $ \Ttheta = (\Ttheta_1, \ldots,  \Ttheta_p)^\top \in \reals^p$, and $ o_i$ is the observation noise. Since outliers are commonly present in high-dimensional data problems, we assume $p$ is substantially larger than $n$ without loss of generality.

Let $G$ be the set of ``good'' samples, and $B$ denote the set of ``bad'' samples arbitrary corrupted. We are particularly concerned with the scenario where all samples in $B$ are potentially badly (arbitrarily) corrupted.

In order to cope with observations that deviate from the true model, \citet{SparseLS} proposed sparse LTS, an $\ell_1$-penalized version of the classical least trimmed squares (LTS) estimator~\citep{LTS84} solving
\begin{align}\label{eq:SLTS}
	\minimize_\theta  \frac{1}{2h}\sum_{i=1}^h [r^2(\theta)]_{(i)} + \lambda \|\theta\|_1,
\end{align} 
where $r^2(\theta) = (r_1^2,\ldots,r_n^2)^T$ with $r_i^2 = (y_i - \inner{x_i}{\theta})^2$, and $[r^2(\theta)]_{(1)} \leq \ldots \leq [r^2(\theta)]_{(n)}$ are the order statistics of the squared residuals $r^2(\theta)$.
\citet{SparseLS} established the breakdown point of the resulting sparse LTS estimator, 
and proposed an iterative algorithm for its computation. 
At iteration $t$, the algorithm computes the Lasso solution based on the current subset $H_t$ of observations with $|H_t| = h,$ 
and constructs the next subset $H_{t+1}$ from the observations corresponding to the $h$ smallest squared residuals. 

Our starting point is the following reformulation of regularized LTS problem: 
	\begin{align}\label{EqnRobustLS}
		\minimize_{w \in \Delta_h, \theta \in \rho\mathbb{B}_1} \ &  \frac{1}{2h}\sum_{i=1}^{{n}} w_i \big(y_i - \inner{x_i}{\theta}\big)^2 + \lambda \|\theta\|_1 
	\end{align}
where $\Delta_h := \{w: w \in [0,1]^n,\ 1^Tw = h\}$ is the $h$-scaled capped unit simplex,  $\mathbb{B}_1$ is the $\ell_1$-norm ball, 
and the constraint $\theta\in \rho\mathbb{B}_1$ (or equivalently $\|\theta\|_1 \leq \rho$) ensures that the optimum of non-convex  problem~\eqref{EqnRobustLS} exists as discussed in \citet{LW15}. This constraint on $\theta$ is a theoretical safeguard, since problem \eqref{EqnRobustLS} is equivalent to the problem~\eqref{eq:SLTS} when $\rho$ is large enough.

\paragraph{A family of trimmed estimators.}
Based on the reformulation \eqref{EqnRobustLS}, we propose the family of trimmed estimators for general high-dimensional problems: given a collection of arbitrary corrupted samples $Z_1^n = \{Z_1, \hdots, Z_n\}$, and a differentiable (possibly non-convex) loss function $\Lossi$, we solve
	\begin{align}\label{EqnRobustGeneral}
		\minimize_{w \in \Delta_h, \theta\in \rho\mathbb{B}_{\R}}  f(w,\theta) := \ &  \frac{1}{h}\sum_{i=1}^{{n}} w_i \Lossi(\theta; Z_i)  + \lambda \R(\theta) 
	\end{align}
where $\R(\cdot)$ is a decomposable norm used as a regularizer \citep{NRWY12} to encourage particular low-dimensional structure of the estimator, and $\mathbb{B}_{\R}$ is the unit ball for $\R(\cdot)$ (in other words, the constraint $\theta\in \rho\mathbb{B}_{\R}$ ensures $\R(\theta) \leq \rho$). $h$ decides the number of samples (or sum of weights) used in the training. $h$ is ideally set as the number of uncorrupted samples in $G$, but practically we can tune the parameter $h$ by cross-validation.

\paragraph{Motivating Example 2: Graphical Models.}
Gaussian graphical models (GGMs) form a powerful class of statistical models for representing distributions over a set of variables~\citep{Lauritzen96}. These models employ undirected graphs to encode conditional independence assumptions among the variables, which is particularly convenient for exploring network structures. 
GGMs are widely used in variety of domains, including computational biology~\citep{OhD14}, natural language processing~\citep{Manning:99}, image processing~\citep{Woods78,Hassner78,Cross83}, statistical physics~\citep{Ising25}, and spatial statistics~\citep{Ripley81}. 

In such high-dimensional settings, sparsity constraints  are particularly pertinent for estimating GGMs, as they encourage only a few parameters to be non-zero and induce graphs with few edges.  
The most widely used estimator, the Graphical Lasso minimizes the negative Gaussian log-likelihood regularized by the $\ell_1$ norm of the entries (or the off-diagonal entries) of the precision matrix (see~\cite{YuaLin07,FriedHasTib2007,BanGhaAsp08}).
This estimator enjoys strong statistical guarantees (see e.g. \cite{RWRY11}). The corresponding optimization problem is a log-determinant program that can be solved with interior point methods~\citep{Boyd02} or by co-ordinate descent algorithms~\citep{FriedHasTib2007,BanGhaAsp08}.  
Alternatively neighborhood selection~\citep{Meinshausen06,YRAL12} can be employed to estimate conditional independence relationships separately for each node in the graph, via Lasso linear regression~\citep{Tibshirani96}. Under certain assumptions, the sparse GGM structure can still be recovered even under high-dimensional settings.

The aforementioned approaches rest on a fundamental assumption: the multivariate normality of the observations. However, outliers and corruption are frequently encountered in high-dimensional data (see e.g.~\cite{daye2012} for gene expression data). Contamination of a few observations can drastically affect the quality of model estimation. It is therefore imperative to devise procedures that can cope with observations deviating from the model assumption. Despite this fact, little attention has been paid to robust estimation of high-dimensional graphical models. Partially Relevant work includes~\cite{drton2011}, which leverages multivariate $t$-distributions for robustified inference and the EM algorithm. They also propose an alternative $t$-model which adds flexibility to the classical $t$ but requires the use of Monte Carlo EM or variational approximation as the likelihood function is not available explicitly. Another pertinent work is that of~\cite{sun2012} which introduces a robustified likelihood function. A two-stage procedure is proposed for model estimation, where the graphical structure is first obtained via coordinate gradient descent and the concentration matrix coefficients are subsequently re-estimated using iterative proportional fitting so as to guarantee positive definiteness of the final estimate.

A special case of the proposed family is that of the \emph{Trimmed Graphical Lasso} for robust estimation of sparse GGMs: 
	\begin{align}\label{EqnRobustGGM}
		\minimize_{\Param \in \Omega \cap R\mathbb{B}_1, w\in\Delta_h} \, \, & \BigdoubleInner{\Param}{\frac{1}{h}\sum_{i=1}^n w_i \xi(\xi)^\top} -\log \det (\Param)   + \lambda \offNorm{\Param} \, .
	\end{align}
Here for matrices $U \in \reals^{p \times p}$ and $V \in \reals^{p \times p}$, $\doubleInner{U,V}$ denotes the trace inner product $\textnormal{tr}(A\,B^T)$. For a matrix $U \in \reals^{p \times p}$ and parameter $a \in [1, \infty]$, $\|U\|_a$ denotes the element-wise $\ell_a$ norm, and $\|U\|_{a,\textnormal{off}}$ does the element-wise $\ell_a$ norm only for off-diagonal entries. For example, $\offNorm{U} := \sum_{i\neq j} |U_{ij}|$.

We provide statistical guarantees on the consistency of this estimator. To the best of our knowledge, this is in stark contrast with prior work on robust sparse GGM estimation (e.g. ~\cite{drton2011,sun2012}) which are not statistically guaranteed in theory.

\section{Statistical Guarantees of Trimmed Estimators} \label{sec:gen}

In this section, we provide a statistical analysis of the family of structurally regularized estimators \eqref{EqnRobustGeneral}. 
In order to simplify the notation in our theorem and its corollaries, we assume without loss of generality that the number of good samples is known  a priori and the tuning parameter $h$ in \eqref{EqnRobustGeneral} is exactly set as the genuine samples size, $|G|$. This is an unrealistic assumption, however, as long as we set $h$ smaller than $|G|$, the statements in the main theorem and its corollaries can be applied as they are. 

Noting that the optimization problem \eqref{EqnRobustGeneral} is non-convex, estimators returned by iterative methods for \eqref{EqnRobustGeneral} will be stationary points. We call $(\Ltheta,\Lw)$ a {\it local minimum} of~\eqref{EqnRobustGeneral} when
\begin{enumerate}
\item $\Ltheta$  is a local minimum  of $g_1(\theta) := f(\theta, \Lw)$ and 
\item $\Lw$ is a global minimum of $g_2(w) := f(\Ltheta, w)$. 
\end{enumerate}
These are precisely the points that are found by the algorithms developed in Section~\ref{sec:opt}. In this section, we give statistical error bounds for {\bf any} such points.  

\change{Consider any such local minimum $(\Ltheta,\Lw)$.} While we are mainly interested in the error bounds of our estimator for target parameter $\Ttheta$ (that is, $\Ltheta-\Ttheta$), we first define $\Tw$ as follows: for the index $i \in G$, $\Tw_i$ is simply set to $\Lw_i$ so that $\Tw_i - \Lw_i = 0$. Otherwise for the index $i \in B$, we set $\Tw_i = 0$. \change{Note that while $\Ttheta$ is fixed unconditionally, $\Tw$ is dependent on $\Lw$. However, $\Tw$ is fixed given $\Lw$.}

In order to guarantee bounded errors, we first assume that \change{given $(\Ltheta,\Lw)$}, the following restricted strong convexity condition \change{for $(\Ltheta,\Lw)$}  holds:
\begin{enumerate}[leftmargin=0.2cm, itemindent=1.2cm,label=\textbf{(C-$\bf{1}$)}, ref=\textnormal{(C-$1$)},start=1]
	\item {\bf (Restricted strong convexity (RSC) on $\theta$)} We overload notation and use $\Loss(\theta,w)$ to denote $\frac{1}{h}\sum_{i=1}^n w_i \Lossi(\theta; Z_i)$. Then, for any possible $\errt := \theta - \Ttheta$, the differentiable loss function $\Lossi$ satisfies 
		\begin{align*}
			\biginner{\nabla_{\theta} \Loss\big(\Ttheta + \errt,\Tw\big) - \nabla_{\theta}\Loss\big(\Ttheta,\Tw\big) }{\errt} \geq  \RSCcon \|\errt\|_2^2 - \RSCtolOne \R(\errt)^2, 
		\end{align*}
		where $\RSCcon$ is a curvature parameter, and $\RSCtolOne$ is a tolerance function on $n$ and $p$.\label{Con:rsc}
\end{enumerate}
Note that this condition is slightly different from the standard restricted strong convexity condition because of the \change{dependency on $\Tw$ and therefore on $\Lw$. {Each local optimum} has its own restricted strong convexity condition.} In case of no corruption with $\Tw_i=1$ for all $i$, this condition will be trivially reduced to the standard RSC condition, under which the standard general $M$-estimator has been analyzed (see \citet{NRWY12} for details). 

We additionally require the following condition for a successful estimation with \eqref{EqnRobustGeneral} on corrupted samples:
\begin{enumerate}[leftmargin=0.2cm, itemindent=1.2cm,label=\textbf{(C-$\bf{2}$)}, ref=\textnormal{(C-$2$)}, start=2]
	\item Consider {arbitrary local optimum $(\Ltheta,\Lw)$.} Letting $\Lerrt := \Ltheta-\Ttheta$ and $\Lerrw := \Lw - \Tw \in [0,1]^n$,
	\begin{align*}
		\biginner{ \nabla_{\theta} \Loss\big(\Ttheta + \Lerrt,\Tw + \Lerrw \big) - \nabla_{\theta} \Loss\big(\Ttheta + \Lerrt,\Tw\big) }{\Lerrt} 
		\geq   - \IncConOne \|\Lerrt\|_2 - \IncConTwo \R(\Lerrt) \, .
	\end{align*}\label{Con:inc}
\end{enumerate}

\change{\ref{Con:inc} can be understood as a structural incoherence condition between $\theta$ and $w$. This type of condition is also needed for the guarantees of extended LASSO \citep{Nguyen2011b} and other dirty statistical models with more than a single parameter \citep{YR13}.
Note again that due to the dependency on $\Lw$, {each local optimum} will have its \emph{own} conditions  \ref{Con:rsc} and \ref{Con:inc}. } We will see later in this section that these two conditions are mild enough for the popular estimators (such as linear models and GGMs) to satisfy.
 
Armed with these conditions, we state the main theorem on the error bounds of~\eqref{EqnRobustGeneral}:
\begin{theorem}\label{ThmGeneral}
	Consider an $M$-estimator from \eqref{EqnRobustGeneral} with \change{{\bf any} local minimum $(\Ltheta,\Lw)$, and suppose that it satisfies the conditions \ref{Con:rsc} and \ref{Con:inc}.} Suppose also that the regularization parameter $\lambda$ in \eqref{EqnRobustGeneral} is set as 
	\begin{align}\label{eq:mylam}
		\lambda \geq 4 \max \Big\{ & \R^*\Big( \nabla_{\theta}\Loss\big(\Ttheta,\Tw\big) \Big)\, , \, 2 \rho \RSCtolOne + \IncConTwo  \Big\} 
	\end{align}
	where $\R^*(v)$ is the dual norm of $\R(\cdot)$: $\sup_{u\in \reals^{p} \setminus \{0\} } \frac{\inner{u}{v}}{\R(u)}$. Then the following error bounds for $\Ltheta$ are guaranteed for a given model space $\mathcal{M}$:
	\begin{align*}
		&\|\Ltheta - \Ttheta\|_2 \leq \,  \frac{1}{\RSCcon}\Big( \frac{3\lambda \Psi }{2}  + \IncConOne \Big) \,  \quad \text{and} \quad \R\big(\Ltheta - \Ttheta\big) \leq \,  \frac{2}{\lambda\, \RSCcon}\Big( 2\lambda \Psi + \IncConOne\Big)^2 ,
	\end{align*}
	where 
	\[
	\Psi := \sup_{u \in \mathcal{M}\setminus \{0\}} R(u)/\|u\|_{2}
	\]	
	measures the compatibility between $\R(\cdot)$ and $\ell_2$ norms. 
\end{theorem}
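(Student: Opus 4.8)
\textit{Proof proposal.} The plan is to follow the standard analysis of regularized $M$-estimators with decomposable regularizers \citep{NRWY12}, in the non-convex form of \citet{LW15}, and to absorb the auxiliary weight vector through the two new hypotheses \ref{Con:rsc} and \ref{Con:inc}. Throughout I write $\Lerrt = \Ltheta-\Ttheta$, take $\Ttheta\in\mathcal{M}\cap\rho\mathbb{B}_{\R}$, and decompose $\Lerrt = \Lerrt_{\mathcal{M}}+\Lerrt_{\mathcal{M}^\perp}$ along the subspace pair attached to $\R$. \emph{Step 1 (optimality).} Since $\Ltheta$ is a local minimum of $g_1(\theta)=\Loss(\theta,\Lw)+\lambda\R(\theta)$ over the convex set $\rho\mathbb{B}_{\R}$, with $\Loss(\cdot,\Lw)$ differentiable and $\R$ convex, the first-order variational inequality in the feasible direction $\Ttheta-\Ltheta$, combined with convexity of $\R$, gives
\[
\inner{\nabla_{\theta}\Loss(\Ltheta,\Lw)}{\Lerrt}\ \le\ \lambda\big(\R(\Ttheta)-\R(\Ltheta)\big).
\]

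\emph{Step 2 (splitting the gradient).} I would write $\nabla_{\theta}\Loss(\Ltheta,\Lw) = \nabla_{\theta}\Loss(\Ttheta,\Tw) + \big[\nabla_{\theta}\Loss(\Ltheta,\Tw)-\nabla_{\theta}\Loss(\Ttheta,\Tw)\big] + \big[\nabla_{\theta}\Loss(\Ltheta,\Lw)-\nabla_{\theta}\Loss(\Ltheta,\Tw)\big]$ and pair each piece with $\Lerrt$: the first is bounded below by $-\R^*\big(\nabla_{\theta}\Loss(\Ttheta,\Tw)\big)\,\R(\Lerrt)\ge-\tfrac{\lambda}{4}\R(\Lerrt)$ by the dual-norm inequality and the first branch of \eqref{eq:mylam}; the second is bounded below by $\RSCcon\|\Lerrt\|_2^2-\RSCtolOne\R(\Lerrt)^2$ by \ref{Con:rsc} with $\errt=\Lerrt$; and the third, using $\Ttheta+\Lerrt=\Ltheta$ and $\Tw+\Lerrw=\Lw$, is bounded below by $-\IncConOne\|\Lerrt\|_2-\IncConTwo\R(\Lerrt)$ by \ref{Con:inc}. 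Combining with Step 1 yields
\[
\RSCcon\|\Lerrt\|_2^2\ \le\ \lambda\big(\R(\Ttheta)-\R(\Ltheta)\big)+\tfrac{\lambda}{4}\R(\Lerrt)+\RSCtolOne\R(\Lerrt)^2+\IncConOne\|\Lerrt\|_2+\IncConTwo\R(\Lerrt).
\]

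\emph{Step 3 (linearizing the tolerance term and the cone argument).} The constraint set is exactly what tames the stray quadratic term: $\Ltheta,\Ttheta\in\rho\mathbb{B}_{\R}$ forces $\R(\Lerrt)\le 2\rho$, so $\RSCtolOne\R(\Lerrt)^2+\IncConTwo\R(\Lerrt)\le(2\rho\RSCtolOne+\IncConTwo)\R(\Lerrt)\le\tfrac{\lambda}{4}\R(\Lerrt)$ by the second branch of \eqref{eq:mylam}. Using decomposability of $\R$ exactly as in \citet{NRWY12} gives $\R(\Ttheta)-\R(\Ltheta)\le\R(\Lerrt_{\mathcal{M}})-\R(\Lerrt_{\mathcal{M}^\perp})$, and together with $\R(\Lerrt)\le\R(\Lerrt_{\mathcal{M}})+\R(\Lerrt_{\mathcal{M}^\perp})$ the previous display collapses to
\[
\RSCcon\|\Lerrt\|_2^2+\tfrac{\lambda}{2}\R(\Lerrt_{\mathcal{M}^\perp})\ \le\ \tfrac{3\lambda}{2}\R(\Lerrt_{\mathcal{M}})+\IncConOne\|\Lerrt\|_2.
\]
Dropping the nonnegative term on the left, using $\R(\Lerrt_{\mathcal{M}})\le\Psi\|\Lerrt\|_2$, and dividing by $\|\Lerrt\|_2$ gives $\|\Lerrt\|_2\le\RSCcon^{-1}\big(\tfrac{3}{2}\lambda\Psi+\IncConOne\big)$, the first bound. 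For the second, read off the cone inequality $\R(\Lerrt_{\mathcal{M}^\perp})\le 3\R(\Lerrt_{\mathcal{M}})+\tfrac{2\IncConOne}{\lambda}\|\Lerrt\|_2$ from the last display, hence $\R(\Lerrt)\le\big(4\Psi+\tfrac{2\IncConOne}{\lambda}\big)\|\Lerrt\|_2=\tfrac{2}{\lambda}\big(2\lambda\Psi+\IncConOne\big)\|\Lerrt\|_2$, and substitute the $\ell_2$ bound together with $\tfrac{3}{2}\lambda\Psi+\IncConOne\le 2\lambda\Psi+\IncConOne$ to obtain $\R(\Lerrt)\le\tfrac{2}{\lambda\RSCcon}\big(2\lambda\Psi+\IncConOne\big)^2$.

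\emph{Expected difficulty.} The arithmetic is routine; the two places needing care are (i) making sure the first-order condition is the correct one for a \emph{local} (not global) minimum of a nonconvex-plus-convex objective over a convex set, which is where the constrained formulation \eqref{EqnRobustGeneral} and the reasoning of \citet{LW15} enter, and (ii) the bookkeeping that $\Tw$ is a legitimate, $\Lw$-measurable reference weight, so that \ref{Con:rsc} and \ref{Con:inc} can be invoked with the \emph{same} increment $\Lerrt$ and with $\Tw+\Lerrw=\Lw$. Once those are in place, linearizing $\RSCtolOne\R(\Lerrt)^2$ via the $\rho\mathbb{B}_{\R}$ constraint is precisely what forces the form of $\lambda$ in \eqref{eq:mylam}, and the rest is the usual decomposable-regularizer cone argument.
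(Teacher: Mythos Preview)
Your proposal is correct and follows essentially the same route as the paper's proof: the same first-order stationarity inequality (as in \citet{LW15}), the same three-term splitting of the gradient handled by the dual-norm bound, \ref{Con:rsc}, and \ref{Con:inc} respectively, the same linearization $\RSCtolOne\R(\Lerrt)^2\le 2\rho\RSCtolOne\R(\Lerrt)$ via the $\rho\mathbb{B}_{\R}$ constraint, and the same cone/decomposability argument to extract both bounds. The only cosmetic difference is that the paper writes out the argument for $\R=\|\cdot\|_1$ and $\Psi=\sqrt{k}$ (noting explicitly that the general case follows verbatim), whereas you keep the decomposable-regularizer notation throughout.
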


For sparse vectors, $\Psi := \sup_{u \in \mathcal{M}\setminus \{0\}}\|u\|_1/\|u\|_2 = \sqrt{k}$ where $k$ is the sparsity of true parameter $\Ttheta$, and $\mathcal{M}$ is the space of vectors with the correct support set~\citep{NRWY12}.

The statement in Theorem \ref{ThmGeneral} is applicable to any local minimum of~\eqref{EqnRobustGeneral}, and it holds deterministically.  Probabilistic statements come in when the condition on $\lambda_n$ specified in Theorem \ref{ThmGeneral} is satisfied.  In \eqref{eq:mylam}, $\lambda$ is chosen based on $\R^*\big( \nabla_{\theta}\Loss(\Ttheta,\Tw) \big)$ similarly to \citet{NRWY12}. We shall see that the remaining terms with tolerance functions $\tau$ in \eqref{eq:mylam} have the same order as $\R^*\big( \nabla_{\theta}\Loss(\Ttheta,\Tw) \big)$ for the specific cases of linear models and GGMs developed in the next sections.

\subsection{Statistical Guarantees of High-Dimensional Least Trimmed Squares} \label{Sec:LStheory}
 
We now focus on the special case of high-dimensional linear regression, and apply Theorem \ref{ThmGeneral} to problem \eqref{EqnRobustLS}.  In particular, 
if $i \in G$,
	$y_i = \inner{x_i}{\Ttheta} + \epsilon_i$ 
where the observation noise $\epsilon_i$ follows zero mean and has sub-Gaussian tails. Otherwise, for $i \in B$,
	$y_i = \inner{x_i}{\Ttheta} + \delta_i$
where $\delta_i$ is the amount of arbitrary corruption. 

%
%
%
In order to derive an actual bound from the general framework of Theorem \ref{ThmGeneral}, we consider the following natural setting, which has been widely studied in past work on conventional high dimensional linear models: 
\begin{enumerate}[leftmargin=0.5cm, itemindent=2cm,label=\textbf{(LTS1)}, ref=\textnormal{(LTS1)}]
	\item {\bf ($\Sigma$-Gaussian ensemble)} Each sample $x_i$ is i.i.d. sampled from $N(0,\Sig)$. \label{ConLS:XGauss}
\end{enumerate}
\vspace{-.3cm}
\begin{enumerate}[leftmargin=0.5cm, itemindent=2cm,label=\textbf{(LTS2)}, ref=\textnormal{(LTS2)}]
	\item {\bf (Sub-Gaussian noise)} The noise vector $\epsilon \in \reals^n$ is zero-mean and has sub-Gaussian tails, which means that for any fixed vector $v$ such that $\|v\|_2 = 1$, 
	$\P\left[|\inner{v}{\epsilon}| \geq t\right] \leq 2 \exp \left(-\frac{t^2}{2\sigma^2}\right)$ for all $t >0$.
	The sub-Gaussian is quite a wide class of distributions, and contains the Gaussian family as well as and all bounded random variables.  
\label{ConLS:NsubGauss}
\end{enumerate}
\vspace{-.3cm}
\begin{enumerate}[leftmargin=0.5cm, itemindent=2cm,label=\textbf{(LTS3)}, ref=\textnormal{(LTS3)}]
	\item {\bf (Column normalization)} Let $X \in \reals^{n \times p}$ be the design matrix whose $i$-th row is the covariate $i$-th sample: $x_i^\top$, and $X^j\in \reals^n$ be the $j$-th column vector of $X$. Then, $\frac{\|X^j\|_2}{\sqrt{h}} \leq 1$. As pointed out in \citet{NRWY12}, we can always rescale linear models with out loss of generality to satisfy this condition. \label{ConLS:colNorm}
\end{enumerate}
The following assumptions are required for our estimator to be resilient to outliers and strongly consistent:
\begin{enumerate}[leftmargin=0.5cm, itemindent=1.8cm,label=\textbf{(C-$\bf{h}$)}, ref=\textnormal{(C-$h$)}]
	\item Let $h$ be the number of good samples: $|G| = h$ and hence $|B| = n-h$. Then, we assume that larger portion of samples are genuine and uncorrupted so that $\frac{|G|-|B|}{|G|} \geq \alpha$ where $0 < \alpha \leq 1$. If we assume that 40\% of samples are corrupted, then $\alpha = 1/3$. \label{Con:h}  
\end{enumerate}
\begin{enumerate}[leftmargin=0.5cm, itemindent=2cm,label=\textbf{(LTS4)}, ref=\textnormal{(LTS4)}]
	\item \change{We set the tuning parameter $\rho$ in \eqref{EqnRobustLS} as $\rho \leq \frac{C_1}{2} \sqrt{\frac{h}{\log p}}$ for some constant $C_1$. This setting requires that the number of good samples $h$ is larger than or equal to $\big(\frac{2k\|\Ttheta\|_\infty}{C_1}\big)^2 \log p$ so that the true regression parameter $\Ttheta$ is feasible for the objective.}\label{ConLS:R}
\end{enumerate}
%
Under these conditions, we can recover the following error bounds of high-dimensional LTS \eqref{EqnRobustLS}, as a corollary of Theorem \ref{ThmGeneral}:
\begin{corollary}\label{Cor:LS}
	Consider corrupted linear models \eqref{EqnLinearModels} when $\|\Ttheta\|_0 \leq k$. Suppose that  conditions \ref{Con:h}, \ref{ConLS:XGauss}, \ref{ConLS:NsubGauss}, \ref{ConLS:colNorm}, and \ref{ConLS:R} hold. \change{Also suppose that we  find a local minimum $(\Ltheta,\Lw)$ of \eqref{EqnRobustLS},  choosing }
	\begin{align*}
		\lambda = c \sqrt{\frac{\log p}{h}}
	\end{align*}
	where $c$ is some constant dependent on $\Sig$, $\sigma$ and the upper bound of $\frac{(\max_{i}\delta_i^2) |B|}{h}$.\footnote{Here without loss of generality, we can assume that $\frac{(\max_{i}\delta_i^2) |B|}{h}$ is bounded by some constant since we can always rescale the linear models properly without changing the signal $\theta$.} \change{Then, $(\Ltheta,\Lw)$ is guaranteed to satisfy \ref{Con:rsc} and \ref{Con:inc} for the specific case of \eqref{EqnRobustLS}, and have the following error bounds:} for some constant $c'$ depending on $c$, $\Sigma$ and the portion of genuine samples $\alpha$ in \ref{Con:h}, and some constant $c''$ smaller than $1$, 
\begin{align*}
	\|\Ltheta - \Ttheta\|_2 \leq \, & c' \bigg( \sqrt{\frac{k \log p}{h}} + c''\sqrt{\frac{|B|\log p}{h}} \bigg) \, , \nonumber\\
	\|\Ltheta - \Ttheta\|_1 \leq \, & 4c'\bigg( \sqrt{\frac{k \log p}{h}} + c'' \sqrt{\frac{|B|\log p}{h}}\bigg)^2 
\end{align*}
with probability at least $1-c_1 \exp(-c_1' h \lambda^2)$ for some universal positive constants $c_1$ and $c'_1$.
\end{corollary}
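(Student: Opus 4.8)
The plan is to deduce Corollary~\ref{Cor:LS} from Theorem~\ref{ThmGeneral} specialized to $\R(\cdot)=\|\cdot\|_1$, so that $\R^*(\cdot)=\|\cdot\|_\infty$ and $\Psi=\sqrt k$ on the model space $\mathcal M$ of vectors supported on $\mathrm{supp}(\Ttheta)$. It suffices to show that, on an event of probability at least $1-c_1\exp(-c_1'h\lambda^2)$, \emph{every} local minimum $(\Ltheta,\Lw)$ of~\eqref{EqnRobustLS} simultaneously (a) satisfies~\ref{Con:rsc} with $\RSCcon\asymp\alpha\lambda_{\min}(\Sig)$ and $\RSCtolOne\asymp\log p/h$, (b) satisfies~\ref{Con:inc} with $\IncConOne\asymp\sqrt{|B|\log p/h}$ and $\IncConTwo$ of order $\sqrt{\log p/h}$, and (c) renders $\lambda=c\sqrt{\log p/h}$ admissible in~\eqref{eq:mylam}; then the two displays of Theorem~\ref{ThmGeneral} collapse, after simplification, to the advertised $\ell_2$ and $\ell_1$ bounds, with $c''<1$ secured by taking $c$ large relative to the constant implicit in $\IncConOne$. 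The recurring subtlety is that all three statements must hold \emph{uniformly} over the data-dependent weight $\Lw\in\Delta_h$, equivalently over $\Tw$ (which equals $\Lw$ on $G$ and vanishes on $B$), since Theorem~\ref{ThmGeneral} must be applied to an arbitrary stationary point.

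For~\ref{Con:rsc}: with the least-squares loss, $\nabla_\theta\Loss(\theta,w)=-\tfrac1h\sum_i w_i x_i(y_i-\inner{x_i}{\theta})$, so the left-hand side of~\ref{Con:rsc} equals $\tfrac1h\sum_{i\in G}\Lw_i\inner{x_i}{\errt}^2$ because $\Tw$ vanishes on $B$. Since $\Lw_i\in[0,1]$ and $\sum_{i\in G}\Lw_i\ge h-|B|\ge\alpha h$ by~\ref{Con:h}, this is at least the sum of the $\lceil\alpha h\rceil$ smallest of $\{\inner{x_i}{\errt}^2\}_{i\in G}$ --- a quadratic form with its largest terms trimmed off --- which I would lower-bound uniformly in $\errt$ by adapting the restricted-strong-convexity machinery for $\Sig$-Gaussian ensembles (\cite{NRWY12}) so that it survives discarding a constant fraction of the largest squared projections (a small-ball/peeling argument over $\errt$ suffices). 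For~\ref{Con:inc}: writing $\errt=\Lerrt$ and $\Lerrw=\Lw-\Tw$ (supported on $B$, equal to $\Lw$ there), the left-hand side of~\ref{Con:inc} telescopes to $\tfrac1h\sum_{i\in B}\Lw_i\inner{x_i}{\Lerrt}^2-\tfrac1h\sum_{i\in B}\Lw_i\delta_i\inner{x_i}{\Lerrt}$; the first term is $\ge0$, so it remains to bound $\tfrac1h\bigl|\biginner{\sum_{i\in B}\Lw_i\delta_i x_i}{\Lerrt}\bigr|$. Cauchy--Schwarz gives $\le\tfrac1h\sqrt{|B|\max_i\delta_i^2}\cdot\sqrt{\sup_{|S|=|B|}\sum_{i\in S}\inner{x_i}{\Lerrt}^2}$, and a uniform (over $\Lerrt$ in the cone and over subsets of size $|B|$) bound $\sup_{|S|=|B|}\sum_{i\in S}\inner{x_i}{\Lerrt}^2\lesssim(|B|+k)\log p\,\|\Lerrt\|_2^2$, together with the footnote's boundedness of $(\max_i\delta_i^2)|B|/h$, yields $\IncConOne$ and $\IncConTwo$ of the claimed orders.

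For~\eqref{eq:mylam}: $\R^*\bigl(\nabla_\theta\Loss(\Ttheta,\Tw)\bigr)=\tfrac1h\|\sum_{i\in G}\Lw_i x_i\epsilon_i\|_\infty\le\tfrac1h\|\sum_{i\in G}x_i\epsilon_i\|_\infty+\tfrac1h\|\sum_{i\in G}(1-\Lw_i)x_i\epsilon_i\|_\infty$, where the first summand is $\lesssim\sqrt{\log p/h}$ by sub-exponential tail bounds for $\inner{X^j}{\epsilon}$ with a union over the $p$ columns, and the second is controlled like the $B$-term above since $1-\Lw_i\in[0,1]$ with $\sum_{i\in G}(1-\Lw_i)\le|B|$. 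Also $2\rho\RSCtolOne\lesssim\sqrt{h/\log p}\cdot(\log p/h)=\sqrt{\log p/h}$ by~\ref{ConLS:R}, so a large-enough $c$ (depending on $\Sig,\sigma$, and the bound on $(\max_i\delta_i^2)|B|/h$) makes $\lambda=c\sqrt{\log p/h}$ satisfy~\eqref{eq:mylam}; feasibility of $\Ttheta$ in~\eqref{EqnRobustLS} follows from $\|\Ttheta\|_1\le k\|\Ttheta\|_\infty\le\rho$ under~\ref{ConLS:R}. Intersecting all these events, the binding probability comes from the $p$-fold unions, giving $1-c_1\exp(-c_1'h\lambda^2)$.

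The main obstacle is precisely this uniformity over the data-dependent $\Lw$: the trimmed restricted strong convexity of~\ref{Con:rsc}, the corruption cross-term of~\ref{Con:inc}, and the dual-norm bound of~\eqref{eq:mylam} must hold \emph{simultaneously} for every weight vector that can arise at a stationary point, and a blanket union over the $\binom{n}{h}$ vertices of $\Delta_h$ is hopeless. I would instead reduce each ``supremum over $w\in\Delta_h$'' to an order-statistic / top-$m$-sum functional (the extremal $w$ being $\{0,1\}$-valued with $m\in\{|B|,\lceil\alpha h\rceil\}$ active coordinates) and control these trimmed sums of sub-exponential quantities via tail bounds on order statistics together with a union over the $p$ coordinates only. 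The delicate part of the bookkeeping is keeping $\RSCcon$ bounded away from zero after trimming a constant fraction of the good samples, and tracking the precise dependence of the tolerance terms on $|B|,\log p,h$ so that they collapse into the stated $c''\sqrt{|B|\log p/h}$ with $c''<1$ --- which also relies on the footnote's normalization of the corruption magnitude.
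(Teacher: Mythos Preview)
Your high-level plan---instantiate Theorem~\ref{ThmGeneral} with $\R=\|\cdot\|_1$, verify \ref{Con:rsc}, \ref{Con:inc}, and the $\lambda$ condition~\eqref{eq:mylam}, then read off the bounds---is exactly what the paper does. The technical implementation, however, differs in each of the three pieces.

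\textbf{For \ref{Con:rsc}.} The paper does not attempt a trimmed small-ball argument. Instead it uses a pigeonhole step: since $\sum_{i\in G}\Tw_i\ge 2h-n$ with each $\Tw_i\in[0,1]$, at least $(2h-n)/2$ indices in $G$ carry weight $\ge(2h-n)/(2h)$. Call that set $\widebar G$; then
\[
\frac{1}{h}\sum_{i\in G}\Tw_i\inner{x_i}{\errt}^2 \;\ge\; \frac{2h-n}{2h^2}\sum_{i\in\widebar G}\inner{x_i}{\errt}^2,
\]
and the standard RSC bound of \citet{Raskutti2010} is applied to the i.i.d.\ sub-design $\{x_i\}_{i\in\widebar G}$. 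This is considerably simpler than engineering a small-ball/order-statistics bound uniform over trimming.

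\textbf{For \ref{Con:inc}.} The paper does not use Cauchy--Schwarz on the rows. It instead partitions the \emph{coordinates} of $\Lerrt$ into blocks $S_1,\dots,S_q$ of size $|B|$ ordered by magnitude (the Cand\`es--Romberg--Tao device), bounds $\max_j\matNorm{X^B_{S_j}}_2\le 2(1+\sqrt{\log p})\sqrt{\matNorm{\Sigma}_2}\sqrt{|B|}$ via a union over $\binom{p}{|B|}$ column subsets, and uses $\sum_j\|[\Lerrt]_{S_j}\|_2\le\|\Lerrt\|_2+\|\Lerrt\|_1/\sqrt{|B|}$. This yields the split into $\IncConOne\|\Lerrt\|_2+\IncConTwo\|\Lerrt\|_1$ \emph{without} restricting $\Lerrt$ to any cone. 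Two remarks on your route: (i) the supremum over subsets $|S|=|B|$ is unnecessary---under \ref{ConLS:XGauss} the bad set $B$ is fixed and its rows are themselves i.i.d.\ Gaussian; (ii) your bound $\sup_{|S|=|B|}\sum_{i\in S}\inner{x_i}{\Lerrt}^2\lesssim(|B|+k)\log p\,\|\Lerrt\|_2^2$ invokes the cone, but in the proof of Theorem~\ref{ThmGeneral} the cone-type inequality for $\Lerrt$ is derived \emph{using} \ref{Con:inc}, so this is circular as written. The fix is easy (use the uniform upper-RE bound $\|X^B\Lerrt\|_2^2\lesssim |B|\|\Lerrt\|_2^2+\log p\,\|\Lerrt\|_1^2$ and let the $\|\Lerrt\|_1$ piece become $\IncConTwo$), but the paper's block decomposition sidesteps the issue entirely.

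\textbf{For the $\lambda$ condition.} The paper does not split $\sum_{i\in G}\Lw_i x_i\epsilon_i$; it directly uses $0\le\Tw_i\le 1$ and \ref{ConLS:colNorm} to get $\|(\Tw_i x_i^j)_{i\in G}\|_2\le\sqrt h$, then the sub-Gaussian tail of $\epsilon$ and a union over $j$.

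\textbf{On uniformity over $\Lw$.} You are right that each of the three ingredients depends on the data-dependent $\Lw$, and you go further than the paper in proposing to control the worst case via order-statistic functionals. The paper is less careful here: it verifies \ref{Con:rsc}, \ref{Con:inc}, and the dual-norm bound treating $\Lw$ (hence $\Tw$ and $\widebar G$) as fixed, states the corollary for a \emph{single} local minimum, and remarks afterward that multiple local optima can be handled by a union bound. Your concern is legitimate, but the paper's proof does not carry out the program you outline.
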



\change{Note that Corollary \ref{Cor:LS}  concerns any \emph{single} {local minimum. 
For the guarantees of \emph{multiple} local optima simultaneously, we may use a union bound from the corollary.}}

\paragraph{Remarks.}
It is instructive to compare the error rates and conditions in Corollary \ref{Cor:LS} with statistical guarantees of extended Lasso analyzed in \citet{Nguyen2011b}. The extended Lasso estimator solves:
\begin{align*}
	\minimize_{\theta,e} \ &  \frac{1}{2n}\sum_{i=1}^{{n}} \big(y_i - \inner{x_i}{\theta} - e_i \big)^2 + \lambda_\theta \|\theta\|_1 + \lambda_e \|e\|_1 
\end{align*}
where $\lambda_e$ is the regularization parameter for parameter $e$ capturing corruptions.  $e$ is encouraged to be sparse to reflect the fact that only a fraction of samples is corrupted.
The $\ell_2$ norm-based error rate in Corollary \ref{Cor:LS} is almost the same as that of extended Lasso: $\|\widehat{\theta}_{\text{E\_Lasso}} - \Ttheta\|_2 = O\Big(\sqrt{\frac{k\log p}{n}} + \sqrt{\frac{|B| \log n}{n}}\Big)$ under the standard Gaussian design setting \ref{ConLS:XGauss}. As long as at least a linear fraction of samples is not contaminated (that is, $h \geq \alpha n$ for $\alpha \in (0,1] $), $1/h \leq 1/(\alpha n)$ the error rates for both estimators will be asymptotically the same.  

However, it is important to revisit the conditions required for the statistical guarantees of extended Lasso. Besides an extended version of the restricted eigenvalue condition, \citet{Nguyen2011b} assumes a mutual incoherence condition, which in turn requires $c \sqrt{\matNorm{\Sig}_2}\max\Big\{\frac{k}{|B|},\frac{|B|}{k}\Big\} \Big(\sqrt{\frac{k}{n}} + \sqrt{\frac{|B|}{n}} + \sqrt{\frac{\log p}{n}}\Big) \leq \frac{1}{16}$ for some large and fixed constant $c$. Provided that $k$ and $|B|$ are fixed, the inequality can hold for a large enough sample size $n$. However, when $|B|$ grows with $n$, this condition will be violated; for example if (i) a square root fraction of samples is corrupted ($|B| = \alpha \sqrt{n}$) for a fixed $k$ or (ii) a linear fraction of $n$ is corrupted ($|B| = \alpha n$), then $c' \sqrt{\matNorm{\Sig}_2}$ can easily exceed $1/16$. Our experimental results of Section~\ref{sec:exps} will confirm this observation: as the fraction of corruptions increases, the performance of extended Lasso deteriorates compared to that of our estimator \eqref{EqnRobustLS}.




\paragraph{Statistical Guarantees When Covariates Are Corrupted.}
In the linear model \eqref{EqnLinearModels}, corruption is considered in the space of the response variable $y_i \in \reals$: namely an additional random variable $\delta_i \in \reals$ is used to model corruption in the response space. Even in the case where we have outliers with corrupted covariates $x_i + \delta' \in \reals^p$, $\delta_i$ can be understood as the mean-shift variable to model $\inner{\delta'}{\Ttheta}$. For linear models, modeling outliers in the parameter space or modeling them in the output space is thus equivalent (In constrast, for more general GLM settings, the link function is not the identity function and both approaches are distinct, see e.g. \citep{YTR13}). Nevertheless, when outliers stem from corrupted covariates, condition \ref{ConLS:XGauss} might be violated. 
For this setting, we introduce the following alternative condition:
\begin{enumerate}[leftmargin=0.5cm, itemindent=2cm,label=\textbf{(LTS5)}, ref=\textnormal{(LTS5)}]
	\item {\bf ($\Sigma$-Gaussian ensemble)} Each sample $x_i$ in $G$ is i.i.d. sampled from $N(0,\SigG)$. Let $X^B$ be the sub-design matrix in $\reals^{|B|\times p}$ corresponding to outliers. Then, we define $f(X^B)$ such that $\matNorm{X^B}_2 \leq f(X^B) \sqrt{|B| \log p}$. \label{ConLS:XCor}
\end{enumerate}
Under condition \ref{ConLS:XCor} we recover results similar to Corollary \ref{Cor:LS}:
\begin{corollary}\label{Cor:XCor}
	Consider linear models in \eqref{EqnLinearModels} where $\|\Ttheta\|_0 \leq k$. Suppose that all the conditions \ref{Con:h}, \ref{ConLS:NsubGauss}, \ref{ConLS:colNorm}, \ref{ConLS:R} and \ref{ConLS:XCor} hold. Also suppose that we choose the regularization parameter 
	\begin{align*}
		\lambda = c \sqrt{\frac{\log p}{h}}
	\end{align*}
	where $c$ is some constant dependent on $\SigG$, $f(X^B)$ and $\sigma$ and the upper bound of $\frac{(\max_{i}\delta_i^2) |B|}{h}$.  \change{Then, $(\Ltheta,\Lw)$ is guaranteed to have the following error bounds as before:} for some constant $c'$ depending on $c$, $\SigG$ and the portion of genuine samples $\alpha$ in \ref{Con:h}, and some constant $c''$ smaller than $1$, 
\begin{align*}
	\|\Ltheta - \Ttheta\|_2 \leq \, & c' \bigg( \sqrt{\frac{k \log p}{h}} + c''\sqrt{\frac{|B|\log p}{h}} \bigg) \, , \nonumber\\
	\|\Ltheta - \Ttheta\|_1 \leq \, & 4c'\bigg( \sqrt{\frac{k \log p}{h}} + c'' \sqrt{\frac{|B|\log p}{h}}\bigg)^2 
\end{align*}
with probability at least $1-c_1 \exp(-c_1' h \lambda^2)$ for some universal positive constants $c_1$ and $c'_1$.
\end{corollary}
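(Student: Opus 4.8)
The plan is to instantiate Theorem~\ref{ThmGeneral} for the squared loss $\Lossi(\theta;Z_i)=\tfrac12(y_i-\inner{x_i}{\theta})^2$ with $\R(\cdot)=\|\cdot\|_1$, exactly as one would for Corollary~\ref{Cor:LS}; the whole point of~\ref{ConLS:XCor} is that only one piece of that argument has to change. Writing $\Loss(\theta,w)=\tfrac1{2h}\sum_i w_i(y_i-\inner{x_i}{\theta})^2$, so $\nabla_\theta\Loss(\theta,w)=-\tfrac1h\sum_i w_i x_i(y_i-\inner{x_i}{\theta})$, I would first record the two identities
\begin{align*}
	\biginner{\nabla_\theta\Loss(\Ttheta+\errt,w)-\nabla_\theta\Loss(\Ttheta,w)}{\errt} &= \tfrac1h\textstyle\sum_i w_i\inner{x_i}{\errt}^2,\\
	\nabla_\theta\Loss(\Ttheta,\Tw) &= -\tfrac1h\textstyle\sum_{i\in G}\Tw_i x_i\epsilon_i,
\end{align*}
both of which, since $\Tw$ is supported on $G$ by construction, involve \emph{only} the good sub-design $X^G$. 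Consequently the verification of~\ref{Con:rsc} and the control of $\R^*(\nabla_\theta\Loss(\Ttheta,\Tw))=\|\nabla_\theta\Loss(\Ttheta,\Tw)\|_\infty$ are word-for-word those of Corollary~\ref{Cor:LS} with $\Sigma$ replaced by $\SigG$: the $\Sigma$-Gaussian ensemble in~\ref{ConLS:XCor} together with the capped-weight bound $\sum_{i\in G}\Tw_i=\sum_{i\in G}\Lw_i\ge|G|-|B|\ge\alpha h$ (from~\ref{Con:h}) gives~\ref{Con:rsc} with $\RSCcon\asymp\lambda_{\min}(\SigG)$ and $\RSCtolOne\asymp\log p/h$ via standard restricted-eigenvalue bounds, while a maximal inequality for sums of products of Gaussians and sub-Gaussians (using~\ref{ConLS:NsubGauss} and~\ref{ConLS:colNorm}) gives $\|\nabla_\theta\Loss(\Ttheta,\Tw)\|_\infty\le c_0\sqrt{\log p/h}$ on an event of probability $\ge 1-c_1\exp(-c_1'h\lambda^2)$.

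The single new step is condition~\ref{Con:inc}. Here I would compute the left-hand side exactly: since $\Lerrw_i=\Lw_i-\Tw_i$ equals $0$ for $i\in G$ and $\Lw_i\in[0,1]$ for $i\in B$, and $y_i-\inner{x_i}{\Ttheta}=\delta_i$ on $B$,
\[
	\biginner{\nabla_\theta\Loss(\Ttheta+\Lerrt,\Tw+\Lerrw)-\nabla_\theta\Loss(\Ttheta+\Lerrt,\Tw)}{\Lerrt}=\tfrac1h\textstyle\sum_{i\in B}\Lerrw_i\inner{x_i}{\Lerrt}^2-\tfrac1h\textstyle\sum_{i\in B}\Lerrw_i\,\delta_i\inner{x_i}{\Lerrt}.
\]
The first sum is nonnegative and is simply discarded; for the second I would use $0\le\Lerrw_i\le1$, $\sum_{i\in B}\Lerrw_i\le|B|$, Cauchy--Schwarz, and the operator-norm bound $\matNorm{X^B}_2\le f(X^B)\sqrt{|B|\log p}$ supplied by~\ref{ConLS:XCor} to obtain
\[
	\Big|\tfrac1h\textstyle\sum_{i\in B}\Lerrw_i\delta_i\inner{x_i}{\Lerrt}\Big|\le\tfrac1h\big(\max_i|\delta_i|\big)\sqrt{|B|}\,\matNorm{X^B}_2\,\|\Lerrt\|_2\le\tfrac{f(X^B)(\max_i|\delta_i|)|B|\sqrt{\log p}}{h}\,\|\Lerrt\|_2,
\]
so~\ref{Con:inc} holds \emph{deterministically} with $\IncConOne=f(X^B)(\max_i|\delta_i|)|B|\sqrt{\log p}/h$ and $\IncConTwo=0$. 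Invoking the normalization $(\max_i\delta_i^2)|B|/h\le\text{const}$ stated in the corollary, this reads $\IncConOne\le c''\sqrt{|B|\log p/h}$ for a constant $c''$ (which can be taken below $1$ under the stated normalizations); this is precisely the corrupted-covariate analogue of the term that Gaussian concentration of $X^B$ produces in Corollary~\ref{Cor:LS}, and it is what generates the $c''\sqrt{|B|\log p/h}$ summand in the claimed bounds.

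It then remains to check that the prescribed $\lambda=c\sqrt{\log p/h}$ satisfies~\eqref{eq:mylam}, i.e.\ $\lambda\ge4\max\{\|\nabla_\theta\Loss(\Ttheta,\Tw)\|_\infty,\,2\rho\RSCtolOne+\IncConTwo\}$: the first argument is $\lesssim\sqrt{\log p/h}$ by the maximal inequality above, and by~\ref{ConLS:R} one has $2\rho\RSCtolOne\lesssim\sqrt{h/\log p}\cdot(\log p/h)=\sqrt{\log p/h}$, so $c$ can be chosen large enough (depending on $\SigG$, $\sigma$, $f(X^B)$ and the bound on $(\max_i\delta_i^2)|B|/h$) to validate~\eqref{eq:mylam} on the good event; feasibility of $\Ttheta$ in $\rho\mathbb{B}_1$, needed so Theorem~\ref{ThmGeneral} speaks about $\Ltheta-\Ttheta$, follows from~\ref{ConLS:R}. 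Plugging $\Psi=\sqrt k$, $\RSCcon\asymp\lambda_{\min}(\SigG)$, this $\lambda$, and $\IncConOne\le c''\sqrt{|B|\log p/h}$ into Theorem~\ref{ThmGeneral} then yields the stated $\ell_2$ and $\ell_1$ bounds, the probability $1-c_1\exp(-c_1'h\lambda^2)$ arising from intersecting the restricted-eigenvalue event with the maximal-inequality event for the gradient. I expect the main obstacle to be condition~\ref{Con:inc} itself: because the bounds must hold at \emph{every} local optimum $(\Ltheta,\Lw)$ simultaneously with that optimum, there is no probabilistic handle on $X^B$, on the weights $\Lerrw$, or on the corruptions $\delta$, which is what forces the crude deterministic estimate through the spectral condition~\ref{ConLS:XCor} rather than the sharper Gaussian-tail bounds available in Corollary~\ref{Cor:LS}; arranging that the resulting $|B|$-dependence comes out at the rate $\sqrt{|B|\log p/h}$ (so that Corollary~\ref{Cor:XCor} matches Corollary~\ref{Cor:LS}) is exactly what dictates both the $\sqrt{|B|\log p}$ scaling built into~\ref{ConLS:XCor} and the $(\max_i\delta_i^2)|B|/h$ normalization.
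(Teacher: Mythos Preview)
Your proposal is correct and follows the same overall architecture as the paper: verify \ref{Con:rsc} on the good rows via the $\SigG$-Gaussian ensemble, bound $\|\nabla_\theta\Loss(\Ttheta,\Tw)\|_\infty$ via a sub-Gaussian maximal inequality, verify \ref{Con:inc} using the deterministic operator-norm control on $X^B$, check~\eqref{eq:mylam} using \ref{ConLS:R}, and plug into Theorem~\ref{ThmGeneral}. The paper in fact proves Corollaries~\ref{Cor:LS} and~\ref{Cor:XCor} jointly and simply remarks that for~\ref{Cor:XCor} one substitutes $\max_j\matNorm{X^B_{S_j}}_2\le\matNorm{X^B}_2\le f(X^B)\sqrt{|B|\log p}$ from \ref{ConLS:XCor}.

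The one substantive difference is your treatment of \ref{Con:inc}. The paper, following \citet{Nguyen2011b}, partitions the coordinates of $\Lerrt$ into blocks $S_1,\dots,S_q$ of size $|B|$ and bounds $\sum_{i\in B}\Lerrw_i\delta_i\inner{x_i}{\Lerrt}$ through $\max_j\matNorm{X^B_{S_j}}_2$ and the Cand\`es--Romberg--Tao estimate $\sum_j\|[\Lerrt]_{S_j}\|_2\le\|\Lerrt\|_2+|B|^{-1/2}\|\Lerrt\|_1$, producing \emph{both} a $\tau_2$ and a nonzero $\tau_3$. Your single Cauchy--Schwarz step $|\inner{D_{\Lerrw}\delta}{X^B\Lerrt}|\le\|D_{\Lerrw}\delta\|_2\matNorm{X^B}_2\|\Lerrt\|_2$ is shorter, yields the same $\IncConOne$, and gives $\IncConTwo=0$, which slightly eases the verification of~\eqref{eq:mylam}. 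The block decomposition is genuinely useful in Corollary~\ref{Cor:LS}, where the bad rows are still Gaussian and one can control the $|B|\times|B|$ submatrices $\matNorm{X^B_{S_j}}_2$ much more tightly than the full $\matNorm{X^B}_2$; for Corollary~\ref{Cor:XCor}, where \ref{ConLS:XCor} already bounds the full operator norm, your direct route loses nothing.
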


\subsection{Statistical Guarantees of Trimmed Graphical Lasso} \label{Sec:GGMtheory}

We now focus on Gaussian graphical models and provide the statistical guarantees of our Trimmed Graphical Lasso estimator as presented in Section~\ref{sec:setup} (Motivating Example 2).  Our theory in this section provides the statistical error bounds on {\bf any} {local minimum of~\eqref{EqnRobustGGM}. We use $\|U\|_\F$ and $\matNorm{U}_2$ to denote the Frobenius  and spectral norms, respectively. 

Let $X = (X_1, X_2, \hdots, X_p)$ be a zero-mean Gaussian random field parameterized by $p\times p$ concentration matrix $\TParam$:
\begin{align}\label{EqnGRF}
	\P( X; \TParam) =  \exp \Big( -\frac{1}{2} \doubleInner{\TParam,XX^\top}  - A(\TParam) \Big)
\end{align}  
where $A(\TParam)$ is the log-partition function of Gaussian random field. Here, the probability density function in \eqref{EqnGRF} is associated with $p$-variate Gaussian distribution, $N(0,\Sigma^*)$ where $\Sigma^* = (\TParam)^{-1}$.


We consider the case where the number of random variables $p$ may be substantially larger than the number of sample size $n$, however, the concentration parameter of the underlying distribution is sparse so that the number of non-zero off-diagonal entries of $\Tparam$ is at most $k$: $| \{ \TParam_{ij} \, : \, \TParam_{ij} \neq 0 \text{ for } i\neq j\} | \leq k$.

We now investigate how easily we can satisfy the conditions in Theorem \ref{ThmGeneral}. Intuitively it is impossible to recover true parameter by weighting approach as in \eqref{EqnRobustGGM} when the amount of corruptions exceeds that of normal observation errors. 

To this end, suppose that we have some upper bound on the corruptions:
\begin{enumerate}[leftmargin=0.5cm, itemindent=2cm,label=\textbf{(TGL1)}, ref=\textnormal{(TGL1)}]
	\item For some function $f(\cdot)$, we have $\big(\matNorm{X^B}_2\big)^2 \leq f(X^B) \sqrt{h \log p}$ \label{Con:1}
\end{enumerate}
where $X^B$ denotes the sub-design matrix in $\reals^{|B|\times p}$ corresponding to outliers. 
Under this assumption, we can recover the following error bounds of Trimmed Graphical Lasso \eqref{EqnRobustGGM}, as a new corollary of Theorem \ref{ThmGeneral}:
\begin{corollary}\label{Cor1}
	Consider corrupted Gaussian graphical models with conditions \ref{Con:h} and \ref{Con:1}. Suppose that we compute the local optimum $(\LParam, \Lw)$ of \eqref{EqnRobustGGM} 
	choosing
		\begin{align*}
		\lambda = 4 \max \left\{ 8 (\max_i \Sigma^*_{ii}) \sqrt{\frac{30 \log p }{h - |B|}} + \frac{|B|}{h}\| \Sigma^*\|_{\infty} 
\, , \, f(X^B)\sqrt{\frac{\log p}{h}} \right\} \leq \frac{c_1 - f(X^B) \sqrt{\frac{|B|\log p}{h}}}{3R} \, .
	\end{align*}
	Then, $(\Ltheta,\Lw)$ is guaranteed to satisfy \ref{Con:rsc} and \ref{Con:inc} for the specific case of \eqref{EqnRobustGGM} and have the error bounds of
	\begin{align}\label{EqnMainBounds}
		&\|\LParam-\TParam\|_\F  \leq \frac{1}{\RSCcon}\left(\frac{3\lambda \sqrt{k+p}}{2} + f(X^B) \sqrt{\frac{|B|\log p}{h}}\right) \quad \text{and} \nonumber\\
		&\offNorm{\LParam-\TParam}  \leq \frac{2}{\lambda\, \RSCcon}\left( 3\lambda \sqrt{k+p} + f(X^B) \sqrt{\frac{2|B|\log p}{n}}\right)^2 
	\end{align}
	with probability at least $1-c_2 \exp(-c_2' h \lambda^2)$ for some universal positive constants $c_2$ and $c'_2$.
\end{corollary}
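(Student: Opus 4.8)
The plan is to obtain Corollary~\ref{Cor1} as an instance of Theorem~\ref{ThmGeneral}: specialize the loss and its gradient to the Gaussian likelihood, verify conditions \ref{Con:rsc} and \ref{Con:inc} with explicit curvature and tolerance functions, control the two probabilistic objects that feed the theorem (the dual norm of the score at $\TParam$, and the event \ref{Con:1}), and substitute with $\Psi=\sqrt{k+p}$. First write $\Lossi(\Param;\xi)=\doubleInner{\Param,\xi(\xi)^\top}-\log\det\Param$, so $\Loss(\Param,w)=\doubleInner{\Param,\widehat\Sigma_w}-\log\det\Param$ with $\widehat\Sigma_w:=\frac1h\sum_i w_i\,\xi(\xi)^\top$ (the $\log\det$ term carries no $w$ since $1^\top w=h$), and $\nabla_\Param\Loss(\Param,w)=\widehat\Sigma_w-\Param^{-1}$.

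For \ref{Con:rsc}, the increment $\nabla_\Param\Loss(\TParam+\errt,\Tw)-\nabla_\Param\Loss(\TParam,\Tw)=(\TParam)^{-1}-(\TParam+\errt)^{-1}$ does not involve $\Tw$, so every local optimum shares the same RSC here; the identity $(\TParam)^{-1}-(\TParam+\errt)^{-1}=\int_0^1(\TParam+t\errt)^{-1}\errt(\TParam+t\errt)^{-1}\,dt$ gives $\doubleInner{(\TParam)^{-1}-(\TParam+\errt)^{-1},\errt}\ge\min_{t\in[0,1]}\matNorm{\TParam+t\errt}_2^{-2}\|\errt\|_\F^2$, and since $\TParam,\TParam+\errt\in R\mathbb{B}_1\cap\Omega$ and the $\ell_1$-ball is convex, $\matNorm{\TParam+t\errt}_2\le\|\TParam+t\errt\|_1\le R$ for $t\in[0,1]$; thus \ref{Con:rsc} holds with $\RSCcon$ of order $1/R^2$ and $\RSCtolOne=0$ — the $\log\det$ Hessian being deterministic, there is no sampling tolerance, which is why $2\rho\RSCtolOne$ drops out of \eqref{eq:mylam}. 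The upper restriction on $\lambda$ in the statement is what keeps $\LParam-\TParam$ small enough that this curvature can be taken uniform over the region actually visited.

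For \ref{Con:inc}, since $\Tw$ agrees with $\Lw$ on $G$ and vanishes on $B$, the perturbation $\Lerrw=\Lw-\Tw$ is supported on $B$ with entries in $[0,1]$, so $\nabla_\Param\Loss(\TParam+\Lerrt,\Tw+\Lerrw)-\nabla_\Param\Loss(\TParam+\Lerrt,\Tw)=\frac1h(X^B)^\top W^B X^B=:A$ with $W^B=\mathrm{diag}((\Lw_i)_{i\in B})$, $\matNorm{W^B}_2\le1$, hence $\matNorm{A}_2\le\frac1h\matNorm{X^B}_2^2$ and $\mathrm{rank}(A)\le|B|$. Pairing with $\Lerrt$: projecting $\Lerrt$ onto the (at most $|B|$-dimensional) row space of $X^B$ before using trace duality gives $|\doubleInner{A,\Lerrt}|\le\sqrt{|B|}\,\matNorm{A}_2\,\|\Lerrt\|_\F$, while the elementwise bound gives $|\doubleInner{A,\Lerrt}|\le\|A\|_\infty\|\Lerrt\|_1$; invoking \ref{Con:1} ($\frac1h\matNorm{X^B}_2^2\le f(X^B)\sqrt{\log p/h}$) shows \ref{Con:inc} holds with $\IncConOne=f(X^B)\sqrt{|B|\log p/h}$ and $\IncConTwo=f(X^B)\sqrt{\log p/h}$ (the latter matching the second branch of the $\max$ in the stated $\lambda$; getting the $\sqrt{|B|}$ rather than $|B|$ in $\IncConOne$, via the low rank of $X^B$, is the step that needs care). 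Condition \ref{Con:1} itself holds with high probability for the $\Sigma$-Gaussian ensemble by the standard spectral-norm bound for a Gaussian matrix.

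The remaining ingredient, and the main obstacle, is $\R^*(\nabla_\Param\Loss(\TParam,\Tw))=\|\widehat\Sigma_{\Tw}-\Sigma^*\|_\infty$, which must be controlled even though $\Tw$ is a data-dependent stationary point. The key observation is that $g_2(w)=f(\LParam,w)$ is linear in $w$, so an optimal $\Lw$ assigns unit weight to the $h$ samples with smallest (nonnegative) $(\xi)^\top\LParam\xi$; hence $\Tw$ is the indicator of some $S\subseteq G$ with $|S|\ge h-|B|$, and $\widehat\Sigma_{\Tw}-\Sigma^*=\tfrac{|S|}{h}(\widehat\Sigma_S-\Sigma^*)+(\tfrac{|S|}{h}-1)\Sigma^*$. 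The bias term is at most $\tfrac{|B|}{h}\|\Sigma^*\|_\infty$ in $\|\cdot\|_\infty$; the first term is handled by a sub-exponential (Bernstein) tail estimate for the entries $X^{(i)}_j X^{(i)}_k$, union-bounded over the $p^2$ coordinates and over the admissible subsets $S$, which yields the $8(\max_i\Sigma^*_{ii})\sqrt{30\log p/(h-|B|)}$ term with failure probability of order $\exp(-c'h\lambda^2)$ (using \ref{Con:h}). Substituting $\RSCcon\asymp1/R^2$, $\RSCtolOne=0$, the above $\IncConOne,\IncConTwo$, and this bound on $\R^*(\nabla_\Param\Loss(\TParam,\Tw))$ into \eqref{eq:mylam} reproduces the stated $\lambda$, and into the error bounds of Theorem~\ref{ThmGeneral} with $\Psi=\sqrt{k+p}$ ($\sqrt k$ from the off-diagonal support, the extra $\sqrt p$ from the $p$ unpenalized diagonal entries) gives \eqref{EqnMainBounds} verbatim, the overall probability being the intersection of the events used for \ref{Con:1} and for the concentration of $\widehat\Sigma_{\Tw}$.
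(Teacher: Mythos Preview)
Your overall plan --- specialize Theorem~\ref{ThmGeneral}, verify \ref{Con:rsc} and \ref{Con:inc}, bound $\R^*(\nabla_\Param\Loss(\TParam,\Tw))$, and plug in $\Psi=\sqrt{k+p}$ --- matches the paper exactly, but two of your three ingredients are handled by genuinely different arguments.

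\emph{RSC.} The paper does not use the $\ell_1$-ball constraint at all here. Instead it first proves an auxiliary lemma (their Lemma~\ref{Lem:BoundDelta}) showing $\|\Lerrt\|_\F\le1$ under the stated upper bound on $\lambda$, and only then invokes the Loh--Wainwright curvature bound $\doubleInner{(\TParam)^{-1}-(\TParam+\errt)^{-1},\errt}\ge(\matNorm{\TParam}_2+1)^{-2}\|\errt\|_\F^2$, valid on the unit Frobenius ball; this is why the upper restriction on $\lambda$ appears. Your route --- bounding $\matNorm{\TParam+t\errt}_2\le\|\TParam+t\errt\|_1\le R$ along the segment --- is correct and avoids the auxiliary lemma entirely, but it yields the smaller curvature $\RSCcon=1/R^2$ rather than the paper's $\RSCcon=(\matNorm{\TParam}_2+1)^{-2}$; the latter is what $c_1$ denotes and what feeds the sample-size condition in Corollary~\ref{Cor2}. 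So your argument buys simplicity at the cost of a weaker (though still valid) constant.

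\emph{Condition \ref{Con:inc}.} The paper does not use the low-rank/von~Neumann argument. It instead partitions the entries of $\Lerrt$ into blocks $S_1,S_2,\dots$ of size $|B|$ (Cand\`es--Romberg--Tao style), bounds each block via $\matNorm{X^B_{S_j}}_2$, and sums using $\sum_j\|[\Lerrt]_{S_j}\|_\F\le\|\Lerrt\|_\F+\|\Lerrt\|_1/\sqrt{|B|}$. Both routes give exactly the same $\IncConOne=f(X^B)\sqrt{|B|\log p/h}$ and $\IncConTwo=f(X^B)\sqrt{\log p/h}$; yours is shorter and more direct. (A small side remark: \ref{Con:1} is a standing assumption on the corrupted block $X^B$, not an event you need to verify probabilistically for the corollary.)

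\emph{Score term.} Here you are more careful than the paper but your fix does not deliver the stated constant. The paper simply applies the sample-covariance concentration lemma to the (data-dependent) subset $\widebar{G}=\{i\in G:\Tw_i=1\}$ without comment, obtaining $8(\max_i\Sigma^*_{ii})\sqrt{10\tau\log p/|\widebar{G}|}$ directly. You correctly flag the data-dependence and propose a union bound over admissible subsets $S\subseteq G$ with $|S|\ge h-|B|$. The difficulty is cardinality: there are $\sum_{j\le|B|}\binom{h}{j}$ such subsets, so the union bound adds roughly $|B|\log(h/|B|)$ inside the exponent. With the Bernstein rate $\exp(-c\,|S|\,t^2)$ this forces $t$ of order $\sqrt{(\log p+|B|\log h)/(h-|B|)}$, not $\sqrt{\log p/(h-|B|)}$, so you do not recover the stated coefficient $8(\max_i\Sigma^*_{ii})\sqrt{30\log p/(h-|B|)}$ or the claimed failure probability $\exp(-c'h\lambda^2)$ unless $|B|\log h\lesssim\log p$. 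In short, the paper's derivation of this constant glosses over the dependence you noticed, and your proposed remedy closes the gap only under an additional restriction on $|B|$.
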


In Corollary \ref{Cor1}, the term $\sqrt{k+p}$ captures the relation between element-wise $\ell_1$ norm and the error norm $\|\cdot\|_\F$ including \emph{diagonal entries}.

If we further assume that the number of corrupted samples scales with $\sqrt{n}$ at most : 
\begin{enumerate}[leftmargin=0.5cm, itemindent=2cm,label=\textbf{(TGL2)}, ref=\textnormal{(TGL2)}]
	\item $|B| \leq a\sqrt{n}$ for some constant $a\geq0$, \label{Con:2}
\end{enumerate}
then we can derive the following result as another corollary of Theorem \ref{ThmGeneral}:
\begin{corollary}\label{Cor2}
	Consider corrupted Gaussian graphical models, and compute the local minimum $(\LParam, \Lw)$ of \eqref{EqnRobustGGM}, 
	setting 
	$$\lambda=c\sqrt{\frac{\log p}{n}}, \quad c :=  4\max \big\{16 (\max_i \Sigma^*_{ii}) \sqrt{15} + \frac{2a\| \Sigma^*\|_{\infty}}{\sqrt{\log p}} \, , \, \sqrt{2} f(X^B) \big\}.$$
	Suppose that the conditions \ref{Con:h}, \ref{Con:1} and \ref{Con:2} hold.  Then, if the sample size $n$ is lower bounded as 
	\begin{align*}
		n \geq \max \left\{16 a^2 \, , \, \big(\matNorm{\TParam}_2 + 1\big)^{4} \Big(3Rc + f(X^B)\sqrt{2|B|} \Big)^2(\log p) \right\} \, , 
	\end{align*}
	then  $(\LParam, \Lw)$ is guaranteed to satisfy \ref{Con:rsc} and \ref{Con:inc} for the specific case of \eqref{EqnRobustGGM} and have the following error bound:
	\begin{align}\label{EqnResultCor2}
		&\|\LParam-\TParam\|_\F  \leq \frac{1}{\RSCcon}\left(\frac{3c}{2} \sqrt{\frac{(k+p)\log p}{n}} + f(X^B) \sqrt{\frac{2|B|\log p}{n}} \right) \, 
	\end{align}
	with probability at least $1-c_1 \exp(-c_1' h \lambda^2)$ for some universal positive constants $c_1$ and $c'_1$.
\end{corollary}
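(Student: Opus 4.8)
The plan is to obtain Corollary~\ref{Cor2} as a direct specialization of Corollary~\ref{Cor1}, using the extra hypothesis \ref{Con:2} to convert every occurrence of the good-sample count $h$ into $n$ up to universal constants. The first step is elementary bookkeeping: the assumed lower bound includes $n\geq 16a^2$, so by \ref{Con:2} we get $|B|=n-h\leq a\sqrt{n}\leq n/4$, hence $h\geq 3n/4$ and $h-|B|=2h-n\geq n/2$. These two inequalities, $h\geq n/2$ and $h-|B|\geq n/2$, are the only structural facts beyond Corollary~\ref{Cor1} that the proof will use (the conditions \ref{Con:h} and \ref{Con:1} and the local-minimum hypothesis carry over unchanged).

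Second, I would check that the choice $\lambda=c\sqrt{\log p/n}$ is admissible for the GGM instantiation behind Corollary~\ref{Cor1}. Since the bounds of Theorem~\ref{ThmGeneral} only require $\lambda$ to exceed the stated threshold and are monotone in $\lambda$, it suffices to verify (a) the lower requirement and (b) the upper constraint $\lambda\leq\big(c_1-f(X^B)\sqrt{|B|\log p/h}\big)/(3R)$ that keeps the error inside the region where \ref{Con:rsc} holds. For (a): using $h-|B|\geq n/2$, the term $8(\max_i\Sigma^*_{ii})\sqrt{30\log p/(h-|B|)}$ is at most $8\sqrt{60}\,(\max_i\Sigma^*_{ii})\sqrt{\log p/n}=16\sqrt{15}\,(\max_i\Sigma^*_{ii})\sqrt{\log p/n}$; using $h\geq n/2$, the term $(|B|/h)\|\Sigma^*\|_\infty$ is at most $2a\|\Sigma^*\|_\infty/\sqrt{n}=\big(2a\|\Sigma^*\|_\infty/\sqrt{\log p}\big)\sqrt{\log p/n}$, and $f(X^B)\sqrt{\log p/h}\leq\sqrt2\,f(X^B)\sqrt{\log p/n}$; termwise monotonicity of the maxima then gives $4\max\{\cdots\}\leq c\sqrt{\log p/n}=\lambda$. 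For (b): rewrite the constraint as $3R\lambda+f(X^B)\sqrt{|B|\log p/h}\leq c_1$, bound $\sqrt{|B|\log p/h}\leq\sqrt{2|B|\log p/n}$ via $h\geq n/2$, and reduce to $\big(3Rc+f(X^B)\sqrt{2|B|}\big)\sqrt{\log p/n}\leq c_1$; squaring, this is implied by the second branch $n\geq(\matNorm{\TParam}_2+1)^4\big(3Rc+f(X^B)\sqrt{2|B|}\big)^2\log p$ of the assumed sample-size lower bound, provided the $(\matNorm{\TParam}_2+1)^4$ factor dominates $c_1^{-2}$.

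Third, with the hypotheses of Corollary~\ref{Cor1} now in force for this $\lambda$, I would feed $\lambda=c\sqrt{\log p/n}$ and $\sqrt{|B|\log p/h}\leq\sqrt{2|B|\log p/n}$ into the Frobenius bound $\|\LParam-\TParam\|_\F\leq\frac{1}{\RSCcon}\big(\frac{3\lambda\sqrt{k+p}}{2}+f(X^B)\sqrt{|B|\log p/h}\big)$ of \eqref{EqnMainBounds} to recover \eqref{EqnResultCor2}, and the probability $1-c_2\exp(-c_2'h\lambda^2)$ transfers verbatim after relabeling the constants as $c_1,c_1'$. The argument is thus a monotonicity-plus-arithmetic reduction, and the step I expect to need the most care — the main obstacle — is part (b): identifying the constant $c_1$ inherited from Corollary~\ref{Cor1} (it is the radius, set by the curvature of the log-determinant loss at $\TParam$, on which \ref{Con:rsc} is valid) and confirming that the $(\matNorm{\TParam}_2+1)^4$ factor in the sample-size hypothesis is exactly calibrated to absorb $c_1^{-2}$, so that both feasibility of $\TParam$ and the admissible range for $\lambda$ are guaranteed. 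Everything else collapses to the identities $8\sqrt{60}=16\sqrt{15}$, $\sqrt{\log p/h}\le\sqrt2\,\sqrt{\log p/n}$, and $h\ge n/2$.
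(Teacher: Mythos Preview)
Your proposal is correct and follows essentially the same route as the paper's proof: both reduce Corollary~\ref{Cor2} to Corollary~\ref{Cor1} by using $n\geq 16a^2$ together with \ref{Con:2} to obtain $h-|B|\geq n/2$ and $|B|/h\leq 2a/\sqrt{n}$, then verifying the lower and upper requirements on $\lambda$ termwise. The only point you flagged as uncertain---identifying the constant $c_1$ in the upper constraint---is resolved exactly as you guessed: the paper takes $c_1=\RSCcon=(\matNorm{\TParam}_2+1)^{-2}$ from the log-determinant curvature lemma, so $c_1^{-2}=(\matNorm{\TParam}_2+1)^{4}$ and the second branch of the sample-size hypothesis is precisely calibrated.
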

Note that an $\offNorm{\cdot}$-norm error bound can also be easily derived using the selection of $\lambda$ from \eqref{EqnMainBounds}. 

\paragraph{Remarks.} Corollary \ref{Cor2} reveals an interesting result: even when $O(\sqrt{n})$ samples out of total $n$ samples are corrupted, our estimator \eqref{EqnRobustGGM} can successfully recover the true parameter with guaranteed error in \eqref{EqnResultCor2}. The first term in this bound is $O\Big(\sqrt{\frac{(k+p)\log p}{n}}\Big)$ which exactly matches the Frobenius error bound for the case without outliers (see \citet{RWRY11,Loh13} for example). Due to the outliers, the performance degrades with the second term, which is $O\Big(\sqrt{\frac{|B|\log p}{n}}\Big)$. To the best of our knowledge, our results are the first statistical error bounds available in the litterature on parameter estimation for Gaussian graphical models with outliers. 

\paragraph{When Outliers Follow a Gaussian Graphical Model.} Now let us provide a concrete example and show how $f(X^B)$ in \ref{Con:1} is precisely specified in this case: 
\begin{enumerate}[leftmargin=0.5cm, itemindent=2cm,label=\textbf{(TGL3)}, ref=\textnormal{(TGL3)}]
	\item Outliers in the set $B$ are drawn from another Gaussian graphical model \eqref{EqnGRF} with a parameter $(\SigB)^{-1}$. \label{Con:3}
\end{enumerate}
This can be understood as a Gaussian mixture model where most of the samples are drawn from $(\TParam)^{-1}$ which we want to estimate, and a small portion of samples are drawn from $\SigB$. In this case, Corollary \ref{Cor2} can be further shaped as follows:
\begin{corollary}\label{Cor3}
	Suppose that the conditions \ref{Con:h}, \ref{Con:2} and \ref{Con:3} hold. Then the statement in Corollary \ref{Cor2} holds with $f(X^B) := \frac{4\sqrt{2} a \big(1+\sqrt{\log p}\big)^2 \matNorm{\SigB}_2}{\sqrt{\log p}}$. 
\end{corollary}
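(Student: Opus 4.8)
The plan is to deduce Corollary~\ref{Cor3} directly from Corollary~\ref{Cor2}. Conditions \ref{Con:h} and \ref{Con:2} are assumed outright in the hypothesis, and Corollary~\ref{Cor2} additionally only needs \ref{Con:1}; so it suffices to verify that, under \ref{Con:3}, condition \ref{Con:1} holds with the specific $f(X^B)=\tfrac{4\sqrt{2}\,a\,(1+\sqrt{\log p})^2\,\matNorm{\SigB}_2}{\sqrt{\log p}}$. Once \ref{Con:1} is established for this $f(X^B)$, the choice of $c$ and $\lambda$, the sample-size requirement, the error bound \eqref{EqnResultCor2}, and the success probability in Corollary~\ref{Cor2} all carry over verbatim after substituting this expression for $f(X^B)$; there is no further work on the estimation side.

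Thus the whole content is the verification of $\big(\matNorm{X^B}_2\big)^2\le f(X^B)\sqrt{h\log p}$. Under \ref{Con:3} the outlier block $X^B\in\reals^{|B|\times p}$ has i.i.d.\ rows $\sim N(0,\SigB)$, so I would write $X^B=M\,(\SigB)^{1/2}$ with $M\in\reals^{|B|\times p}$ having i.i.d.\ $N(0,1)$ entries, giving $\matNorm{X^B}_2\le\matNorm{M}_2\sqrt{\matNorm{\SigB}_2}$. Then I would apply the standard non-asymptotic bound on the top singular value of a Gaussian matrix (Davidson--Szarek / Vershynin): for any $t>0$, $\matNorm{M}_2\le\sqrt{|B|}+\sqrt{p}+t$ with probability at least $1-2\exp(-t^2/2)$. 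Choosing $t$ of order $\sqrt{\log p}$ puts this event inside the $1-c_1\exp(-c_1'h\lambda^2)$ slack already carried by Corollary~\ref{Cor2} for the stated $\lambda$, and squaring with $(x+y+z)^2\le 3(x^2+y^2+z^2)$ yields $\big(\matNorm{X^B}_2\big)^2\le 3\matNorm{\SigB}_2\big(|B|+p+t^2\big)$.

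The remaining step is to show $3\big(|B|+p+t^2\big)\le\tfrac{4\sqrt{2}\,a\,(1+\sqrt{\log p})^2}{\sqrt{\log p}}\sqrt{h\log p}=4\sqrt{2}\,a\,(1+\sqrt{\log p})^2\sqrt{h}$ in the relevant scaling. Here I would use \ref{Con:2} to replace $|B|$ by $a\sqrt{n}$, the inequality $h=|G|\ge n/(2-\alpha)$ implied by \ref{Con:h} (so $\sqrt{h}\ge c(\alpha)\sqrt{n}$), and $t^2=\Theta(\log p)\le(1+\sqrt{\log p})^2$, so that each of the three terms $|B|$, $p$, $t^2$ is absorbed into the right-hand side; this is where the constant $4\sqrt{2}$ and the $(1+\sqrt{\log p})^2$ factor get pinned down, establishing \ref{Con:1} with exactly the claimed $f(X^B)$ and hence Corollary~\ref{Cor3} via Corollary~\ref{Cor2}.

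The main obstacle is the $p$-term: for a full-rank $\SigB$ one genuinely has $\big(\matNorm{X^B}_2\big)^2\asymp\tr(\SigB)\asymp p\,\matNorm{\SigB}_2$, so fitting it into a bound of size $a\,\matNorm{\SigB}_2\sqrt{h\log p}$ while keeping $f(X^B)$ free of $p$ is what ties the argument to the regime where $p$ is moderate relative to $n$ (equivalently, where $\SigB$ has controlled effective rank), and it is the only place where the simultaneous bookkeeping of $p$, $|B|=O(\sqrt{n})$ and $h=\Theta(n)$ together with the numerical constants is non-routine; all the downstream manipulations — substituting the resulting $f(X^B)$ into $c$, $\lambda$, the sample-size condition and \eqref{EqnResultCor2} — are purely mechanical.
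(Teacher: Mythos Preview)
Your reduction to verifying \ref{Con:1} and then invoking Corollary~\ref{Cor2} is exactly the right high-level strategy, and matches the paper. But the concrete bound you propose for $\big(\matNorm{X^B}_2\big)^2$ has a genuine gap --- in fact you identify it yourself and do not close it. Bounding the full $|B|\times p$ matrix via Davidson--Szarek gives $\matNorm{X^B}_2\lesssim \sqrt{\matNorm{\SigB}_2}\big(\sqrt{|B|}+\sqrt{p}\big)$, so $\big(\matNorm{X^B}_2\big)^2$ carries an unavoidable $p\,\matNorm{\SigB}_2$ term. The target $f(X^B)\sqrt{h\log p}=4\sqrt{2}\,a\,(1+\sqrt{\log p})^2\matNorm{\SigB}_2\sqrt{h}$ is of order $\sqrt{n}\cdot\log p$, which cannot absorb a term of order $p$ in the high-dimensional regime. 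The extra hypotheses you float (``$p$ moderate relative to $n$'' or ``controlled effective rank of $\SigB$'') are not part of Corollary~\ref{Cor3}, so the argument as written does not prove the statement.

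The fix is that the quantity actually entering the derivation of $\tau_2(n,p)$ and $\tau_3(n,p)$ in the proof of Corollary~\ref{Cor1} is not $\matNorm{X^B}_2$ but $\max_j\matNorm{X^B_{S_j}}_2$, where each $X^B_{S_j}$ is a $|B|\times|B|$ column-submatrix (the $S_j$ are the size-$|B|$ blocks in the peeling decomposition of $\Lerrt$). For \emph{square} $|B|\times|B|$ Gaussian submatrices the spectral-norm bound has no $\sqrt{p}$ term; one gets $\matNorm{X^B_{S_j}}_2\le\sqrt{\matNorm{\SigB}_2}\,(2+t)\sqrt{|B|}$, and a union bound over $\binom{p}{|B|}\le p^{|B|}$ choices of $S_j$ is handled by taking $t=2\sqrt{\log p}$. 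Squaring gives $\big(\max_j\matNorm{X^B_{S_j}}_2\big)^2\le 4(1+\sqrt{\log p})^2\matNorm{\SigB}_2\,|B|$, and now $|B|\le a\sqrt{n}$ from \ref{Con:2} together with $n\le 2h$ yields exactly the claimed $f(X^B)\sqrt{h\log p}$ with $f(X^B)=\tfrac{4\sqrt{2}\,a\,(1+\sqrt{\log p})^2\matNorm{\SigB}_2}{\sqrt{\log p}}$. In short: bound the max over $|B|$-column submatrices, not the full matrix; that is what kills the $p$-term and pins down the constants.
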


\section{Optimization for Trimmed Estimators} \label{sec:opt}
\def\AlgoSize{}

While the objective function $f(w,\theta)$ in~\eqref{EqnRobustGeneral} is non-convex in $(w,\theta)$, 
it simplifies for block $w$ or $\theta$ held fixed. 
Perhaps for this reason, prior algorithms for trimmed approaches \citep{LTS84,SparseLS} alternated between solving for $\theta$ and $w$.
Unfortunately, each solve in $\theta$ is as expensive as finding the original (untrimmed) estimator. 

Here, we take advantage of the fact that the computational complexity of the two subproblems in $\theta$ and $w$ are completely different. 
With $w$ fixed, the problem in $\theta$ is equivalent to classic high-dimensional problems, e.g. Lasso, 
which is typically solved by first order methods. 
%
%
In contrast, the problem in $w$ for fixed $\theta$ is the simple linear program
\begin{equation}\label{EqnComputeW}
\begin{aligned}
\minimize_{w \in \Delta_h}  \ &  \sum_{i=1}^n w_i \Lossi(\theta; Z_i) 
\end{aligned}
\end{equation}
with all dependence on the predictors  captured by the current losses $\Lossi(\theta; Z_i)$. 
The solution is obtained setting $w_i = 1$ for the $h$ smallest values of $\Lossi(\theta; Z_i)$,
and setting remaining $w_i$ to $0$.

We exploit structure, using partial minimization. 
Similar ideas have been used for optimizing a range of nonlinear least squares problems~\citep{Golub2003} 
as well as more general problems involving nuisance parameters~\citep{AravkinVanLeeuwen2012}.
Rather than an alternating scheme (similar to that of \cite{SparseLS} for least squares) where we solve \emph{multiple} `weighted'  regularized problems to completion, we can rewrite the problem as follows: 

\begin{align}\label{EqnRobustGeneral3}
	\minimize_{\theta \in \rho\mathbb{B}_{\R}} \widetilde{\mathcal{L}}(\theta)  + \lambda \R(\theta), \quad \widetilde{\mathcal{L}}(\theta)  :=  
	\min_{w \in \Delta_h} \frac{1}{h} \sum_{i=1}^n w_i\Lossi(\theta; Z_i) =\frac{1}{h} \sum_{i=1}^n w_i(\theta)\Lossi(\theta; Z_i)  . 
\end{align}
Problem~\eqref{EqnRobustGeneral3} is equivalent to~\eqref{EqnRobustGeneral}.  The reader can verify that $\widetilde{\mathcal{L}}(\theta)$ is {\it not smooth}\footnote{When $h=1$, trimming equates to minimizing the {\it minimum} of $\mathcal{L}_i$, a problem which is nonsmooth and nonconvex.}. 
However, partial minimization provides a way to modify any descent method for fitting an M-estimator 
to bear on the corresponding trimmed estimator~\eqref{EqnRobustGeneral3}. 
Algorithm~\ref{alg:trimmedGeneral} gives a description of the steps involved for the specific case of extending proximal gradient descent. 
The algorithm uses the proximal mapping, 
which for the case of $\ell_1$ regularization is the soft-thresholding operator defined as $[S_\nu(u)]_i = \mbox{sign}(u_i) \max(|u_i|-\nu, 0)$. We assume that we pick $\rho$ sufficiently large, so one does not need to enforce the constraint $\R(\theta) \leq \rho$ explicitly.  

\begin{algorithm}[t]
	\caption{\label{algo:alt}\AlgoSize Partial Minimization using Proximal Gradient Descent for~\eqref{EqnRobustGeneral3}}
	\label{alg:trimmedGeneral}
	\begin{algorithmic}
		\AlgoSize
		\STATE Initialize $\theta^{(0)}$, $t=0$ 
		\REPEAT
			\STATE Compute $w^{(t)}$ given $\theta^{(t)}$ as the \emph{global} minimum of \eqref{EqnComputeW}
			\STATE Given $w^{(t)}$, compute the direction $\mathcal{G}^{(t+1)} \leftarrow \frac{1}{h}\sum_{i=1}^n w^{(t)}_i \nabla_{\theta} \Lossi(\theta^{(t)};y_i,x_i)$
			\STATE Update $\theta^{(t+1)} \leftarrow \mathcal{S}_{\eta^{(t+1)} \lambda}(\theta^{(t)} - \eta^{(t+1)}  \mathcal{G}^{(t+1)})$, with $\eta^{(t)}$ selected using line search.
		\UNTIL{stopping criterion is satisfied}
	\end{algorithmic}
\end{algorithm}

When the loss~$\mathcal{L}$ is convex and smooth with Lipschitz continuous gradient, the proximal gradient has a global convergence theory (see e.g. \cite{nest_lect_intro}). Convergence 
of the extended Algorithm \ref{alg:trimmedGeneral} is analyzed in the following proposition.  
%
\begin{proposition}\label{ThmOpt}
	Consider any monotonic algorithm $\mathcal{A}$ for solving $\widehat{\theta} \in \argmin_\theta F(\theta):= \Loss(\theta) + \lambda \R(\theta)$, i.e. (i) $\mathcal{A}$ guarantees that $F(\theta^{k+1}) \leq F(\theta^k)$
and  (ii) for any fixed $w \in \Delta_h$, $\mathcal{A}$ produces converging sequence of $\{\theta^{(t)}\}$ when solving $\argmin_\theta F(\theta ; w) := f(w,\theta)$. 
	If $\mathcal{A}$ is extended to solve \eqref{EqnRobustGeneral3} using partial minimization~\eqref{EqnComputeW}, 
	the monotonic property is preserved, at least one limit point exists, and 
	 every limit point of the sequence $\{(\theta^{(t)},w^{(t)})\}$ is a stationary point of~\eqref{EqnRobustGeneral}. Moreover, if $F$ is convex, 
	 and estimators over each feasible data selection have different optimal values, 
	 then $w^{(t)}$ converge in finitely many steps, and the extended algorithm converges to a local minimum\footnote{$\Ltheta$  is a local minimum  of $g_1(\theta) := f(\theta, \Lw)$ and  $\Lw$ is a global minimum of $g_2(w) := f(\Ltheta, w)$.} of~\eqref{EqnRobustGeneral}.

\end{proposition}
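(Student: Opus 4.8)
The plan is to work with the \emph{reduced objective} $\phi(\theta):=\widetilde{\mathcal{L}}(\theta)+\lambda\R(\theta)=\min_{w\in\Delta_h}f(w,\theta)$ from \eqref{EqnRobustGeneral3}, and to show that one sweep of Algorithm~\ref{alg:trimmedGeneral} (the $w$-update followed by one step of $\mathcal{A}$) is a monotone descent iteration for $\phi$ over the compact feasible set (compactness comes from the safeguard constraint $\R(\theta)\le\rho$). First I would prove the monotonicity claim: since $w^{(t)}$ is a \emph{global} minimizer of $f(\cdot,\theta^{(t)})$ over $\Delta_h$, we have $\phi(\theta^{(t)})=f(w^{(t)},\theta^{(t)})$; since $\mathcal{A}$ is monotone on $F(\cdot\,;w^{(t)})=f(w^{(t)},\cdot)$, we get $f(w^{(t)},\theta^{(t+1)})\le f(w^{(t)},\theta^{(t)})$; and $\phi(\theta^{(t+1)})=f(w^{(t+1)},\theta^{(t+1)})\le f(w^{(t)},\theta^{(t+1)})$, so chaining gives $\phi(\theta^{(t+1)})\le\phi(\theta^{(t)})$. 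Because $\{\theta^{(t)}\}\subset\rho\mathbb{B}_{\R}$ and $\{w^{(t)}\}\subset\Delta_h$ are sequences in compact sets, Bolzano--Weierstrass yields at least one limit point; and since $\phi$ is continuous (an infimum over the compact $\Delta_h$ of jointly continuous functions) it is bounded below on $\rho\mathbb{B}_{\R}$, so the values $\phi(\theta^{(t)})$ decrease to some $\phi^\star$ with $\phi(\theta^\star)=\phi^\star$ at every limit point $\theta^\star$.

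Next, for the stationarity claim, fix a limit point $(\theta^\star,w^\star)$ attained along a subsequence $t_k$. The $w$-part follows by passing LP optimality to the limit: for every $w\in\Delta_h$ we have $\frac1h\sum_i w^{(t_k)}_i\Lossi(\theta^{(t_k)};Z_i)\le\frac1h\sum_i w_i\Lossi(\theta^{(t_k)};Z_i)$, and continuity of each $\Lossi$ gives $w^\star\in\argmin_{w\in\Delta_h}f(\theta^\star,w)$. For the $\theta$-part I would use a sufficient-decrease inequality: for the proximal-gradient instantiation, backtracking line search together with Lipschitz continuity of $\nabla_\theta\Loss(\cdot,w)$ on $\rho\mathbb{B}_{\R}$, uniform over $w\in[0,1]^n$ since there are finitely many $\Lossi$, forces $\eta^{(t)}\in[\eta_{\min},\eta_{\max}]$ with $\eta_{\min}>0$ and $f(w^{(t)},\theta^{(t+1)})\le f(w^{(t)},\theta^{(t)})-\tfrac{1}{2\eta_{\max}}\|\theta^{(t+1)}-\theta^{(t)}\|_2^2$. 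Combined with monotonicity of $\phi$, the telescoped bound gives $\|\theta^{(t+1)}-\theta^{(t)}\|_2\to 0$, hence also $\theta^{(t_k+1)}\to\theta^\star$. Passing to a further subsequence so that $\eta^{(t_k+1)}\to\eta^\star\in[\eta_{\min},\eta_{\max}]$ and using joint continuity of the proximal (soft-thresholding) map $\mathcal{S}$ in its argument and its parameter, together with continuity of $\mathcal{G}^{(t_k+1)}=\nabla_\theta\Loss(\theta^{(t_k)},w^{(t_k)})$, the update relation $\theta^{(t_k+1)}=\mathcal{S}_{\eta^{(t_k+1)}\lambda}\!\big(\theta^{(t_k)}-\eta^{(t_k+1)}\mathcal{G}^{(t_k+1)}\big)$ passes to the limit and yields $0\in\nabla_\theta\Loss(\theta^\star,w^\star)+\lambda\,\partial\R(\theta^\star)$, i.e. $\theta^\star$ is first-order stationary for $f(\cdot,w^\star)$ (the constraint $\R(\theta)\le\rho$ is inactive by assumption). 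Together with the $w$-part, $(\theta^\star,w^\star)$ is a stationary point of \eqref{EqnRobustGeneral}. For a general monotone $\mathcal{A}$ the same template applies once one records the standard facts that its iteration map is continuous in $(\theta,w)$, delivers a sufficient decrease of this form, and has fixed points that are stationary for $F(\cdot\,;w)$.

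For the convex case, the extra ingredient is that the solution of \eqref{EqnComputeW} returned by the sort-and-threshold rule is a \emph{vertex} of $\Delta_h$, so $\{w^{(t)}\}$ takes values in the finite set $V=\{\mathbf 1_H:\ |H|=h\}$. By the previous paragraph every limit point $(\theta^\star,w^\star)$ is stationary; since $f(\cdot,w)$ is convex for fixed $w$, stationarity of $\theta^\star$ means $\theta^\star$ is a \emph{global} minimizer of $f(\cdot,w^\star)$, so with $w^\star=\mathbf 1_{H^\star}$ we get $\phi^\star=\phi(\theta^\star)=f(w^\star,\theta^\star)=v_{H^\star}$, where $v_H$ is the optimal value of the regularized estimator over the data selection $H$. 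Since the $v_H$ are pairwise distinct by hypothesis, there is exactly one such $H^\star$; hence \emph{every} limit point of $\{w^{(t)}\}$ equals $\mathbf 1_{H^\star}$, and a sequence in a finite set with a unique limit point is eventually constant, so $w^{(t)}=\mathbf 1_{H^\star}$ for all $t\ge T$. For $t\ge T$ the iteration is exactly $\mathcal{A}$ applied to the convex problem $\min_\theta f(\cdot,\mathbf 1_{H^\star})$, which by hypothesis (ii) converges to some minimizer $\widetilde\theta$. Then $\widetilde\theta$ is a (global, hence local) minimizer of $g_1(\theta)=f(\theta,\mathbf 1_{H^\star})$ and $\mathbf 1_{H^\star}$ is a global minimizer of $g_2(w)=f(\widetilde\theta,w)$ (from the $w$-part of the stationarity argument evaluated at the limit), which is precisely the stated notion of a local minimum of \eqref{EqnRobustGeneral}.

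I expect the main obstacle to be the $\theta$-stationarity step in the general, non-convex setting: one must extract the sufficient-decrease constant and the step-size bounds \emph{uniformly} over the weights $w^{(t)}$, which change at every iteration, and then pass the nonsmooth first-order optimality relation through the limit; this is routine for proximal gradient with line search but needs the hypotheses on $\mathcal{A}$ to be stated carefully for the abstract version. A secondary, mild subtlety is the justification that the $w$-subproblem always returns a vertex, which is what confines $\{w^{(t)}\}$ to a finite set and makes the ``eventually constant'' argument in the convex case go through; this is immediate from the closed-form solution of \eqref{EqnComputeW}, and under the distinct-optimal-values hypothesis ties in that subproblem do not affect the conclusion.
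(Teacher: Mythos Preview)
Your proposal is correct and establishes all the claims, but your argument for stationarity of limit points takes a different route from the paper's. You push through a sufficient-decrease estimate for the proximal-gradient step (with uniformly bounded step sizes), telescope to obtain $\|\theta^{(t+1)}-\theta^{(t)}\|_2\to0$, and then pass the proximal fixed-point relation to the limit via continuity of the prox map and of $\nabla_\theta\Loss$. The paper instead exploits the combinatorial structure of the $w$-subproblem from the outset: since each $w^{(t)}$ can be taken to be a vertex of the capped simplex (a finite set with positive pairwise separation), along any subsequence converging to a limit point $(v,\overline\theta)$ the weights are eventually constant and equal to the vertex $v$; the subsequential iterates are then identified with iterates of $\mathcal{A}$ applied to the fixed problem $F(\cdot\,;v)$, so $\overline\theta$ inherits stationarity from the convergence guarantee on $\mathcal{A}$. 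Your analytic route is the standard one in nonsmooth first-order optimization and is arguably cleaner for the concrete proximal-gradient instance, but, as you yourself flag, extending it to an abstract monotone $\mathcal{A}$ forces you to import additional hypotheses (continuity of the iteration map in $(\theta,w)$, a uniform sufficient-decrease constant). The paper's vertex argument sidesteps these extra assumptions on $\mathcal{A}$ entirely, at the price of relying on vertex-valuedness of the $w$-update. In the convex case your argument is again different and somewhat more direct: you observe that the common limit value $\phi^\star$ must equal the optimal value $v_{H^\star}$ over the data selection attached to any limit vertex, so distinctness of optimal values forces a unique $H^\star$; the paper reaches the same conclusion by a contradiction argument that compares function values along two hypothetical subsequences with different limit vertices and invokes the descent property.
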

	
Finite convergence of the weights $w^{(t)}$ is an important point for practical implementation, since once 
the weights converge, one is essentially solving a single estimation problem, rather than 
a sequence of such problems. In particular, after finitely many steps, the extended algorithm inherits 
all properties of the original algorithm $\mathcal{A}$ for the M-estimator over the selected data.

\def\AlgoSize{}

\section{Simulated Data Experiments}\label{sec:exps}

We illustrate the generality of our approach by considering sparse logistic regression, trace-norm regularized multi-reponse regression and sparse GGMs  (For experiments with sparse linear models, see~\citet{SparseLS}).

\subsection {Simulations for Sparse Logistic Regression}
We begin with sparse logistic regression. We adopt an experimental protocol similar to~\cite{YTR13}. We consider $p=200$ features. The parameter vectors have  $k=\sqrt p$ non-zero entries sampled i.i.d. from $N(0,1).$  The data matrix $X$ is such that each of its $n$ observations is sampled from a standard Normal distribution $N(0,I_p).$  Given each observation, we draw a true class label from $\{0,1\}$ following the logistic regression model.  We show two scenarios, selecting either $\sqrt n$ or $0.1n$ samples with the highest amplitude of $\langle \theta^*, x_i \rangle$ and flipping their labels. We compare the $\ell_2$ errors over 100 simulation runs of the new estimator with those of vanilla Lasso for logistic regression, and with two extended Lasso methods for logistic regression of~\cite{YTR13} (with ``error in parameter'' and in ``error in output'') as the sample size $n$ increases. Figure~\ref{fig:logit} shows that the trimmed approach has both better performance (achieves lower errors), and is faster, matching the computational efficiency of the vanilla Lasso method. This result is anticipated by Proposition~\ref{ThmOpt}:  the weights $w^{(t)}$ converge in finitely many steps, and then we are essentially solving the Lasso with a fixed weight set thereafter. 

\begin{figure}[t]
	\centering
	\subfigure{\includegraphics[width=0.3\textwidth]{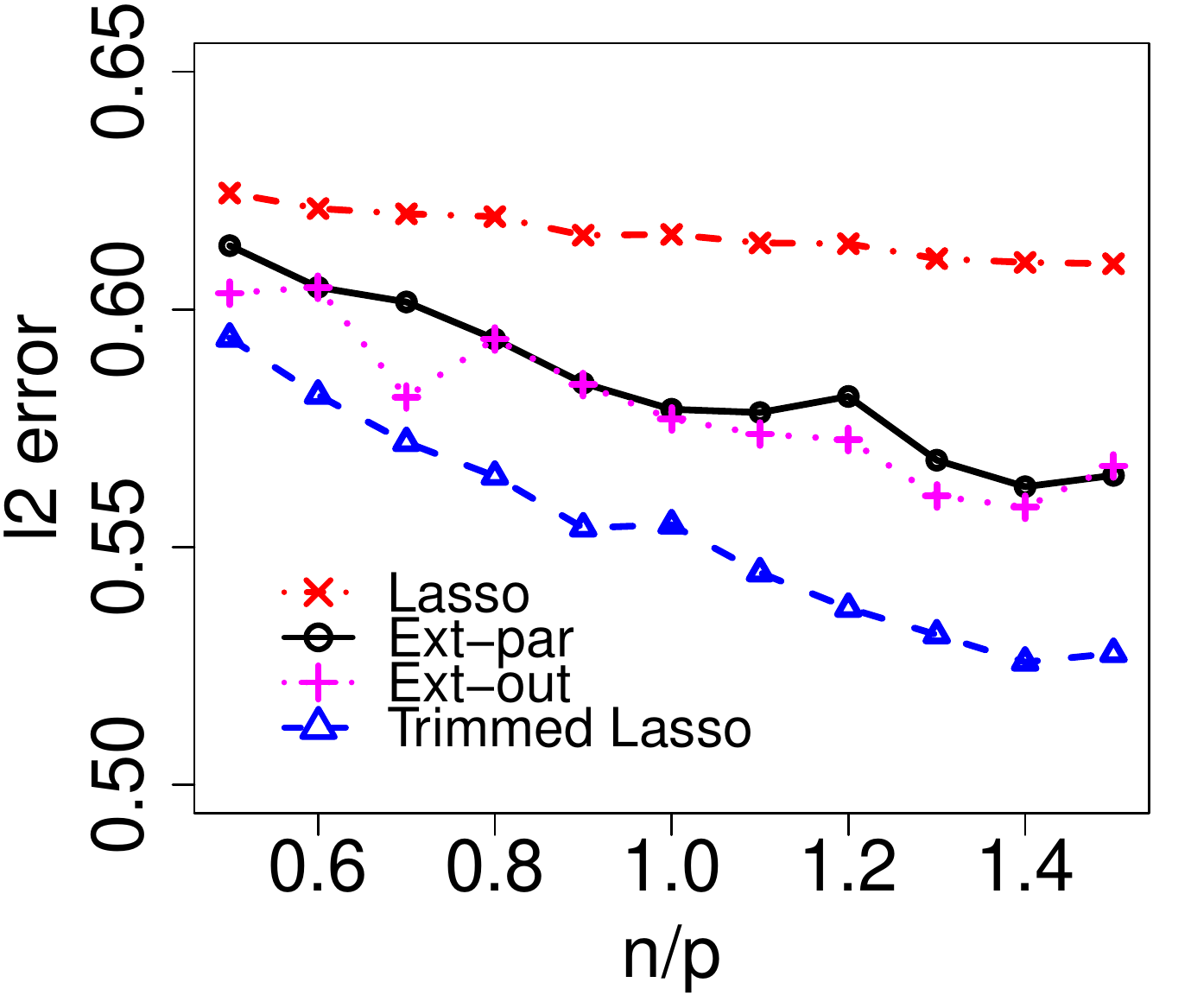}}
	\hspace{.5cm}
	\subfigure{\includegraphics[width=0.3\textwidth]{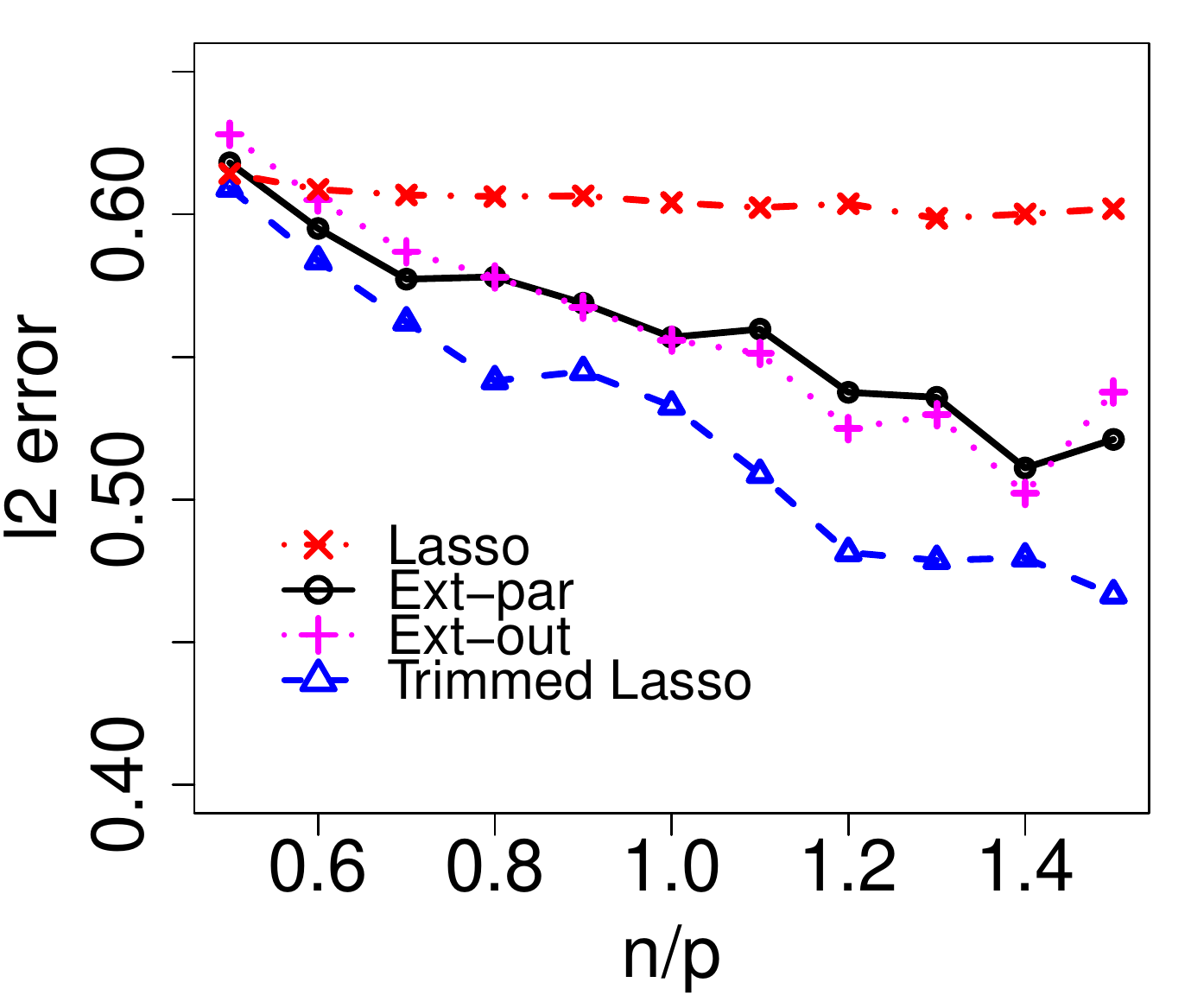}}
	\hspace{.5cm}
	\subfigure{\includegraphics[width=0.3\textwidth]{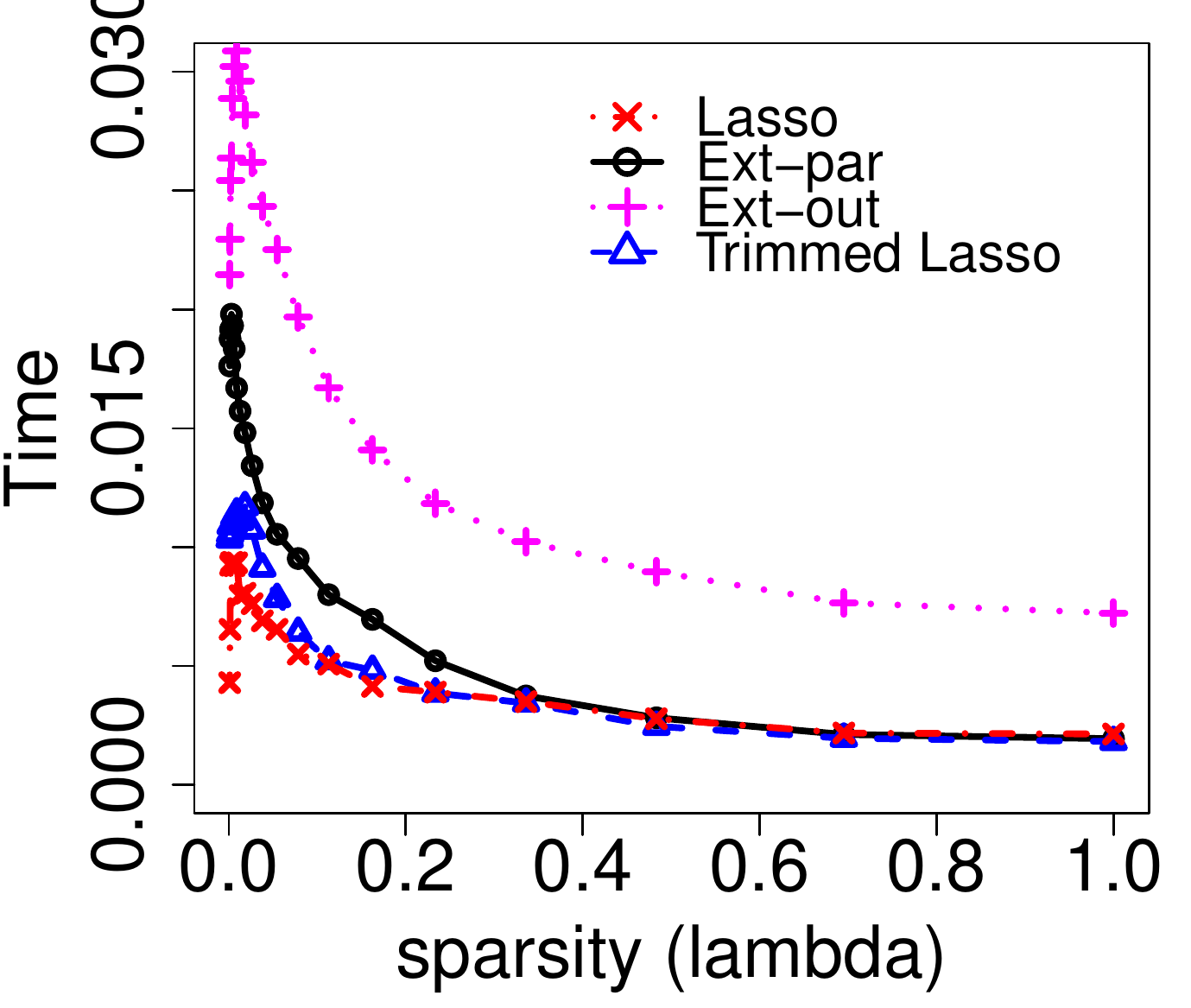}}
	\vspace{-.5cm}\caption{$\ell_2$ error vs.sample size $n$ under logistic regression model  (a) $\sqrt n$ corruptions (b) $0.1n$ corruptions. (c) Timing comparison for $0.1n$ corruptions  and $n=p.$}	
	\label{fig:logit}
\end{figure}

\subsection{Simulations for Trace-Norm Regularized Regression}
\begin{figure}[t]
	\centering
	\subfigure{\includegraphics[width=0.32\textwidth] {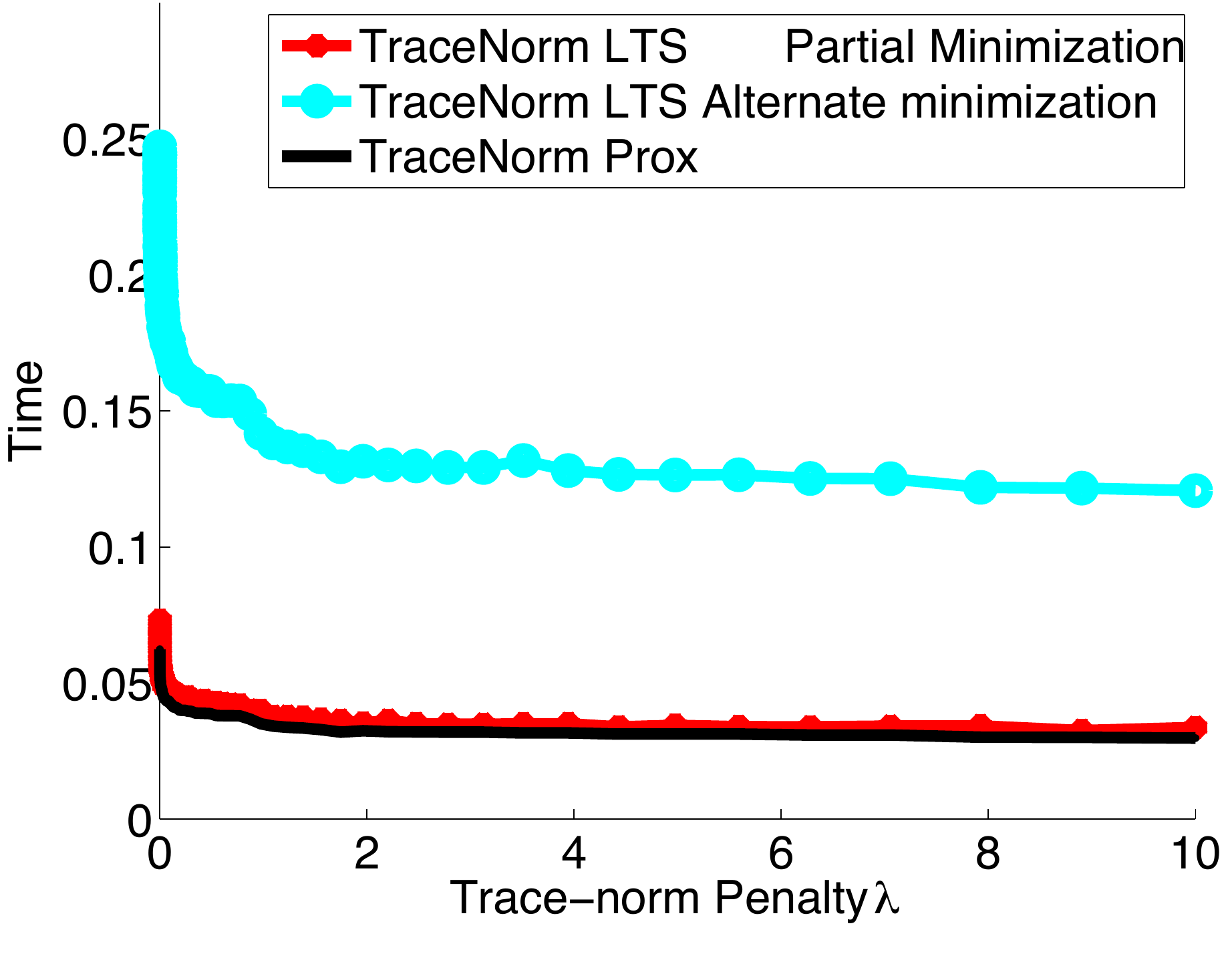}}
	\caption{Average timing of TraceNorm-LTS with partial minimization,TraceNorm-LTS with full alternate minimization, and TraceNorm-Prox under 20\% of contaminated data.}
	\label{fig:timing2}
\end{figure}

Beyond the $\ell_1$ penalty, we consider trace-norm regularized multi response regression.  We set $\R(\Theta)=\|\Theta\|_*$, for $\Theta\in\mathbb{R}^{p \times q}$. We consider $n=50$ samples, $p=300$ covariates, and $q=10$ responses. Each entry of $X$ is generated independently from $N(0,1).$ To generate the true low rank weights, we first sample a $p \times q$ matrix of coefficients, with each coefficient sampled independently from $N(0,1)$. We then set the true parameter matrix to the best rank 3 approximation of the sample, obtained using an SVD. For clean samples in $G$, we then set the error term as $\epsilon_i \sim   N(0,0.01).$ The contaminated terms are generated with an error term as  $\delta_i \sim N(2,1).$  We consider varying corruption levels ranging from $5\%$ to $30\%.$ The parameters are tuned as in the previous section and we present the average $\ell_2$ error based on 100 simulation runs. Figure~\ref{fig:timing2} further illustrates the computational advantage of the partial minimization scheme described in Section \ref{sec:opt} for general structures. 

\begin{table}[t!]
	\small
	\caption{Average $\ell_2$ error for comparison methods on simulated data under low-rank multi response linear models with contaminated data.} \label{ta::expresultsLR}
	\centering
\begin{tabular}{c|ccc}
\abovespace\belowspace
Contamination \% & No trimming &  Low-Rank LTS\\
\hline
5\% & 20.43 &  \textbf{19.20} \\
10\% & 33.49 & \textbf{25.10}\\
20\% & 33.70 &  \textbf{26.05} \\
30\% & 40.78& \textbf{30.10}\\
\hline
\end{tabular}
\end{table}


	\begin{figure}[t!]
		\addtolength{\subfigcapskip}{-0.08in}
		\centering
		\subfigure[M1]{\includegraphics[width=0.42\textwidth]{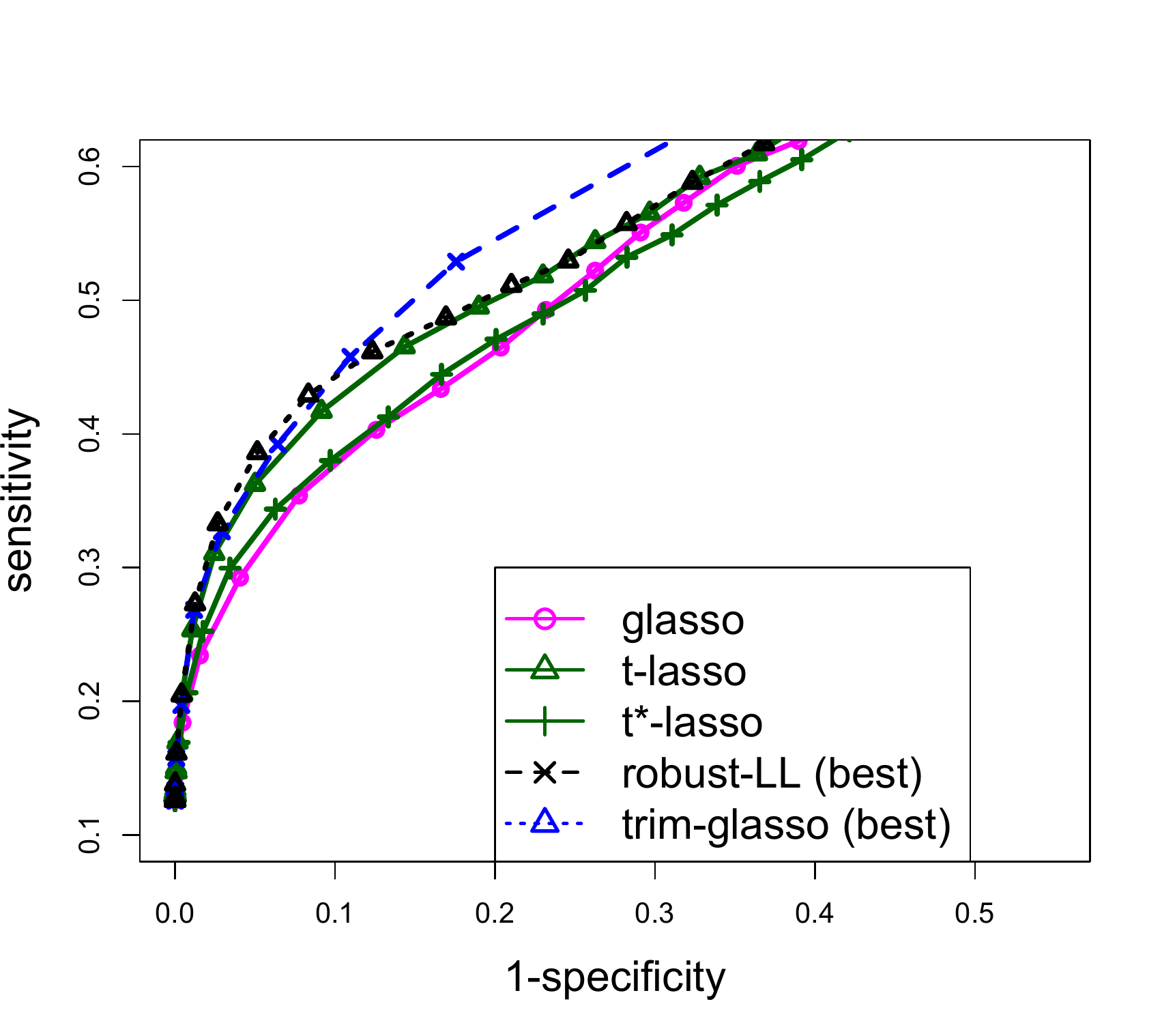}}
		\subfigure[M2]{\includegraphics[width=0.42\textwidth]{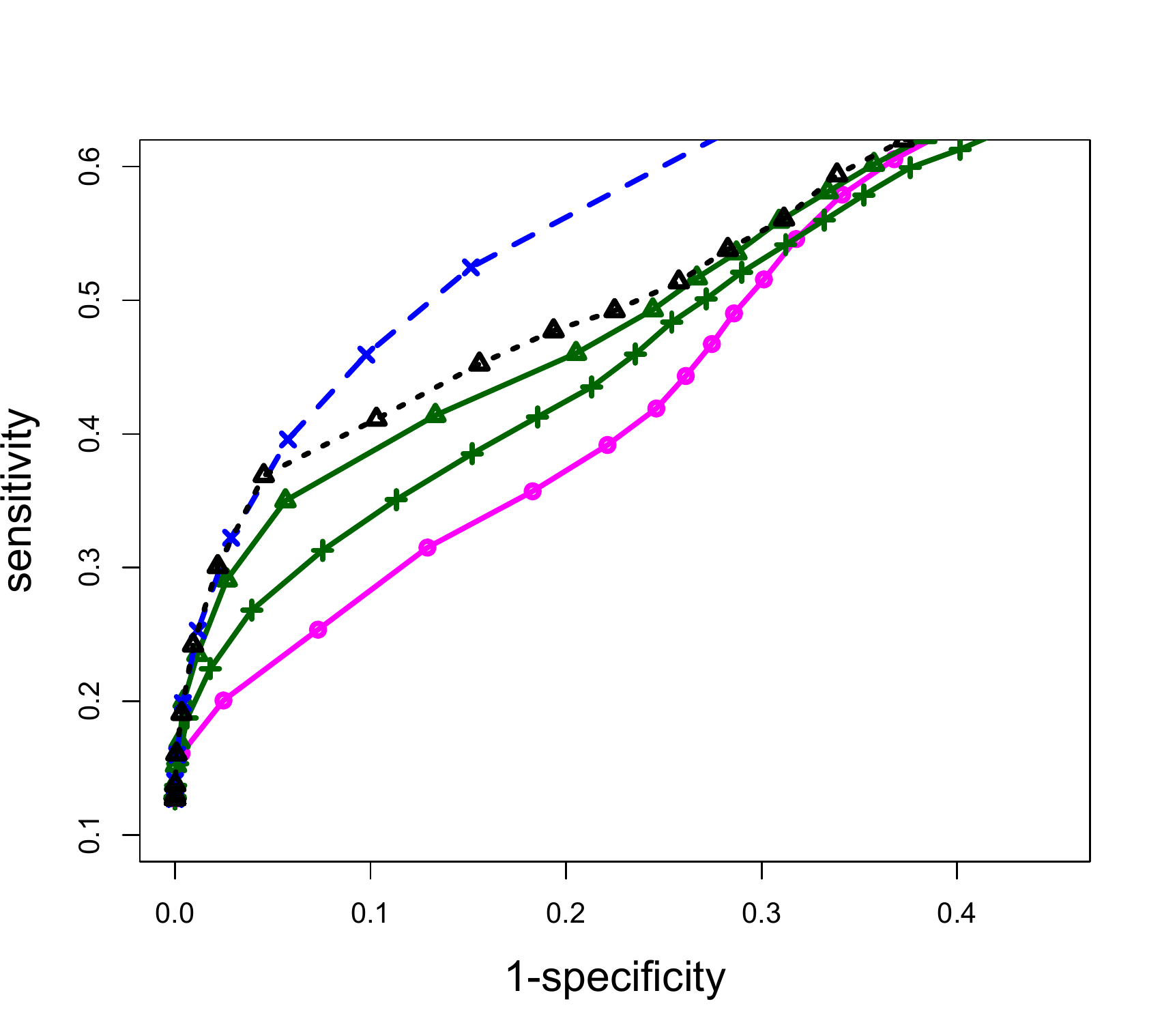}}\\
	 	\vspace{-.9cm}\subfigure[M3]{\includegraphics[width=0.42\textwidth]{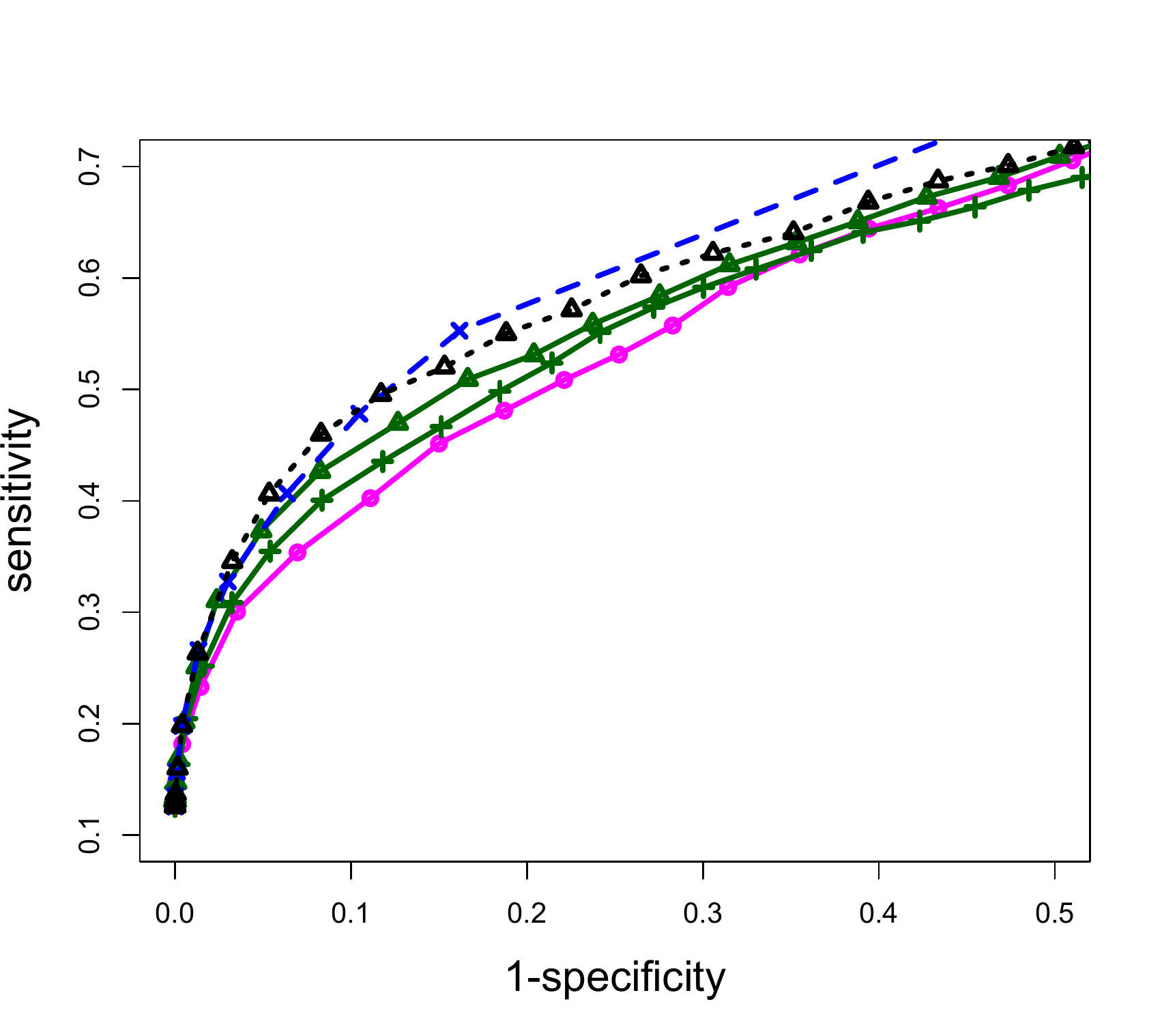}}
		\subfigure[M4]{\includegraphics[width=0.42\textwidth]{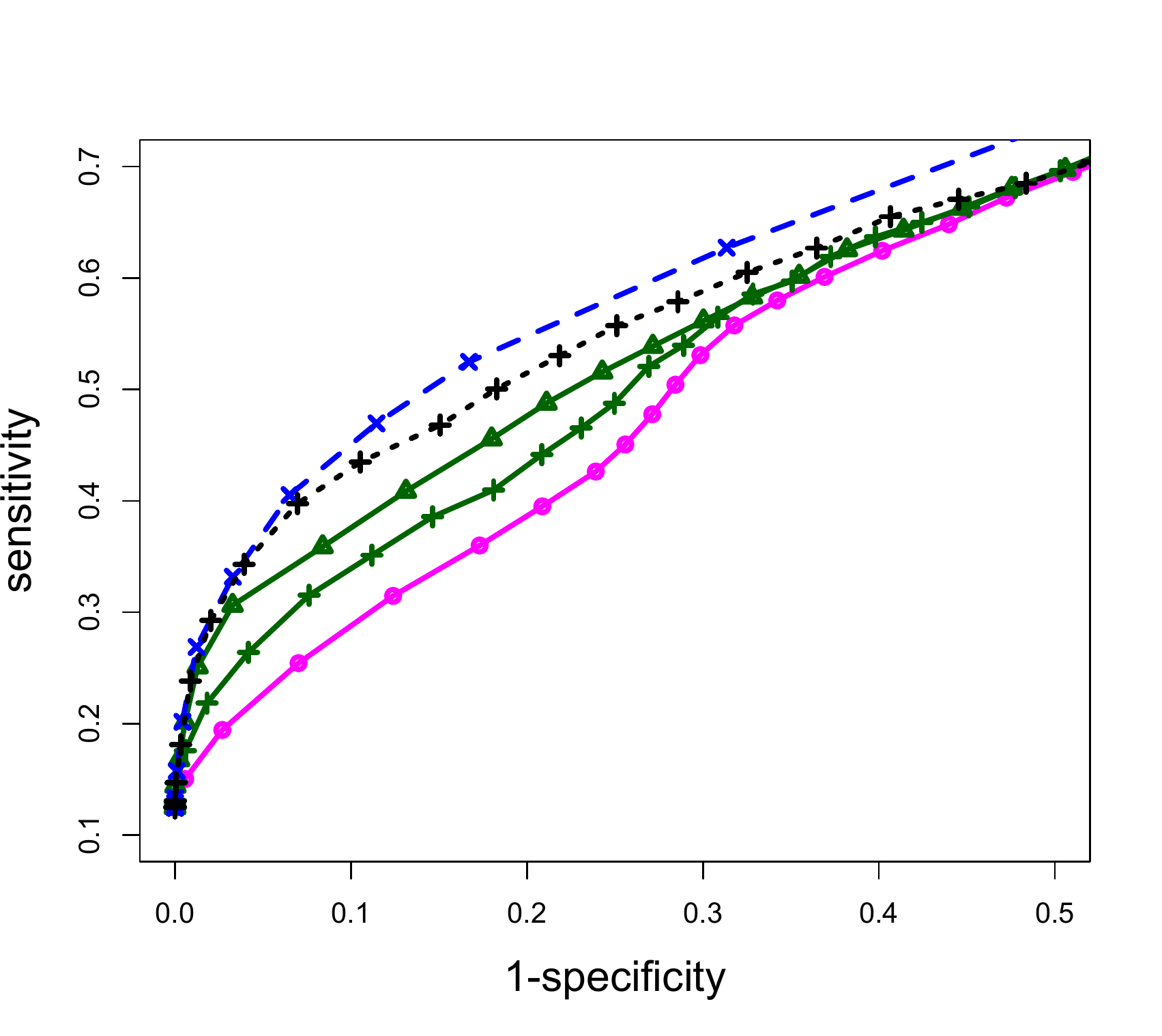}}
		\caption{Average ROC curves for the comparison methods for contamination scenarios M1-M4.}
		\label{fig:roc1}
	\end{figure}

\subsection{Simulations for Gaussian Graphical Models}

We compare the Trimmed Graphical Lasso ({trim-glasso}) algorithm against the vanilla Graphical Lasso({glasso})~\cite{FriedHasTib2007};  the  {t-lasso} and  {t*-lasso} methods~\cite{drton2011}, and {robust-LL}: the robustified-likelihood approach of~\cite{sun2012}.

Our simulation setup is similar to~\cite{sun2012} and is a akin to gene regulatory networks. Namely we consider four different scenarios where the outliers are generated from models with different graphical structures. Specifically, each sample is generated from the following mixture distribution:
\[
y_k \sim (1-p_0)N_p(0,\param^{-1}) + \frac{p_0}{2} N_p(-\mu, \param_o^{-1}) + \frac{p_0}{2} N_p(\mu, \param_o^{-1}),~~k=1,\ldots,n,
\]
where $p_o=0.1, n=100,$ and $p=150$. Four different outlier distributions are considered:
\begin{itemize}
	\item[] M1: $\mu=(1,\ldots,1)^T, \param_o=\tilde\param$, \quad  M2: $\mu=(1.5,\ldots,1.5)^T, \param_o=\tilde\param$,
	\item[] M3: $\mu=(1,\ldots,1)^T, \param_o=I_p$, \quad M4: $\mu=(1.5,\ldots,1.5)^T, \param_o=I_p$.
\end{itemize}

For each simulation run, $\param$ is a randomly generated precision matrix corresponding to a network with $9$ hub nodes simulated as follows. Let $A$ be the adjacency of the network. For all $i<j$ we set $A_{ij} = 1$ with probability 0.03, and zero otherwise. We set $A_{ji} = A_{ij}.$  We then randomly select 9 hub nodes and set the elements of the corresponding rows and columns of $A$ to one with probability 0.4 and zero otherwise.  Using $A$, the simulated nonzero coefficients of the precision matrix are sampled as follows. First we create a matrix $E$ so that $E_{i,j}=0$ if $A_{i,j}=0$, and $E_{i,j}$ is sampled uniformly from $[-0.75,-0.23] \cup [0.25,0.75]$ if $A_{i,j}\neq 0.$ Then we set $E=\frac{E + E^T}{2}.$ Finally we set $\param= E  +(0.1 - \Lambda_{\min}(E))I_p,$ where $\Lambda_{\min}(E)$ is the smallest eigenvalue of $E.$ 
$\tilde\param$  is a randomly generated precision matrix in the same way $\param$ is generated. 

For the  robustness parameter $\beta$ of the {robust-LL} method, we consider $\beta \in \{ 0.005, 0.01, 0.02,0.03\}$ as recommended in~\cite{sun2012}. For the {trim-glasso} method we consider  $\frac{100 h}{n}\in \{90, 85, 80\}.$ Since all the robust comparison methods converge to a stationary point, we tested various initialization strategies for the concentration matrix, including $I_p$, $(S + \lambda I_p)^{-1}$ and the estimate from {glasso}. We did not observe any noticeable impact on the results.

Figure~\ref{fig:roc1} presents the average ROC curves of the comparison methods over 100 simulation data sets for scenarios M1-M4 as the tuning parameter $\lambda$ varies. In the figure, for {robust-LL} and {trim-glasso} methods, we depict the best curves with respect to parameter $\beta$ and $h$ respectively.
The detailed results for all the values of $\beta$ and $h$ considered are provided in the appendix.

From the ROC curves we can see that our proposed approach is competitive compared the alternative robust approaches {t-lasso}, {t*-lasso} and {robust-LL.} The edge over {glasso} is even more pronounced for scenarios M2, M4. 
Surprisingly, {trim-glasso} with $h/n=80\%$ achieves superior sensitivity for nearly any specificity.


Computationally the {trim-glasso} method is also competitive compared to alternatives. The average run-time over the path of tuning parameters $\lambda$ is 45.78s for {t-lasso}, 22.14s for {t*-lasso}, 11.06s  for  {robust-LL}, 1.58s for {trimmed lasso}, 1.04s for {glasso}. Experiments were run on R in a single computing node with a Intel Core i5 2.5GHz CPU and 8G memory. For  {t-lasso},  {t*-lasso} and {robust-LL} we used the R implementations provided by the methods' authors. For {glasso} we used the {glassopath} package. 


\section{Application Genomic Analysis}\label{sec:real}
\begin{figure}[t]
		\centering
		\includegraphics[width=0.37\textwidth]{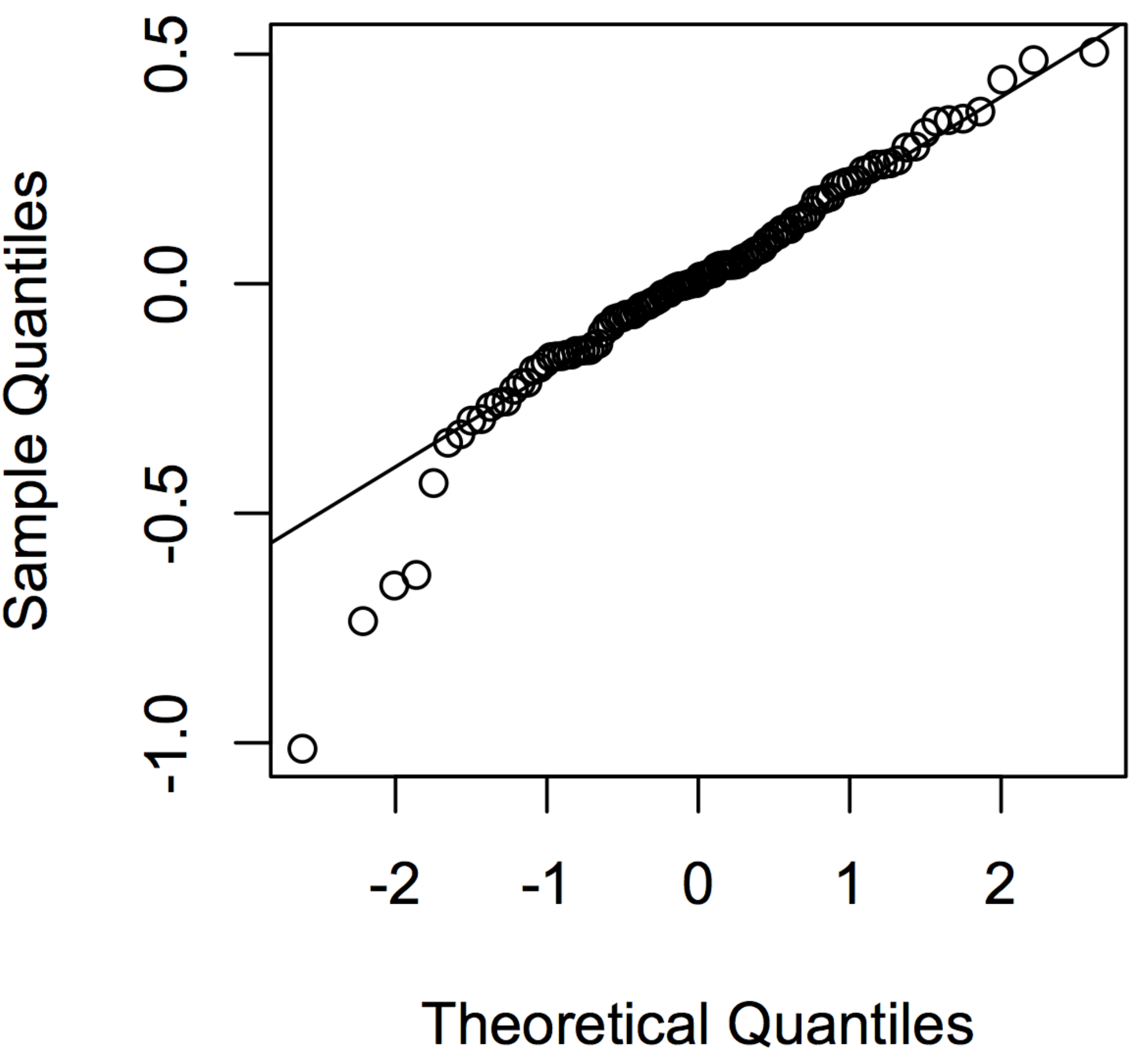}
		\caption{QQ-plots of fitted residuals for the Sparse-LTS method in the genomic study.}
		\label{fig:qqplot}
\end{figure}

\begin{table}[t]
\small
	\caption{Average Trimmed Mean Square Error from 10-fold cross validation for comparison methods on the Yeast dataset.} \label{ta::yeastmse}
	\begin{center}
		\begin{tabular}{l| c}
		\abovespace\belowspace
		Method & T-MSE\\
		\hline
		Lasso &  0.137\\
		LAD-Lasso & 0.132\\
		Extended Lasso &  0.093\\
		ROMP & 0.135\\
		{\bfseries Sparse-LTS} &   {\bfseries 0.081}\\
		\hline
	\end{tabular}
	\end{center}
		\caption{Marker position of SNPs selected on chromosome 8 by comparison methods for the Yeast dataset.} \label{ta::yeastsel}
		\begin{center}
		\begin{tabular}{c | c }
		\abovespace\belowspace
		 {\small LAD-Lasso} &   {\small Sparse-LTS}\\
		\hline
		111682 &   46007\\
		 213237 &	  46055\\
		  &111682\\
		 &  111683\\
		 & 111686\\
		 &111687\\
		&  111690\\
		\hline
		\end{tabular}
	\end{center}
\end{table}

\subsection{Analysis of Yeast Genotype and Expression data}

We apply Sparse-LTS 
, Extended Lasso~\citep{Nguyen2011b}, LAD Lasso~\citep{Wang2007},  standard Least Squares Lasso estimator~\citep{Tibshirani96}, and ROMP~\citep{Chen13}  to the analysis of yeast genotype and gene expression data. We employ the ``yeast'' dataset  from~\cite{Brem}. The data set concerns $n=112$ \emph{F1 segregants} from a yeast genetic cross between two strains:  \emph{BY} and \emph{RM}.  For each of these 112 samples, we observe $p=3244$ SNPs (These genotype data are our predictors $x$)  and focus on the gene expression of gene \emph{GPA1} (our response $y$), which is involved in pheromone response~\cite{Brem}. For both Sparse-LTS-Ada and Sparse LTS considering a total of $|B|=11$ contaminated observations lead to the best predictive performance on the uncontaminated data. In addition, the QQ-plots of the fitted residuals from the various comparison methods indicated heavy left tails (see Figure~\ref{fig:qqplot}). This suggests that it might be advisable to use robust methods. 

We compare the trimmed mean square error (T-MSE) computed from 10-folds cross validation for each method, where for each method we exclude the 11 observations with largest residual absolute error.  From Table~\ref{ta::yeastmse} we can see that
Sparse-LTS exhibit the smallest T-MSE. 



We conclude by examining the SNPs selected by the  methods achieving the lowest T-MSE: 
Sparse-LTS and LAD Lasso.  Out of $p=3244$ SNPs,  Sparse-LTS selected 30 SNPs, 
and  LAD Lasso chose 61 SNPs. Table~\ref{ta::yeastsel} provides a list of the SNPs selected on chromosome 8, which is where gene \emph{GPA1} resides. In the dataset, there is a total of 166 SNPs on chromosome 8. From the table we can see that there is some overlap in terms of the selected SNPs across the various methods. Sparse-LTS tends to select a larger number of SNPs on chromosome 8 even though it selects fewer SNPs in total (namely within and beyond chromosome 8). Five of these are very close to \emph{GPA1} which is consistent with the fact that \emph{GPA1} can directly inhibit the mating signal by binding to its own subunit~\cite{mating}.

\subsection{Application to the analysis of Yeast Gene Expression Data}

We analyze a yeast microarray dataset generated by~\cite{brem2005landscape}. The dataset concerns $n=112$ yeast segregants (instances). We focused on $p=126$ genes (variables) belonging to cell-cycle pathway as provided by the KEGG database~\cite{kegg}.  For each of these genes we standardize the gene expression data to zero-mean and unit standard deviation. We observed that the expression levels of some genes are clearly not symmetric about their means and might include outliers. For example
the histogram of gene \emph{ORC3} is presented in Figure~\ref{fig:hist}(a). 
\begin{figure}[t]
		\centering
		\begin{tabular}{cc}

\includegraphics[width=0.45\textwidth]{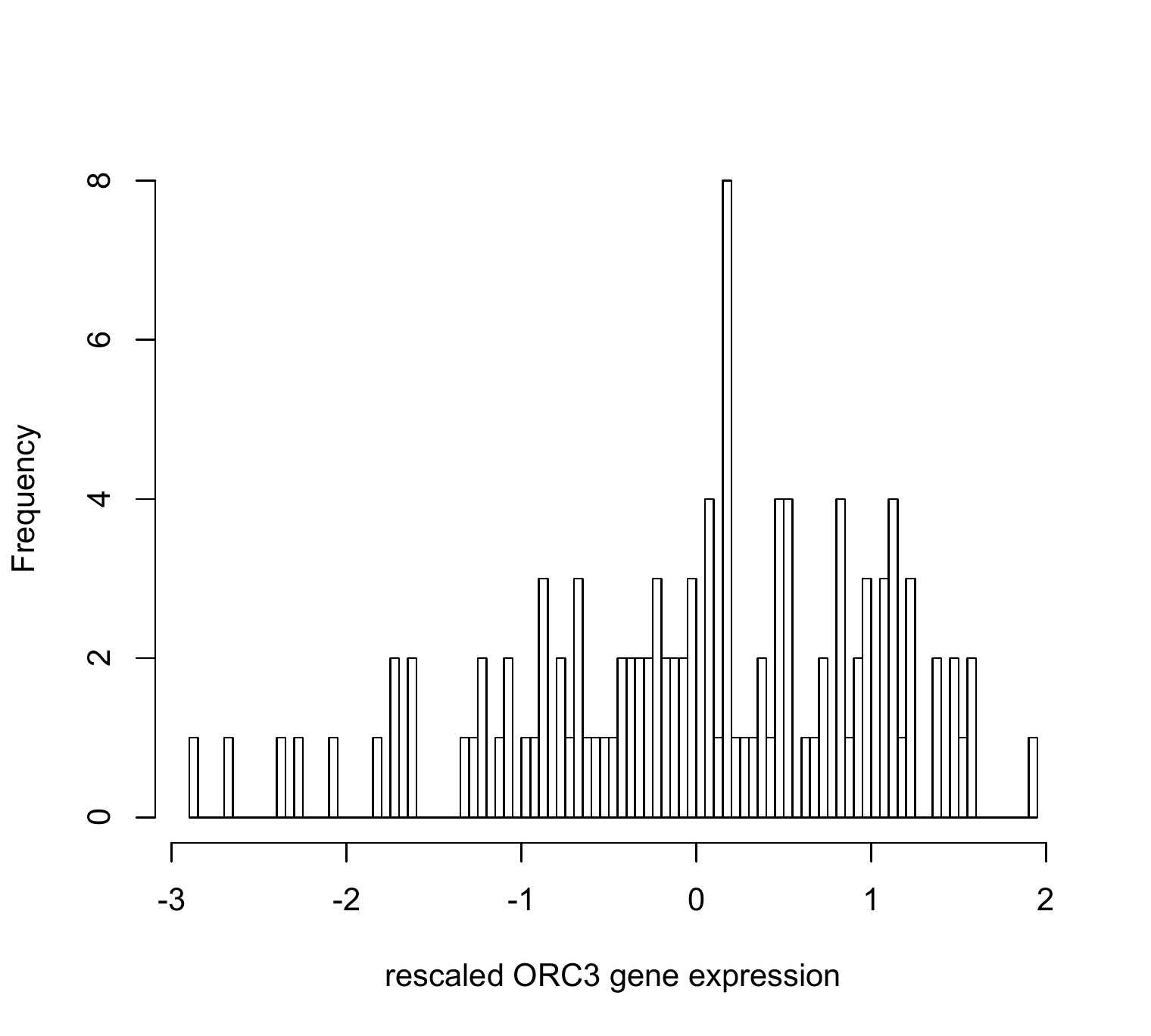} & \includegraphics[width=0.45\textwidth]{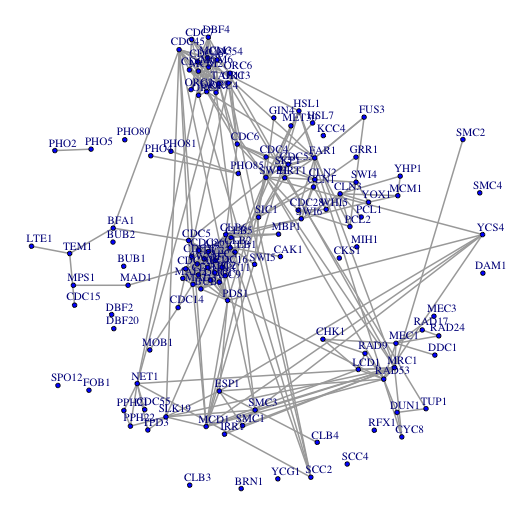}
\end{tabular}
		\caption{(a) Histogram of standardized gene expression levels for gene \emph{ORC3}.  (b) Network estimated by {trim-glasso}.}
		\label{fig:hist}
	\end{figure}
	For the {robust-LL} method we set $\beta=0.05$  and for {trim-glasso} we use $h/n=80\%.$
We use 5-fold-CV to choose the tuning parameters for each method. After $\lambda$ is chosen for each method, we rerun the methods using the full dataset to obtain the final precision matrix estimates. 

Figure~\ref{fig:hist}(b) shows the cell-cycle pathway estimated by our proposed  method. For comparison the  cell-cycle pathway from the KEGG~\cite{kegg} is provided in Figure~\ref{fig:cell}.
\begin{figure}[!ht]
		\centering		
		\includegraphics[width=\textwidth]{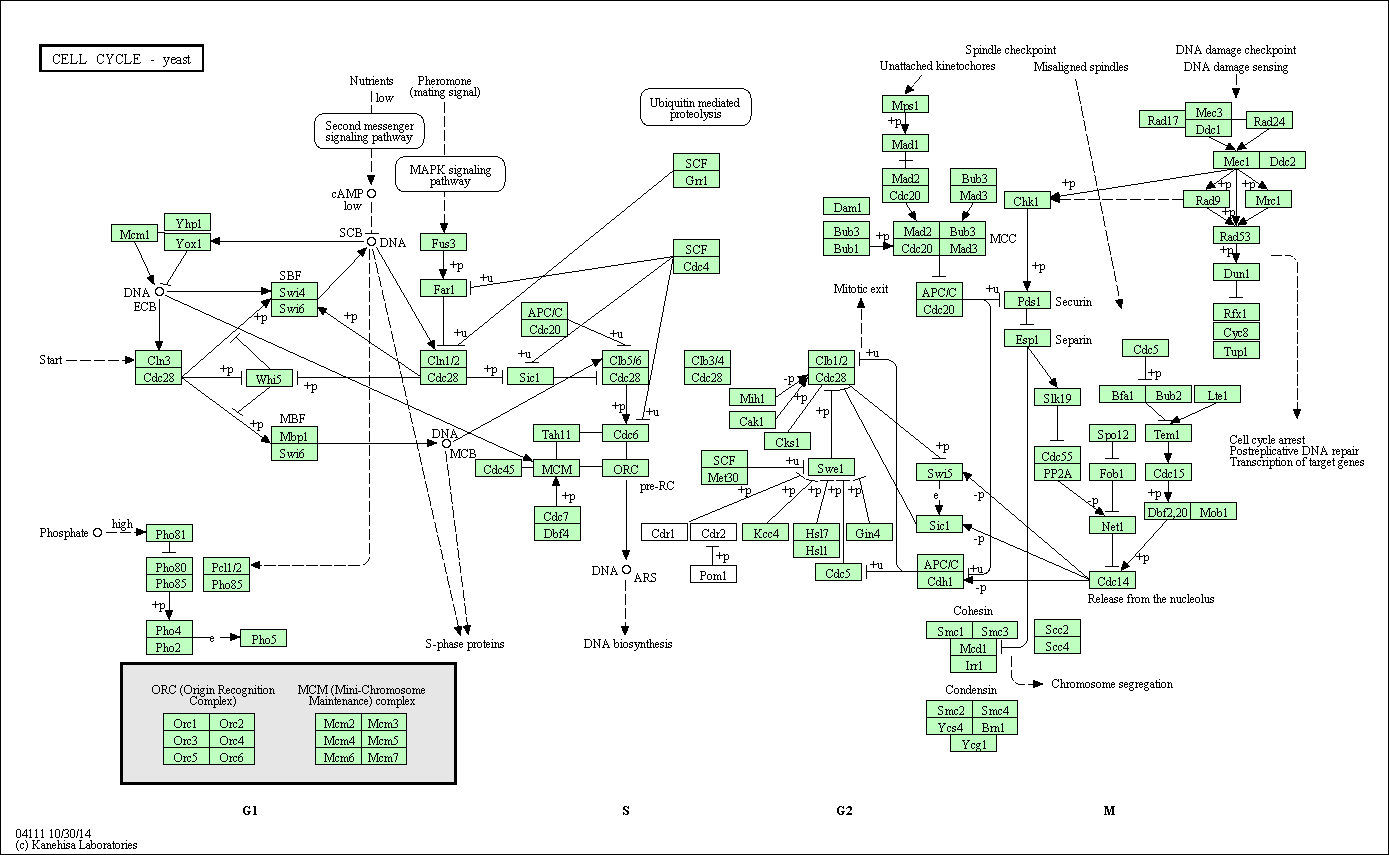}
		\caption{Reference Yeast Cell Signaling Network from the KEGG database~\citep{kegg}.}
		\label{fig:cell}
\end{figure}
It is important to note that the KEGG graph corresponds to what is currently known about the pathway. It should \emph{not} be treated as the ground truth. Certain discrepancies between KEGG and estimated graphs may also be caused by inherent limitations in the dataset used for modeling. For instance, some edges in cell-cycle pathway may not be observable from gene expression data. Additionally, the perturbation of cellular systems might not be strong enough to enable accurate inference of some of the links. 

{glasso} tends to estimate more links than the robust methods. We postulate that the lack of robustness might result in inaccurate network reconstruction and the identification of spurious links.  Robust methods tend to estimate networks that are more consistent with that from the KEGG ($F_1$-score of 0.23 for {glasso}, 0.37 for {t*-lasso}, 0.39 for {robust-NLL} and 0.41 for {trim-glasso}, where the $F_1$ score is the harmonic mean between precision and recall). For instance our approach recovers several characteristics of the KEGG pathway. For instance, genes \emph{CDC6}  (a key regulator of DNA replication playing important roles in the activation and maintenance of the checkpoint mechanisms coordinating S phase and mitosis) and \emph{PDS1} (essential gene for meiotic progression and mitotic cell cycle arrest) are identified as a hub genes, while genes \emph{CLB3,BRN1,YCG1} are unconnected to any other genes.

\section{Concluding Remarks}
We presented a family of trimmed estimators for a wide class of structured high-dimensional problems. We provided general results on their statistical convergence rates and consistency.  In particular our results for sparse linear regression and gaussian graphical models allow to precisely characterize the impact of corruptions on the statistical performance of the resulting estimatiors, while recovering the rates of their `untrimmed' counterparts under clean data. 
We showed how to efficiently adapt existing optimization algorithms to solve the modified trimmed problems. 
Relevant directions for future work include specializing our theoretical analysis to generalized linear models, applying and analyzing trimmed approaches for more general structural regularizations, and the study of concomittent selection of the amount of trimming.

\newpage

{\small
\bibliographystyle{asa}
\bibliography{sml,LTS,Elem_GLM,robggm}
}

\newpage
\appendix
\section*{Appendix}

%
%

\section{Proof of Theorem \ref{ThmGeneral}}
We use the shorthand for local optimal error vector: $\Lerrt := \Ltheta - \Ttheta$ and $\Lerrw := \Lw - \Tw$ where $(\Ltheta,\Lw)$ is an \emph{arbitrary} local optimum of $M$-estimator of \eqref{EqnRobustGeneral}. Our proof mainly uses the fact that $(\Ltheta,\Lw)$ is a local minimum of \eqref{EqnRobustGeneral} satisfying
\begin{align*}
	\biginner{\nabla_{\theta} \Loss\big(\Ttheta + \Lerrt,\Tw+\Lerrw \big)}{\Ltheta - \theta}  \leq - \biginner{\partial \lambda\|\Ttheta + \Lerrt\|_1}{\Ltheta - \theta} \, \quad \text{for any feasible } \theta.
\end{align*}
This inequality comes from the first order stationary condition (see \citet{LW15} for details) in terms of only $\theta$ fixing $w$ at $\Lw$. In order to provide the complete proof of the theorem, we need to define the set of notations on the model space, perturbation space and  corresponding projections following \citet{NRWY12}. The sparse LTS \eqref{EqnRobustLS} is a typical example of \eqref{EqnRobustGeneral}, and such notations can be naturally defined based on the true support set $S$. In this proof, we specifically focus on the case with $\R(\cdot) := \|\cdot\|_1$ for notational simplicity, but statements here can be seamlessly extendible for the general regularizer $\R(\cdot)$ and the appropriately defined model/perturbation spaces. 

If we take $\theta = \Ttheta$ above, we have
\begin{align}\label{EqnLocalMinima}
	& \biginner{\nabla_{\theta} \Loss\big(\Ttheta + \Lerrt,\Tw+\Lerrw \big)}{\Lerrt} \leq - \biginner{\partial \lambda\|\Ttheta + \Lerrt\|_1}{\Lerrt} \overset{(i)}{\leq}  \lambda( \|\Ttheta\|_1 - \|\Ltheta\|_1 )\nonumber\\\
	\leq \, &  \lambda( \|\Ttheta\|_1 + \|\Lerrt_{S^c}\|_1  - \|\Lerrt_{S^c}\|_1  - \|\Ltheta\|_1 ) = \lambda( \|\Ttheta + \Lerrt_{S^c}\|_1  - \|\Lerrt_{S^c}\|_1  - \|\Ltheta\|_1 )  \nonumber\\
	\overset{(ii)}{\leq} \, & \lambda\big( \|\Ttheta + \Lerrt_{S^c} + \Lerrt_{S}\|_1 + \|\Lerrt_{S}\|_1 - \|\Lerrt_{S^c}\|_1  - \|\Ltheta\|_1 \big) = \lambda(\|\Lerrt_{S}\|_1 - \|\Lerrt_{S^c}\|_1 ) \, ,
\end{align}
where $S$ is true support set of $\theta^*$, the inequalities $(i)$ and $(ii)$ hold by respectively the convexity and the triangular inequality of $\ell_1$ norm. 

Now, by the RSC condition in \ref{Con:rsc}, we obtain
\begin{align*}
	& \RSCcon \|\Lerrt\|_2^2  - \RSCtolOne \|\Lerrt\|_1^2 \nonumber\\
	\leq \, & \biginner{\nabla_{\theta} \Loss\big(\Ttheta + \Lerrt,\Tw\big) - \nabla_{\theta}\Loss\big(\Ttheta,\Tw\big) }{\Lerrt} \nonumber\\
	= \, & \Biginner{\nabla_{\theta} \Loss\big(\Ttheta + \Lerrt,\Tw + \Lerrw \big) - \nabla_{\theta} \Loss\big(\Ttheta + \Lerrt,\Tw + \Lerrw \big) + \nabla_{\theta} \Loss\big(\Ttheta + \Lerrt,\Tw\big) - \nabla_{\theta}\Loss\big(\Ttheta,\Tw\big) }{\Lerrt}\, ,
\end{align*}
which is equivalent with 
\begin{align}\label{EqnMainTmp1}
	& \RSCcon \|\Lerrt\|_2^2  - \RSCtolOne \|\Lerrt\|_1^2 + \biginner{\nabla_{\theta} \Loss\big(\Ttheta + \Lerrt,\Tw + \Lerrw \big) - \nabla_{\theta} \Loss\big(\Ttheta + \Lerrt,\Tw\big)}{\Lerrt} \nonumber\\
	\leq \, & \biginner{\nabla_{\theta} \Loss\big(\Ttheta + \Lerrt,\Tw + \Lerrw \big) - \nabla_{\theta}\Loss\big(\Ttheta,\Tw\big) }{\Lerrt}.
\end{align}
Combining \eqref{EqnLocalMinima}, \eqref{EqnMainTmp1} and \ref{Con:inc} yields 
\begin{align*}
	& \RSCcon \|\Lerrt\|_2^2  - \RSCtolOne \|\Lerrt\|_1^2 - \IncConOne \|\Lerrt\|_2 - \IncConTwo \|\Lerrt\|_1 \nonumber\\ 
	\leq \, & - \biginner{\nabla_{\theta}\Loss\big(\Ttheta,\Tw\big) }{\Lerrt} + \lambda\, \big(\|\Lerrt_{S}\|_1 - \|\Lerrt_{S^c}\|_1 \big) \nonumber\\
	\leq \, & \big\| \nabla_{\theta}\Loss\big(\Ttheta,\Tw\big) \big\|_\infty \|\Lerrt\|_1 + \lambda\, \big(\|\Lerrt_{S}\|_1 - \|\Lerrt_{S^c}\|_1 \big) \, .
\end{align*}
Since the theorem assumes $\max \big\{ \| \nabla_{\theta}\Loss\big(\Ttheta,\Tw\big) \|_\infty, 2\rho\RSCtolOne + \IncConTwo  \big\} \leq \frac{\lambda}{4}$, we can conclude that 
\begin{align}\label{EqnMainTmp2}
	& 0 \leq \RSCcon \|\Lerrt\|_2^2  \nonumber\\
	\leq \, &  \big\| \nabla_{\theta}\Loss\big(\Ttheta,\Tw\big) \big\|_\infty \|\Lerrt\|_1 + \lambda\, \big(\|\Lerrt_{S}\|_1 - \|\Lerrt_{S^c}\|_1 \big) \nonumber\\
		& \quad + \Big( 2 \rho \RSCtolOne + \IncConTwo \Big) \|\Lerrt\|_1 + \IncConOne \|\Lerrt\|_2 \nonumber\\
	\leq \, & \frac{3\lambda}{2}\|\Lerrt_{S}\|_1 - \frac{\lambda}{2}\|\Lerrt_{S^c}\|_1  + \IncConOne \|\Lerrt\|_2 .
\end{align}
As a result, we can finally have an $\ell_2$ error bound as follows:
\begin{align*}
	& \RSCcon \|\Lerrt\|_2^2  \leq \frac{3\lambda}{2}\|\Lerrt_{S}\|_1  + \IncConOne \|\Lerrt\|_2 \nonumber\\
	\leq \, & \frac{3\lambda\sqrt{k}}{2}\|\Lerrt_{S}\|_2  + \IncConOne \|\Lerrt\|_2 \leq \Big( \frac{3\lambda\sqrt{k}}{2}  + \IncConOne \Big) \|\Lerrt\|_2 
\end{align*}
implying that 
\begin{align*}
	\|\Lerrt\|_2 \leq \frac{1}{\RSCcon}\Big( \frac{3\lambda\sqrt{k}}{2}  + \IncConOne \Big)  \, .
\end{align*}
At the same time in order to derive $\ell_1$ error bound, we again use the inequality by \eqref{EqnMainTmp2}:
\begin{align*}
	\|\Lerrt_{S^c}\|_1 \leq 3 \|\Lerrt_{S}\|_1  + \frac{2}{\lambda} \IncConOne \|\Lerrt\|_2 \,.
\end{align*}
Hence, 
\begin{align*}
	\|\Lerrt\|_1 \leq \, & \|\Lerrt_{S}\|_1 + \|\Lerrt_{S^c}\|_1 \leq 4 \|\Lerrt_{S}\|_1  + \frac{2}{\lambda} \IncConOne \|\Lerrt\|_2 \leq 4\sqrt{k} \|\Lerrt_{S}\|_2  + \frac{2}{\lambda} \IncConOne \|\Lerrt\|_2 \nonumber\\
	\leq \, & \Big( 4\sqrt{k} + \frac{2}{\lambda} \IncConOne\Big) \|\Lerrt\|_2 \leq \frac{2}{\lambda\, \RSCcon}\Big( 2\lambda \sqrt{k} + \IncConOne\Big)^2 \, ,
\end{align*}
which completes the proof.

\section{Proof of Corollary \ref{Cor:LS} and Corollary \ref{Cor:XCor} (Results for LTS)}

We begin with specifying \ref{Con:rsc} and \ref{Con:inc} for the showcasing example of \eqref{EqnRobustLS}:
\begin{align}
	& \frac{1}{h}\sum_{i=1}^n \Tw_i \inner{x_i}{\Lerrt}^2 \geq \RSCcon \|\Lerrt\|_2^2 - \RSCtolOne \R(\Lerrt)^2\, , \quad \text{and} \label{EqnLSrsc}\\
	& \frac{1}{h}\sum_{i=1}^n \Lerrw_i \big(\inner{x_i}{\Ttheta+\Lerrt} - y_i \big) \inner{x_i}{\Lerrt} \geq  - \IncConOne \|\Lerrt\|_2 - \IncConTwo \R(\Lerrt) \, . \label{EqnLSInc}
\end{align}
 
In order to directly utilize Theorem \ref{ThmGeneral} for linear models, we only need to show that \eqref{EqnLSrsc} (for the condition \ref{Con:rsc}) and \eqref{EqnLSInc} (for \ref{Con:inc}) hold. Throughout the proof, we use the fact that all elements in $\Lerrw$ corresponding to $G$ (set of good examples) are all zeros: $\Lerrw_G = {\bf 0}$ by construction.

First, consider the condition \ref{Con:rsc} in \eqref{EqnLSrsc}: $\frac{1}{h}\sum_{i=1}^n \Tw_i \inner{x_i}{\errt}^2$. Recall that we constructed $\Tw$ as follows: $\Tw_i$ is simply set to $\Lw_i$ if $i \in G$, and $\Tw_i =0$ for $i \in B$. Hence, by construction, $\sum_{i \in G} \Tw_i \geq h - (n-h)$ (since ${\bf 1}^\top w = h$), and at least $\frac{h-(n-h)}{2}$ samples in $G$ have $\Lw_i$ (therefore $\Tw_i$) larger than $\frac{h-(n-h)}{2h}$. Let $\widebar{G}$, which is the subset of $G$, be the set of such samples.

Then, $\frac{1}{h}\sum_{i=1}^n \Tw_i \inner{x_i}{\errt}^2$ can be lower bounded as follows:
\begin{align*}
	\frac{1}{h}\sum_{i=1}^n \Tw_i \inner{x_i}{\errt}^2 = \frac{1}{h}\sum_{i\in G} \Tw_i \inner{x_i}{\errt}^2 \geq \frac{1}{h}\sum_{i\in \bar{G}} \Tw_i \inner{x_i}{\errt}^2 \geq \frac{h-(n-h)}{2h^2}\sum_{i\in \bar{G}} \inner{x_i}{\errt}^2 \,. 
\end{align*}
Noting that all $x_i \in \widebar{G}$ are uncorrupted and iid sampled from $N(0,\SigG)$, we can appeal to the result in \citet{Raskutti2010}: with probability at least $1-c_1\exp \big(-c_2 |\widebar{G}|\big)$,
\begin{align}
	\frac{1}{|\widebar{G}|}\sum_{i\in \bar{G}} \inner{x_i}{\errt}^2 \geq \kappa_1 \|\errt\|_2^2  - \kappa_2 \frac{\log p}{|\widebar{G}|} \|\errt\|_1^2 \quad \text{for all } \errt \in \reals^p
\end{align}
where $\kappa_1$ and $\kappa_2$ are strictly positive constants depending only on $\SigG$. Therefore, 
\begin{align*}
	\frac{1}{h}\sum_{i=1}^n \Tw_i \inner{x_i}{\errt}^2 \geq  \frac{\big(h-(n-h)\big) |\widebar{G}|}{2h^2} \bigg(\kappa_1 \|\errt\|_2^2  - \kappa_2 \frac{\log p}{|\widebar{G}|} \|\errt\|_1^2  \bigg) \, , 
\end{align*}
hence, \eqref{EqnLSrsc} holds with 
\begin{align}\label{EqnLSrscConst}
	& \RSCcon = \frac{\kappa_1(2h-n)^2 }{4h^2}  \, , \, \RSCtolOne = \frac{\kappa_2(2h-n)\log p}{2h^2} 
\end{align}
since $|\widebar{G}| \geq h-(n-h)$ as discussed.

Now, we consider the condition \ref{Con:inc} in \eqref{EqnLSInc}.
\begin{align*}
	&\frac{1}{h}\sum_{i=1}^n \errw_i \big(\inner{x_i}{\Ttheta+\errt} - y_i \big) \inner{x_i}{\errt} = \frac{1}{h}\sum_{i=1}^n \errw_i \inner{x_i}{\errt}^2 - \frac{1}{h}\sum_{i=1}^n \errw_i \big(\epsilon_i + \delta_i\big) \inner{x_i}{\errt} \nonumber\\
	\geq \, & - \frac{1}{h}\sum_{i=1}^n \errw_i \big(\epsilon_i + \delta_i\big) \inner{x_i}{\errt} 
\end{align*}
where the inequality comes from \eqref{EqnLinearModels} and from the fact that $\errw_i$ is always greater than $0$: if $i \in G$, $\errw_i = 0$, and if $i \in B$, $\errw_i := \Lw_i - \Tw_i \geq 0$ since $\Lw_i \geq 0$ and $\Tw_i =0$.   

Now, we follow similar strategy as in \citet{Nguyen2011b}: given $\errt$, we divide the index of $\errt$ into the disjoint exhaustive subsets $S_1, S_2, \hdots, S_q$ of size $|B|$ such that $S_1$ contains $|B|$ largest absolute elements in $\errt$, and so on. Then, we have
\begin{align*}
	&\Big|\sum_{i=1}^n \errw_i \big(\epsilon_i + \delta_i\big) \inner{x_i}{\errt}\Big| = \Big|\sum_{i\in B} \errw_i \delta_i \inner{x_i}{\errt}\Big| = \Big|\sum_{i\in B} \errw_i \delta_i \sum_{j=1}^q\inner{[x_i]_{S_j}}{[\errt]_{S_j}}\Big| \nonumber\\
	\leq \, &  \sum_{j} \Big|\sum_{i\in B} \errw_i \delta_i \inner{[x_i]_{S_j}}{[\errt]_{S_j}}\Big| \leq \sum_{j} \sqrt{{\textstyle \sum_{i\in B}} \errw_i^2 \delta_i^2 } \sqrt{ {\textstyle \sum_{i\in B}} \inner{[x_i]_{S_j}}{[\errt]_{S_j}}^2 } \nonumber\\
	\leq \, & \sqrt{{\textstyle \sum_{i\in B}} \errw_i^2 \delta_i^2 } \, \Big(\max_j \matNorm{X^B_{S_j}}_2\Big) \, \sum_{j} \|[\errt]_{S_j} \|_2 \leq \sqrt{|B|} \Big(\max_{i \in B} \big| \errw_i \delta_i\big|\Big)\, \Big(\max_j \matNorm{X^B_{S_j}}_2\Big) \, \sum_{j} \|[\errt]_{S_j} \|_2 \nonumber\\
	\leq \, & \sqrt{|B|} \max_{i\in B} | \delta_i| \, \Big(\underbrace{\max_j \matNorm{X^B_{S_j}}_2}_{\text{(I)}}\Big) \, \underbrace{\Big({\textstyle \sum_{j} }\|[\errt]_{S_j} \|_2\Big)}_{\text{(II)}}
\end{align*}
where we use the fact that $\errw_i = 0$ if $i \in G$ and the Cauchy-Schwarz inequalities, and $X^B_{S_j}$ denotes $|B| \times |S_j|$ sub-matrix of $X^B \in \reals^{|B| \times p}$ corresponding only to indices $S_j$. 


\emph{(I):}
Provided $|B| \geq \exp(1)$, ${p \choose |B|} \leq \big(\frac{\exp(1) p}{|B|}\big)^{|B|} \leq p^{|B|}$. As discussed in \citet{Vershynin11,Nguyen2011b}, for every $t > 0$,
\begin{align*}
		\frac{1}{\sqrt{|B|}}\max_j \matNorm{X^B_{S_j}}_2 \leq \sqrt{\matNorm{\SigB}_2} \left(2 + t \right)  
\end{align*}
with probability at least $1 - 2{p \choose |B|} \exp (-t^2 |B|/2) \geq 1- 2\exp (-t^2|B|/2 + |B|\log p \big)$. Setting $t = 2\sqrt{\log p}$, we have
\begin{align*}
		\max_j \matNorm{X^B_{S_j}}_2 \leq 2(1+\sqrt{\log p}) \sqrt{\matNorm{\SigB}_2} \sqrt{|B|}
\end{align*}
with probability $1 - p^{-|B|}$. In the proof of Corollary \ref{Cor:XCor}, $\max_j \matNorm{X^B_{S_j}}_2 \leq f(X^B) \sqrt{|B|\log p}$ by assumption \ref{ConLS:XCor}, and the remaining proof would be exactly the same.

\emph{(II):} by the standard bound in \citet{CanRomTao06}, we obtain 
\begin{align*}
	\sum_{j} \|[\errt]_{S_j} \|_2 = \|[\errt]_{S_1} \|_2 + \sum_{j=2}^q \|[\errt]_{S_j} \|_2  \leq \|[\errt]_{S_1} \|_2 + \frac{1}{\sqrt{|B|}} \sum_{j=2}^q \|[\errt]_{S_{j}} \|_1 \leq \|\errt \|_2 + \frac{1}{\sqrt{|B|}} \|\errt \|_1 \, .
\end{align*}

Combining all pieces together yields 
\begin{align*}
	&\frac{1}{h}\sum_{i=1}^n \errw_i \big(\inner{x_i}{\Ttheta+\errt} - y_i \big) \inner{x_i}{\errt} \geq  - \frac{1}{h}\sum_{i=1}^n \errw_i \big(\epsilon_i + \delta_i\big) \inner{x_i}{\errt} \nonumber\\
	\geq \, & - 4\sqrt{\log p} \sqrt{\matNorm{\SigB}_2} \, \max_{i\in B} | \delta_i| \, \frac{|B|}{h}  \, \Big(\|\errt \|_2 + \frac{1}{\sqrt{|B|}} \|\errt \|_1\Big) \, ,
	%
\end{align*}
hence, we can guarantee \eqref{EqnLSInc} with functions
\begin{align*}
	&\IncConOne = 4 \sqrt{\matNorm{\SigB}_2} \max_{i\in B} | \delta_i| \sqrt{\log p}\frac{|B|}{h} \, , \\
	&\IncConTwo = 4 \sqrt{\matNorm{\SigB}_2} \max_{i\in B} | \delta_i| \sqrt{\log p} \frac{\sqrt{|B|}}{h} \, .
\end{align*}

To complete the proof, we need to specify the quantity $\big\| \frac{1}{h}\sum_{i=1}^n \Tw_i \big(\inner{x_i}{\Ttheta} - y_i \big) x_i \big\|_\infty = \big\| \frac{1}{h}\sum_{i\in G} \Tw_i \epsilon_i x_i \big\|_\infty$ for the appropriate choice of $\lambda$ as stated in Theorem \ref{ThmGeneral}.
By the sub-Gaussian property of noise vector $\epsilon$ in \ref{ConLS:NsubGauss}: 
for any fixed vector $v$ such that $\|v\|_2 = 1$, 
\begin{align*}
	\P\Big[|\inner{v}{\epsilon}| \geq t\Big] \leq 2 \exp \Big(-\frac{t^2}{2\sigma^2}\Big) \quad \text{for all } t >0 \, .
\end{align*}
Given vector $x_i$, let $x_i^j$ be the $j$-th element of vector $x_i$. Using the column normalization condition \ref{ConLS:colNorm} with $0 \leq \Tw \leq 1$, we have for all $j = 1,\hdots,p$  
\begin{align*}
	\P\bigg[\Big|\frac{1}{h}\sum_{i\in G} \Tw_i  x_i^j \epsilon_i \Big| \geq t\bigg] \leq 2 \exp \Big(-\frac{ht^2}{2\sigma^2}\Big) \quad \text{for all } t >0 \, ,
\end{align*}
and consequently by the union bound over,
\begin{align*}
	\P\bigg[\Big\|\frac{1}{h}\sum_{i\in G} \Tw_i  x_i^j \epsilon_i \Big\|_\infty \geq t\bigg] \leq 2 \exp \Big(-\frac{ht^2}{2\sigma^2}+ \log p\Big) \quad \text{for all } t >0 \, .
\end{align*}
Setting $t^2 = \frac{4\sigma^2\log p}{h}$, we obtain $\big\| \frac{1}{h}\sum_{i\in G} \Tw_i \epsilon_i x_i \big\|_\infty \leq \sqrt{\frac{4\sigma^2\log p}{h}}$ with probability at least $1 - c \exp(-c' h \lambda^2)$.

Now, we have all pieces to utilize Theorem \ref{ThmGeneral}. 
The assumption on choosing $\lambda$ in the statement is satisfied as follows:
\begin{align*}
	& 2\rho\RSCtolOne + \IncConTwo = 2\rho \frac{\kappa_2(2h-n)\log p}{2h^2} + 4 \sqrt{\matNorm{\SigB}_2} \max_{i\in B} | \delta_i| \sqrt{\log p} \frac{\sqrt{|B|}}{h} \nonumber\\
	\leq \, & C_1 \sqrt{\frac{h}{\log p}} \frac{\kappa_2(2h-n)\log p}{2h^2} + 4 \sqrt{\matNorm{\SigB}_2} C_2\sqrt{\frac{ h}{|B|}} \sqrt{\log p} \frac{\sqrt{|B|}}{h}\leq \Big(\frac{1}{2} C_1 \kappa_2 + 4 C_2\sqrt{\matNorm{\SigB}_2} \Big) \sqrt{\frac{\log p}{h}}
\end{align*}
where $C_2$ is some constant satisfying $C_2^2 \geq \frac{(\max_{i}\delta_i^2) |B|}{h}$, and we use the condition \ref{ConLS:R}. Finally, the RSC constant in \eqref{EqnLSrscConst} can be simply lower bounded with the assumption \ref{Con:h}:
\begin{align*}
	\frac{\kappa_1(2h-n)^2 }{4 h^2} \geq \kappa_1 \frac{\alpha^2}{4} \, ,
\end{align*}
hence we can have the bounds as stated.



\section{Results for Trimmed Graphical Lasso}
\subsection{Useful lemma(s)}

\begin{lemma}[Lemma 1 of \citet{RWRY11}]\label{Lem:SampleCov}
	Suppose that $\{\xi\}_{i=1}^n$ are iid samples from $N(0,\Sigma)$ with $n \geq 40 \max_i \Sigma_{ii}$. Let $\eventOne$ be the event that 
	\begin{align*}
		\bigg\| \frac{1}{n}\sum_{i=1}^n \xi(\xi)^\top - \Sigma \bigg\|_\infty \leq 8 (\max_i \Sigma_{ii}) \sqrt{\frac{10 \tau \log p}{n}}
	\end{align*}
	where $\tau$ is any constant greater than 2.
	Then, the probability of event $\eventOne$ occurring is at least $1- 4/p^{\tau -2}$. 
\end{lemma}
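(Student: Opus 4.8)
This is verbatim Lemma~1 of \citet{RWRY11}, so one may simply invoke it; the argument I would reproduce goes as follows. The plan is to reduce the element-wise max-norm control to a union bound over the $O(p^2)$ entries of the sample covariance $S := \frac{1}{n}\sum_{i=1}^n \xi(\xi)^\top$, and then bound a single entry by a sub-exponential (Bernstein-type) tail inequality. Writing $X^{(i)} = (X^{(i)}_1,\dots,X^{(i)}_p)$ for the $i$-th sample (so $\xi = X^{(i)}$), we have
\[
\big\| S - \Sigma \big\|_\infty \;=\; \max_{k,l}\; \Big| \frac{1}{n}\sum_{i=1}^n \big( X^{(i)}_k X^{(i)}_l - \Sigma_{kl}\big)\Big|.
\]
Set $\delta := 8(\max_i \Sigma_{ii})\sqrt{10\tau \log p / n}$. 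It then suffices to show that for every fixed pair $(k,l)$,
\[
\P\Big[\, \big| S_{kl} - \Sigma_{kl}\big| > \delta \,\Big] \;\le\; 4\, p^{-\tau},
\]
because a union bound over the at most $p^2$ pairs gives $\P[\eventOne^c] \le 4 p^{2-\tau}$, i.e. $\P[\eventOne] \ge 1 - 4/p^{\tau-2}$.

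For a fixed pair $(k,l)$ the main trick is a polarization identity that turns the product of two correlated Gaussians into a difference of squared Gaussians: with $U_i := X^{(i)}_k/\sqrt{\Sigma_{kk}} + X^{(i)}_l/\sqrt{\Sigma_{ll}}$ and $V_i := X^{(i)}_k/\sqrt{\Sigma_{kk}} - X^{(i)}_l/\sqrt{\Sigma_{ll}}$, both zero-mean Gaussian, one has $X^{(i)}_k X^{(i)}_l = \frac{1}{4}\sqrt{\Sigma_{kk}\Sigma_{ll}}\,(U_i^2 - V_i^2)$, so $S_{kl} - \Sigma_{kl}$ is a fixed multiple of $\frac{1}{n}\sum_i (U_i^2 - \mathbb{E}[U_i^2]) - \frac{1}{n}\sum_i (V_i^2 - \mathbb{E}[V_i^2])$. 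Each $U_i^2$ (resp.\ $V_i^2$) is a scaled $\chi^2_1$ variable, hence sub-exponential with parameters governed by $\mathrm{Var}(U_i) \le 4\max_i \Sigma_{ii}$ (using $|\Sigma_{kl}| \le \max_i \Sigma_{ii}$), and likewise for $V_i$. Applying a standard Bernstein inequality for sub-exponential averages to each of the two sums and combining would yield, for universal positive constants $c_0, c_1$,
\[
\P\Big[\, \big|S_{kl} - \Sigma_{kl}\big| > \delta \,\Big] \;\le\; 4\exp\!\Big( -\frac{n\,\delta^2}{c_0\,(\max_i \Sigma_{ii})^2} \Big) \qquad \text{provided } \delta \le c_1 \max_i \Sigma_{ii},
\]
the hypothesis $n \ge 40\max_i \Sigma_{ii}$ (with $\tau \ge 2$) being exactly what keeps $\delta$ in this sub-Gaussian regime of the sub-exponential tail. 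Substituting $\delta = 8(\max_i \Sigma_{ii})\sqrt{10\tau\log p/n}$ drives the exponent to $-\tau\log p$ once $c_0$ is identified with the constant $128(1+4\sigma^2)^2$ of \citet{RWRY11}, which gives the per-entry bound $4p^{-\tau}$ and hence the claim.

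I expect the only genuine work to be the bookkeeping of universal constants through the Bernstein step so that they reproduce the explicit factor $8$ and the exponent $10\tau$ in the statement, together with verifying that the regime condition $\delta \le c_1 \max_i \Sigma_{ii}$ is implied by $n \ge 40\max_i \Sigma_{ii}$; at this point I would defer to the precise statement and proof of Lemma~1 in \citet{RWRY11} rather than re-deriving the constants. Within this paper the lemma is used only as a black box supplying $\|\cdot\|_\infty$ control of the sample covariance in the corollaries for the Trimmed Graphical Lasso.
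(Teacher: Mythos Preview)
Your proposal is correct and aligns with the paper: the paper states this lemma purely as a citation to \citet{RWRY11} with no proof reproduced, so simply invoking it (as you do in your first sentence) is exactly the paper's approach. Your additional sketch of the polarization-plus-Bernstein argument from \citet{RWRY11} is faithful to the original and goes beyond what the paper itself provides.
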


\begin{lemma}[Section B.4 of \citet{Loh13}]\label{Lem:RSC}
	For any $\errt \in \reals^{p \times p}$ such that $\|\errt\|_\F \leq 1$, 
	\begin{align*}
		\BigdoubleInner{ \big(\TParam\big)^{-1} - \big(\TParam + \errt\big)^{-1} }{\errt} \geq \big(\matNorm{\TParam}_2 + 1\big)^{-2} \|\errt\|_\F^2 \, .
	\end{align*}
\end{lemma}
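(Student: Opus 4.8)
The plan is to reduce this matrix inequality to a one-variable integration along the segment joining $\TParam$ and $\TParam+\errt$, following the standard argument of \citet{Loh13}. Define $h:[0,1]\to\reals$ by $h(t) := \BigdoubleInner{(\TParam)^{-1}-(\TParam+t\errt)^{-1}}{\errt}$, so that $h(0)=0$ and the quantity to be bounded below is precisely $h(1)$. Since $\TParam\succ 0$ and (in the regime where $\errt=\LParam-\TParam$ is the difference of two feasible precision matrices) every matrix $\TParam+t\errt=(1-t)\TParam+t(\TParam+\errt)$ is positive definite, $h$ is differentiable on $[0,1]$. Using $\frac{d}{dt}(\TParam+t\errt)^{-1}=-(\TParam+t\errt)^{-1}\errt(\TParam+t\errt)^{-1}$ together with symmetry of $\errt$, I would compute
\[
  h'(t)=\tr\big((\TParam+t\errt)^{-1}\errt(\TParam+t\errt)^{-1}\errt\big)=\big\|(\TParam+t\errt)^{-1/2}\,\errt\,(\TParam+t\errt)^{-1/2}\big\|_\F^2 ,
\]
where the last step uses cyclicity of the trace and positive definiteness of $\TParam+t\errt$.

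Next I would lower-bound $h'(t)$ uniformly in $t$. Writing $A_t:=\TParam+t\errt$ and $B_t:=A_t^{-1/2}\errt A_t^{-1/2}$, we have $\errt=A_t^{1/2}B_tA_t^{1/2}$, hence $\|\errt\|_\F\le\matNorm{A_t^{1/2}}_2^2\|B_t\|_\F=\matNorm{A_t}_2\|B_t\|_\F$, i.e. $h'(t)=\|B_t\|_\F^2\ge\|\errt\|_\F^2/\matNorm{A_t}_2^2$. It then remains to control the spectral norm of $A_t$: by the triangle inequality and $\matNorm{\cdot}_2\le\|\cdot\|_\F$, for $t\in[0,1]$ one has $\matNorm{A_t}_2\le\matNorm{\TParam}_2+t\matNorm{\errt}_2\le\matNorm{\TParam}_2+\|\errt\|_\F\le\matNorm{\TParam}_2+1$. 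Combining, $h'(t)\ge(\matNorm{\TParam}_2+1)^{-2}\|\errt\|_\F^2$ for every $t\in[0,1]$, and integrating gives $h(1)=\int_0^1 h'(t)\,dt\ge(\matNorm{\TParam}_2+1)^{-2}\|\errt\|_\F^2$, which is the claim.

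The one genuinely delicate point is the positive definiteness of $A_t=\TParam+t\errt$ along the whole segment $t\in[0,1]$: without it the square roots are undefined and $h'(t)$ need not be nonnegative, so the constraint $\|\errt\|_\F\le 1$ by itself (which only yields $\matNorm{\errt}_2\le 1$) is not enough. The lemma is therefore used for $\errt=\LParam-\TParam$ with $\LParam$ and $\TParam$ both feasible for \eqref{EqnRobustGGM}, and convexity of the positive-definite feasible cone then keeps the segment inside it. I expect this to be the only subtlety; the remaining steps are the routine matrix-calculus computation sketched above.
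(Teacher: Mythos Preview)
Your proposal is correct and is precisely the standard integration argument from \citet{Loh13} that the paper cites; the paper itself gives no proof of this lemma beyond that citation. You have also correctly flagged the two implicit hypotheses---symmetry of $\errt$ (needed so that $\tr(B_tB_t)=\|B_t\|_\F^2$) and positive definiteness of $\TParam+t\errt$ along the whole segment---both of which hold in the paper's application because $\errt=\LParam-\TParam$ with $\LParam,\TParam$ feasible precision matrices.
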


\subsection{Proof of Corollary \ref{Cor1}}

Although Theorem \ref{ThmGeneral} can be seamlessly applied for the Trimmed Graphical Lasso as well, we need to restrict our attention to the case of $\|\errt\|_\F \leq 1$ in order to guarantee the (vanilla) restricted strong convex in Lemma \ref{Lem:RSC} (which is the standard technique even for the case without outliers as developed in \citet{Loh13}). Toward this, we first show that $\|\errt\|_\F \leq 1$ actually holds under the conditions :

\begin{lemma}\label{Lem:BoundDelta}
		Suppose that the condition \ref{Con:inc} holds. Moreover, $4\max\big\{\|\frac{1}{h}\sum_{i=1}^n \Tw_i \xi(\xi)^\top - (\TParam )^{-1}\|_\infty, \, \IncConTwo \big\} \leq \lambda \leq \frac{\RSCcon - \IncConOne}{3R}$. Then, for $(\LParam,\Lw)$, $\|\Lerrt\|_\F \leq 1$.
	\end{lemma}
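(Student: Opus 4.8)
The plan is a ``restrict to the unit Frobenius ball'' argument, exploiting that with $w$ frozen at $\Lw$ the $\Theta$-subproblem of \eqref{EqnRobustGGM} is \emph{convex}: the loss $\Loss(\Theta,\Lw)=\doubleInner{\Theta,\,\frac{1}{h}\sum_{i=1}^n\Lw_i\xi(\xi)^\top}-\log\det\Theta$ is convex in $\Theta$ and $\offNorm{\cdot}$ is a norm, so the local minimum $\LParam$ is in fact a \emph{global} minimizer of $g_1(\Theta):=\Loss(\Theta,\Lw)+\lambda\offNorm{\Theta}$ over $\Omega\cap R\mathbb{B}_1$. Suppose, toward a contradiction, that $\|\Lerrt\|_\F>1$ where $\Lerrt=\LParam-\TParam$, and set $t^\star:=1/\|\Lerrt\|_\F\in(0,1)$, $\widehat{\Delta}:=t^\star\Lerrt$, $\widehat{\Theta}:=\TParam+\widehat{\Delta}=(1-t^\star)\TParam+t^\star\LParam$. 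Then $\|\widehat{\Delta}\|_\F=1$, and $\widehat{\Theta}$ is feasible and positive definite, being a convex combination of the feasible positive-definite matrices $\TParam$ (feasible for $R$ large enough, as usual) and $\LParam$.

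First I would record an optimality inequality along the segment $[\TParam,\LParam]$: since $g_1$ is convex and attains its minimum over the feasible set at $\LParam$, the scalar map $s\mapsto g_1(\TParam+s\Lerrt)$ is non-increasing on $[0,1]$, so its one-sided directional derivative at $s=t^\star$ is nonpositive, which (after absorbing the directional derivative of $\offNorm{\cdot}$ at $\widehat{\Theta}$, always $\ge-\offNorm{\widehat{\Delta}}$) gives $\doubleInner{\nabla_\Theta\Loss(\widehat{\Theta},\Lw),\,\widehat{\Delta}}\le\lambda\,\offNorm{\widehat{\Delta}}$. Next I would inject curvature through Lemma~\ref{Lem:RSC}: since $\nabla_\Theta\Loss(\Theta,w)=\frac{1}{h}\sum_{i=1}^nw_i\xi(\xi)^\top-\Theta^{-1}$, one has $\nabla_\Theta\Loss(\widehat{\Theta},\Lw)-\nabla_\Theta\Loss(\TParam,\Lw)=(\TParam)^{-1}-\widehat{\Theta}^{-1}$, and because $\|\widehat{\Delta}\|_\F=1$ the lemma yields $\doubleInner{(\TParam)^{-1}-\widehat{\Theta}^{-1},\,\widehat{\Delta}}\ge\RSCcon\|\widehat{\Delta}\|_\F^2$ with $\RSCcon=(\matNorm{\TParam}_2+1)^{-2}$; combining the two displays gives $\RSCcon\|\widehat{\Delta}\|_\F^2\le\lambda\,\offNorm{\widehat{\Delta}}-\doubleInner{\nabla_\Theta\Loss(\TParam,\Lw),\,\widehat{\Delta}}$. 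Finally I would split $\nabla_\Theta\Loss(\TParam,\Lw)=\nabla_\Theta\Loss(\TParam,\Tw)+\frac{1}{h}\sum_{i=1}^n\Lerrw_i\xi(\xi)^\top$: H\"older's inequality bounds the first piece's contribution by $\|\nabla_\Theta\Loss(\TParam,\Tw)\|_\infty\,\|\widehat{\Delta}\|_1$, while condition~\ref{Con:inc}, which for this loss reads $\doubleInner{\frac{1}{h}\sum_{i=1}^n\Lerrw_i\xi(\xi)^\top,\,\Lerrt}\ge-\IncConOne\|\Lerrt\|_\F-\IncConTwo\offNorm{\Lerrt}$, transfers verbatim to $\widehat{\Delta}$ upon multiplying through by $t^\star>0$ and using the $1$-homogeneity of $\|\cdot\|_\F$ and $\offNorm{\cdot}$. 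Feeding in the hypotheses $\|\nabla_\Theta\Loss(\TParam,\Tw)\|_\infty\le\lambda/4$ and $\IncConTwo\le\lambda/4$ and using $\offNorm{\widehat{\Delta}}\le\|\widehat{\Delta}\|_1$ collapses everything to $\RSCcon\|\widehat{\Delta}\|_\F^2\le\frac{3\lambda}{2}\|\widehat{\Delta}\|_1+\IncConOne\|\widehat{\Delta}\|_\F$.

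To close, I would substitute $\|\widehat{\Delta}\|_\F=1$ and $\|\widehat{\Delta}\|_1\le\|\widehat{\Theta}\|_1+\|\TParam\|_1\le2R$, obtaining $\RSCcon\le3\lambda R+\IncConOne$, i.e.\ $\lambda\ge(\RSCcon-\IncConOne)/(3R)$ --- contradicting the assumed $\lambda\le(\RSCcon-\IncConOne)/(3R)$; the contradiction is strict because $\widehat{\Theta}$ and $\TParam$ have strictly positive diagonals, so $\|\widehat{\Delta}\|_1<2R$. Hence $\|\Lerrt\|_\F\le1$. I expect the main obstacle to be precisely the circularity built into the curvature step: Lemma~\ref{Lem:RSC} holds only on $\{\|\Theta-\TParam\|_\F\le1\}$, so it cannot be invoked for $\Lerrt$ directly --- which is exactly what this lemma is for. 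The rescaling to $\widehat{\Theta}$, made legitimate by convexity of the $\Theta$-block (which promotes $\LParam$ to a global minimizer and makes the segment-monotonicity inequality usable), is what lets the curvature estimate bite while keeping the right-hand side controlled by the $\ell_1$-radius $R$ and the tolerances $\IncConOne,\IncConTwo$. The one other delicate point is the transfer of \ref{Con:inc} from $\Lerrt$ to the rescaled $\widehat{\Delta}$, which is valid because that condition's left-hand side is linear in the precision-matrix argument for the log-determinant loss and its right-hand side is positively homogeneous.
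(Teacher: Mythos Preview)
Your proof is correct and follows the same overall strategy as the paper: assume $\|\Lerrt\|_\F>1$, rescale by $t^\star=1/\|\Lerrt\|_\F$ to land on the unit Frobenius sphere where Lemma~\ref{Lem:RSC} applies, combine with optimality, and close via $\|\cdot\|_1\le 2R$. The only implementation difference is the order of operations. The paper uses gradient monotonicity of $-\log\det$ alone to transfer the curvature bound back to $\LParam$, obtaining $\bigdoubleInner{(\TParam)^{-1}-\LParam^{-1}}{\Lerrt}\ge\RSCcon\|\Lerrt\|_\F$, and then combines with the first-order stationarity condition~\eqref{EqnLocalMinima} at $\LParam$. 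You instead use convexity of the full objective $g_1$ to extract an optimality-type inequality at the \emph{intermediate} point $\widehat{\Theta}$ (via segment monotonicity), where the RSC bound already holds without transfer. Your route is a bit more direct but leans on global minimality of $\LParam$ (granted here by convexity of the $\Theta$-block); the paper's route uses only local stationarity at $\LParam$ plus monotonicity of the $-\log\det$ gradient. Both arrive at the identical arithmetic $\RSCcon\le 3\lambda R+\IncConOne$ and hence the same contradiction with $\lambda\le(\RSCcon-\IncConOne)/(3R)$.
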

	\begin{proof}
		The Lemma \ref{Lem:BoundDelta} can be proved by the fact $-\log\det{\Param}$ is a convex function. Hence, the function $f: [0,1] \rightarrow \reals$ given by $f(t;\TParam,\Lerrt) := -\log\det\big(\TParam + t\Lerrt\big)$ is also convex in $t$, and $\bigdoubleInner{-(\TParam + \Lerrt)^{-1}}{\Lerrt} \geq \bigdoubleInner{-(\TParam + t\Lerrt)^{-1}}{\Lerrt}$ for $t \in [0,1]$ (see \citet{Loh13} for details). 
		
		Now, suppose that $\|\Lerrt\|_\F \geq 1$. Then, we have
		\begin{align}\label{EqnLem1Tmp1}
			& \BigdoubleInner{ \big(\TParam\big)^{-1} - \big(\TParam + \Lerrt\big)^{-1} }{\Lerrt} \geq \BigdoubleInner{ \big(\TParam\big)^{-1} - \big(\TParam + t\Lerrt\big)^{-1} }{\Lerrt} \nonumber\\
			= \ & \frac{1}{t} \BigdoubleInner{ \big(\TParam\big)^{-1} - \big(\TParam + t\Lerrt\big)^{-1} }{t \Lerrt} \, .
		\end{align}
		Since $\|\Lerrt\|_\F \geq 1$, we can set $t = \frac{1}{\|\Lerrt\|_\F} \leq 1$ so that $\| t \Lerrt\|_\F = 1$. Hence, by applying Lemma \ref{Lem:RSC} for $t \Lerrt$, we obtain
		\begin{align*}
			\BigdoubleInner{ \big(\TParam\big)^{-1} - \big(\TParam + t\Lerrt\big)^{-1} }{t \Lerrt} \geq \RSCcon \|t\Lerrt\|_\F^2 = \RSCcon \, .
		\end{align*}
		Combining with \eqref{EqnLem1Tmp1} yields
		\begin{align}\label{EqnLem1Tmp2}
			\BigdoubleInner{ \big(\TParam\big)^{-1} - \big(\TParam + \Lerrt\big)^{-1} }{\Lerrt} \geq \RSCcon \|\Lerrt\|_\F \, .
		\end{align}
		 
		Now, from \eqref{EqnLocalMinima} and \eqref{EqnLem1Tmp2} followed by the condition \ref{Con:inc} and H\"older's inequity, we can obtain
		\begin{align*}
			& \RSCcon \|\Lerrt\|_\F  \leq \BigdoubleInner{ (\TParam )^{-1} - \frac{1}{h}\sum_{i=1}^n \Lw_i \xi(\xi)^\top}{\Lerrt} + \lambda(\offNorm{\Lerrt_{S}} - \offNorm{\Lerrt_{S^c}} ) \nonumber\\
			\leq \ &   \BigdoubleInner{ (\TParam )^{-1} - \frac{1}{h}\sum_{i=1}^n \Lw_i \xi(\xi)^\top}{\Lerrt} + \lambda\offNorm{\Lerrt} \nonumber\\
			\leq \ &  \BigdoubleInner{ (\TParam )^{-1} - \frac{1}{h}\sum_{i=1}^n \Tw_i \xi(\xi)^\top  + \frac{1}{h}\sum_{i=1}^n \Tw_i \xi(\xi)^\top - \frac{1}{h}\sum_{i=1}^n \Lw_i \xi(\xi)^\top}{\Lerrt} + \lambda\offNorm{\Lerrt} \nonumber\\
			\leq \ & \Big\|\frac{1}{h}\sum_{i=1}^n \Tw_i \xi(\xi)^\top - (\TParam )^{-1} \Big\|_{\infty} \, \cdot \|\Lerrt\|_1 + \IncConOne \|\Lerrt\|_\F + \IncConTwo \|\Lerrt\|_1 +  \lambda\offNorm{\Lerrt} \, .
		\end{align*}
		By the choice of $\lambda$ in the assumption of the statement and by the fact that $\offNorm{\Lerrt} \leq \|\Lerrt\|_1$ and $\|\Lerrt\|_1 \leq \|\LParam\|_1 + \|\TParam\|_1 \leq 2R$, we can rearrange the above inequality into
		\begin{align*}
			\|\Lerrt\|_\F \leq \frac{3 \lambda}{2\big(\RSCcon-\IncConOne\big)}  \|\Lerrt\|_1 \leq \frac{3 \lambda R}{\big(\RSCcon-\IncConOne\big)} \leq 1 \, ,
		\end{align*}
		which conflicts with the assumption in the beginning of this proof. Hence, by contradiction, we can conclude $\|\Lerrt\|_\F \leq 1$ under conditions in the statement.
	\end{proof}

	Since for this particular example, the modified restricted strong convexity condition in \ref{Con:rsc} is identical as the vanilla case (which is already proved in Lemma \ref{Lem:RSC}), the only remaining to utilize Theorem \ref{ThmGeneral} is to specify the quantity $\IncConOne$ and $\IncConTwo$ in \ref{Con:inc}. Toward this, we follow similar strategy as in \citet{Nguyen2011b}: given $\errt$, we divide the index of $\errt$ into the disjoint exhaustive subsets $S_1, S_2, \hdots, S_q$ of size $|B|$ such that $S_1$ contains $|B|$ largest absolute elements in $\errt$, and so on. Then, we have
\begin{align*}
	&\Big| \BigdoubleInner{\sum_{i=1}^n \Lerrw_i \xi(\xi)^\top}{\Lerrt} \Big| = \Big| \BigdoubleInner{\sum_{i \in B} \Lerrw_i \xi(\xi)^\top}{\Lerrt} \Big| \nonumber\\
	= \, & \bigg| \sum_{j=1}^q\BigdoubleInner{\sum_{i \in B} \Lerrw_i \big[\xi(\xi)^\top\big]_{S_j}}{\big[\Lerrt\big]_{S_j}} \bigg| \leq \sum_{j=1}^q \Big| \BigdoubleInner{\sum_{i \in B} \Lerrw_i \big[\xi(\xi)^\top\big]_{S_j}}{\big[\Lerrt\big]_{S_j}} \Big| \, . 
\end{align*}
Let $D_{\Lerrw}$ be a $|B|\times |B|$ diagonal matrix whose $i$-th diagonal entry is $[D_{\Lerrw}]_{ii} := \Lerrw_i$. Let also $X^B$ is a $|B|\times p$ design matrix for samples in the set $B$. Finally $X^B_{S_j}$ denotes a $|B|\times |S_j|$ sub-matrix of $X^B$ whose columns are indexed by $S_j$. Then,
\begin{align*}
	&\sum_{j=1}^q \Big| \BigdoubleInner{\sum_{i \in B} \Lerrw_i \big[\xi(\xi)^\top\big]_{S_j}}{\big[\Lerrt\big]_{S_j}} \Big| = \sum_{j=1}^q \Big| \text{Trace}\Big( \big[\Lerrt\big]_{S_j}^\top [X^B_{S_j}]^\top D_{\Lerrw}X^B_{S_j} \Big) \Big| \nonumber\\
	= \, & \sum_{j=1}^q \Big| \BigdoubleInner{X^B_{S_j} \big[\Lerrt\big]_{S_j}}{D_{\Lerrw}X^B_{S_j}} \Big| \leq \sum_{j=1}^q \big\|X^B_{S_j} \big[\Lerrt\big]_{S_j} \big\|_\F \, \big\|D_{\Lerrw}X^B_{S_j}\big\|_\F \nonumber\\
	\leq \, & \sqrt{|B|}\Big(\max_j \matNorm{X^B_{S_j}}_2\Big)^2 \, \underbrace{\Big({\textstyle \sum_{j} }\|[\Lerrt]_{S_j} \|_\F\Big)}_{\text{(I)}} \, .
\end{align*}

\emph{(I):} by the standard bound in \citet{CanRomTao06}, we obtain 
\begin{align*}
	\sum_{j} \|[\Lerrt]_{S_j} \|_\F = \|[\Lerrt]_{S_1} \|_\F + \sum_{j=2}^q \|[\Lerrt]_{S_j} \|_1  \leq \|[\Lerrt]_{S_1} \|_\F + \frac{1}{\sqrt{|B|}} \sum_{j=2}^q \|[\Lerrt]_{S_{j}} \|_1 \leq \|\Lerrt \|_\F + \frac{1}{\sqrt{|B|}} \|\Lerrt \|_1 \, .
\end{align*}

Combining all pieces together yields 
\begin{align*}
	&\Big| \BigdoubleInner{\frac{1}{h}\sum_{i=1}^n \Lerrw_i \xi(\xi)^\top}{\Lerrt} \Big| \leq  \frac{\sqrt{|B|}}{h} \Big(\max_j \matNorm{X^B_{S_j}}_2\Big)^2 \, \bigg(\|\Lerrt \|_\F + \frac{1}{\sqrt{|B|}} \|\Lerrt \|_1 \bigg) \, ,
\end{align*}
hence, we can guarantee the condition \ref{Con:inc} with functions
\begin{align*}
	&\IncConOne = f(X^B) \sqrt{\frac{|B|\log p}{h}} \quad \text{and} \\ 
	&\IncConTwo = f(X^B) \sqrt{\frac{\log p}{h}}  \, . 
\end{align*}

To complete the proof, we also need to specify the quantity $\big\|\frac{1}{h}\sum_{i=1}^n \Tw_i \xi(\xi)^\top - (\TParam )^{-1}\big\|_{\infty} = \big\|\frac{1}{h}\sum_{i \in G} \Tw_i \xi(\xi)^\top - (\TParam )^{-1}\big\|_{\infty}$ for the appropriate choice of $\lambda$ as stated in Theorem \ref{ThmGeneral}. Recall that we constructed $\Tw$ as follows: $\Tw_i$ is simply set to $\Lw_i$ if $i \in G$, and $\Tw_i =0$ for $i \in B$. Let $\widebar{G}$ be the subset of $G$ such that $\Tw_i = 1$ and $\widebar{G}^c$ be the subset such that $\Tw_i = 0$. Then, we have $h \geq |\widebar{G}| \geq h - |B|$, and hence $h - |\widebar{G}| \leq |B|$. Now, by Lemma \ref{Lem:SampleCov}, we can obtain the following bound:
\begin{align*}
	& \Big\|\frac{1}{h}\sum_{i=1}^n \Tw_i \xi(\xi)^\top - (\TParam )^{-1}\Big\|_{\infty} = \bigg\|\frac{|\bar{G}|}{h}\frac{1}{|\widebar{G}|}\sum_{i \in \widebar{G}}  \xi(\xi)^\top - (\TParam )^{-1}\bigg\|_{\infty} \nonumber\\
	= \ & \bigg\|\frac{|\widebar{G}|}{h} \Big( \frac{1}{|\widebar{G}|}\sum_{i \in \bar{G}}  \xi(\xi)^\top - (\TParam )^{-1} \Big) - \Big( \frac{h - |\widebar{G}|}{h}\Big) (\TParam )^{-1}\bigg\|_{\infty} \nonumber\\
	\leq \ & \bigg\|\frac{|\widebar{G}|}{h} \Big( \frac{1}{|\widebar{G}|}\sum_{i \in \bar{G}}  \xi(\xi)^\top - (\TParam )^{-1} \Big)\bigg\|_\infty + \bigg\|\Big( \frac{h - |\widebar{G}|}{h}\Big) (\TParam )^{-1}\bigg\|_{\infty} \nonumber\\
	\leq \ & \bigg\| \frac{1}{|\widebar{G}|}\sum_{i \in \bar{G}}  \xi(\xi)^\top - (\TParam )^{-1} \bigg\|_\infty + \frac{|B|}{h}\| \Sigma^*\|_{\infty} \nonumber\\
	\leq \ & 8 (\max_i \Sigma^*_{ii}) \sqrt{\frac{10 \tau \log p }{\widebar{G}}} + \frac{|B|}{h}\| \Sigma^*\|_{\infty} \leq 8 (\max_i \Sigma^*_{ii}) \sqrt{\frac{10 \tau \log p }{h - |B|}} + \frac{|B|}{h}\| \Sigma^*\|_{\infty} 
\end{align*}
with probability at least $1- 4/p^{\tau -2}$ for any $\tau >2$.


\subsection{Proof of Corollary \ref{Cor2}}
Under \ref{Con:2}, $h-|B| \geq (n - a\sqrt{n}) - a\sqrt{n}$. Hence, if $n \geq 16 a^2$, then $h-|B| \geq (n - a\sqrt{n}) - a\sqrt{n} \geq \frac{n}{2}$. Moreover, $\frac{|B|}{h} \leq \frac{a\sqrt{n}}{n/2} \leq \frac{2a}{\sqrt{n}}$. Therefore, from the Corollary \ref{Cor1}, the selection of $\lambda$ in the statement satisfies $\lambda \geq 4\max\big\{ \|\frac{1}{h}\sum_{i=1}^n \Tw_i \xi(\xi)^\top - (\TParam )^{-1}\|_{\infty}\, , \, \IncConTwo \big\}$.

Furthermore, as long as $n \geq \big(\matNorm{\TParam}_2 + 1\big)^{4} \big(3Rc + f(X^B)\sqrt{2|B|} \big)^2(\log p)$,
\begin{align*}
	\lambda = c\sqrt{\frac{\log p}{n}} \leq \frac{\big(\matNorm{\TParam}_2 + 1\big)^{-2} - f(X^B)\sqrt{\frac{2|B| \log p}{n}} }{3R} \, ,
\end{align*}
and therefore we have $\lambda \leq \frac{\RSCcon - f(X^B) \sqrt{\frac{|B|\log p}{h}}}{3R}$ where $c$ is defined as $4\max \big\{16 (\max_i \Sigma^*_{ii}) \sqrt{5 \tau} + \frac{2a\| \Sigma^*\|_{\infty}}{\sqrt{\log p}} \, , \, \sqrt{2} f(X^B) \big\}$, as stated.


\subsection{Proof of Corollary \ref{Cor3}}
In this proof, we simply need to specify the quantity $f(X^B)$ under the condition \ref{Con:2} and \ref{Con:3}, and then we can appeal to the result in Corollary \ref{Cor2}.  

Provided $|B| \geq \exp(1)$, ${p \choose |B|} \leq \big(\frac{\exp(1) p}{|B|}\big)^{|B|} \leq p^{|B|}$. As discussed in \citet{Vershynin11,Nguyen2011b}, if \ref{Con:3} holds, for every $t > 0$, we have
\begin{align*}
		\frac{1}{\sqrt{|B|}}\max_j \matNorm{X^B_{S_j}}_2 \leq \sqrt{\matNorm{\SigB}_2} \left(2 + t \right)  
\end{align*}
with probability at least $1 - 2{p \choose |B|} \exp (-t^2 |B|/2) \geq 1- 2\exp (-t^2|B|/2 + |B|\log p \big)$. Setting $t = 2\sqrt{\log p}$, we obtain
\begin{align*}
		\max_j \matNorm{X^B_{S_j}}_2 \leq 2(1+\sqrt{\log p}) \sqrt{\matNorm{\SigB}_2} \sqrt{|B|}
\end{align*}
with probability $1 - p^{-|B|}$. 
Therefore, under \ref{Con:2}, 
\begin{align*}
	& \Big(\max_j \matNorm{X^B_{S_j}}_2\Big)^2 \leq 4\big(1+\sqrt{\log p}\big)^2 \matNorm{\SigB}_2 |B| \leq 4 a \big(1+\sqrt{\log p}\big)^2 \matNorm{\SigB}_2  \sqrt{n} \nonumber\\
	= \ & \frac{4 a \big(1+\sqrt{\log p}\big)^2 \matNorm{\SigB}_2  \sqrt{n}}{\sqrt{h\log p}} \sqrt{h\log p} \leq \frac{4\sqrt{2} a \big(1+\sqrt{\log p}\big)^2 \matNorm{\SigB}_2}{\sqrt{\log p}} \sqrt{h\log p} \, ,
\end{align*} 
as specified in the statement.

\section{Proof of Proposition \ref{ThmOpt}}

Since we assume $\{\theta^{(t)}\}$ converges for any fixed $w$, $f(\theta^{(t)};w)$ monotonically decreases in $t$:
\begin{align*}
	f(\theta^{(t+1)};w) - f(\theta^{(t)};w) \leq 0 \, .
\end{align*} 
Setting $w = w^{(t)}$ above, we have 
\begin{align}\label{EqnThmOptTmp1}
	f(\theta^{(t+1)};w^{(t)})  \leq  f(\theta^{(t)};w^{(t)}) \, .
\end{align}
Since $w^{(t+1)}$ is computed to minimize $\min_w f(\theta^{(t+1)};w)$, it holds
\begin{align}\label{EqnThmOptTmp2}
	f(\theta^{(t+1)};w^{(t+1)}) \leq f(\theta^{(t+1)};w^{(t)}) \, .
\end{align}
By combining \eqref{EqnThmOptTmp1} and \eqref{EqnThmOptTmp2}, we obtain 
\begin{equation}
\label{eq:descent}
f(\theta^{(t+1)};w^{(t+1)}) \leq f(\theta^{(t)};w^{(t)})\quad \mbox{for all}\quad t,
\end{equation}
establishing monotonic decrease of function values. Since the domain of $F$ is compact, we know a limit point exists. 

Next, we can take each $w^{(t)}$ to be a vertex of the capped simplex, since the subproblem for $w$ is a linear program 
(indeed, our implementation only chooses vertex solutions $w^{(t)}$). 
Therefore, along a subsequence $t_k$ that converges to any limit point $(v, \overline \theta)$,
the weights $w^{t_k}$ converge to $v$ after finitely many steps (since all vertices are separated by 
some positive distance). Once $w^{(t_k)}$ have converged $v$, iterates in the extended 
framework are identical to those generated by Algorithm 
$\mathcal{A}$ for the associated data selection, and therefore $\overline\theta$ 
is a stationary point for the associated $M$-estimator.  
Then $(v,\overline\theta)$ is a stationary point for the overall problem. 

Suppose now that two limit points correspond to two different vertices $v_1$ and $v_2$.
Each vertex of the capped simplex corresponds to a selection of data points, which we call $\mathcal{D}_1$ and $\mathcal{D}_2$. Consider subsequences $t_{k_1}$ and $t_{k_2}$ which converge to $v_1$ and $v_2$, respectively. 
Along each subsequence, $w^{(t_{k_i})}$ converge to $v_i$ after finitely many steps as discussed above, 
and again the iterates of the extended algorithm are identical to those
generated by $\mathcal{A}$ for the M-estimators defined over $\mathcal{D}_1$
and $\mathcal{D}_2$.

In order to make a stronger statement, we need to make stronger assumptions. Suppose that 
\begin{enumerate}
\item the original M-estimator is convex, and 
\item the optimization problems over each vertex $v_k$ (corresponding to 
data selection $\mathcal{D}_k$) have different optimal values. 
\end{enumerate}
Then there exists an $\epsilon > 0$
so that without loss of generality, $f(\theta_1^*; v_1) + \epsilon \leq f(\theta_1^*; v_2)$. 
Now, since each problem is convex over its respective dataset, we can guarantee that after  $k\geq T$ steps 
of $\mathcal{A}$ along the subsequence $t_{k_1}$,  we have $f(\theta^{t_{k_1}}; v_1) < f(\theta_1^*; v_1) + \frac{\epsilon}{2} <f(\theta_1^*; v_2)$, 
and it is impossible for the algorithm to return to $v_2$ by the already established descent property~\eqref{eq:descent}. 
The number of iterations can be precisely quantified, see e.g.~\citet{nest_lect_intro}.

The contradiction ensures that the weights converge after finitely many steps to a single vertex $v$. Once the weights converge to $v$, we know that {\it all iterates} of the extended algorithm are identical to those of $\mathcal{A}$ for the convex problem defined over selection $\mathcal{D}$ associated to $v$, and the extended algorithm 
converges to a stationary point of the problem.

\end{document}